\documentclass[journal]{IEEEtran}

\usepackage[utf8]{inputenc}
\usepackage{bm}
\usepackage{amsmath}
\usepackage{amsthm}
\usepackage{parskip}
\usepackage{paralist}
\usepackage{graphicx}
\usepackage{subcaption}
\usepackage{hyperref}
\usepackage{xcolor}
\usepackage{amsfonts}
\usepackage{acronym}
\usepackage{tcolorbox}
\usepackage[normalem]{ulem} 
\usepackage[ruled]{algorithm2e}
\usepackage[numbers]{natbib}
\usepackage[capitalise]{cleveref}


\author{Max Spahn, Martijn Wisse, Javier Alonso Mora}

\title{\LARGE \bf
Dynamic Optimization Fabrics for Motion Generation
}

\author{Max Spahn$^1$, Martijn Wisse$^1$, Javier Alonso-Mora$^1$
\thanks{*This research was supported by Ahold Delhaize. All content represents the opinion of the author(s), which is not necessarily shared or endorsed by their respective employers and/or sponsors.}
\thanks{$^1$The authors are with the Department of Cognitive Robotics, Delft University of Technology, 2628 CD, Delft, The Netherlands
       {\tt\small \{m.spahn, m.wisse, j.alonsomora\}@tudelft.nl}}
}

\date{April 2021}

\graphicspath{{res/}}

\newif\iftrackchanges

\begin{document}
\trackchangesfalse

\newcommand\X{\ensuremath{\mathcal{X}}}
\newcommand\Xr{\ensuremath{\mathcal{X}_{\textrm{rel}}}}
\newcommand\Xj{\ensuremath{\mathcal{X}_j}}
\newcommand\Q{\ensuremath{\mathcal{Q}}}
\newcommand\Rn{\ensuremath{\mathbb{R}^n}}
\newcommand\Rmj{\ensuremath{\mathbb{R}^{m_j}}}
\renewcommand\l{\ensuremath{\mathcal{L}}}
\renewcommand\le{\ensuremath{\mathcal{L}_e}}
\newcommand\ld{\ensuremath{\mathcal{L}_d}}
\renewcommand\lg{\ensuremath{\mathcal{L}_g}}
\newcommand\he{\ensuremath{\mathcal{H}_e}}
\newcommand\hd{\ensuremath{\mathcal{H}_d}}
\newcommand\dt{\ensuremath{\Delta t}}
\newcommand\Me{\ensuremath{\mat{M}_{\le}}}
\newcommand\fe{\ensuremath{\vec{f}_{\le}}}
\newcommand\Pe{\ensuremath{\mat{P}_{\le}}}
\newcommand\M{\ensuremath{\mat{M}}}
\newcommand\Mnh{\ensuremath{\mat{M}_{\textrm{nh}}}}
\newcommand\I{\ensuremath{\mat{I}}}
\newcommand\f{\ensuremath{\vec{f}}}
\newcommand\fnh{\ensuremath{\vec{f}_{\textrm{nh}}}}
\newcommand\h{\ensuremath{\vec{h}}}
\newcommand\zerovec{\ensuremath{\vec{0}}}
\newcommand\Spec{\ensuremath{\mathcal{S}}}
\newcommand\htwo{\ensuremath{\vec{h}_2}}
\newcommand\Md{\ensuremath{\mat{M}_d}}
\newcommand\fd{\ensuremath{\vec{f}_d}}
\newcommand\Mde{\ensuremath{\mat{M}_{de}}}
\newcommand\fde{\ensuremath{\vec{f}_{de}}}
\newcommand\forc{\ensuremath{\vec{\psi}}}
\newcommand\spec{\ensuremath{\left(\M,\f\right)_{\X}}}
\newcommand\x{\ensuremath{\vec{x}}}
\newcommand\xdot{\ensuremath{\dot{\x}}}
\newcommand\xddot{\ensuremath{\ddot{\x}}}
\newcommand\xj{\ensuremath{\vec{x}_j}}
\newcommand\xjdot{\ensuremath{\dot{\xj}}}
\newcommand\xjddot{\ensuremath{\ddot{\xj}}}
\newcommand\xb{\ensuremath{\bar{\vec{x}}}}
\newcommand\xbdot{\ensuremath{\dot{\xb}}}
\newcommand\xbddot{\ensuremath{\ddot{\xb}}}
\newcommand\xt{\ensuremath{\vec{\tilde{x}}}}
\newcommand\xtdot{\ensuremath{\dot{\xt}}}
\newcommand\xtddot{\ensuremath{\ddot{\xt}}}
\newcommand\xr{\ensuremath{\x_{\textrm{rel}}}}
\newcommand\xrdot{\ensuremath{\dot{\x}_{\textrm{rel}}}}
\newcommand\xrddot{\ensuremath{\ddot{\x}_{\textrm{rel}}}}
\newcommand\q{\ensuremath{\vec{q}}}
\newcommand\qdot{\ensuremath{\dot{\q}}}
\newcommand\qddot{\ensuremath{\ddot{\q}}}
\newcommand\qt{\ensuremath{\vec{\tilde{q}}}}
\newcommand\qtdot{\ensuremath{\dot{\qt}}}
\newcommand\qtddot{\ensuremath{\ddot{\qt}}}

\newcommand\J{\ensuremath{\mat{J}_{\phi}}}
\newcommand\Jt{\ensuremath{\mat{J}^T_{\phi}}}
\newcommand\Jdot{\ensuremath{\dot{\mat{J}}_{\phi}}}
\newcommand\Jnh{\ensuremath{\mat{J}_{\textrm{nh}}}}
\newcommand\Jnht{\ensuremath{\mat{J}^T_{\textrm{nh}}}}
\newcommand\Jnhdot{\ensuremath{\dot{\mat{J}}_{\textrm{nh}}}}
\newcommand{\map}{\ensuremath{\phi}}
\newcommand{\mapt}{\ensuremath{\phi_t(\vec{q})}}
\newcommand{\mapd}{\ensuremath{\phi_d}}
\newcommand{\g}{\ensuremath{\vec{g}(t)}}
\newcommand{\gp}{\ensuremath{\vec{g}^{\prime}(t)}}
\newcommand{\gpp}{\ensuremath{\vec{g}^{\prime\prime}(t)}}
\newcommand{\pull}[2]{\textrm{pull}_{#1}{#2}}
\newcommand{\energize}[2]{\textrm{energize}_{#1}{#2}}
\newcommand{\der}[2]{\partial_{#1}#2}
\newcommand{\dertwo}[2]{\partial^2_{#1}#2}
\newcommand{\derf}[2]{\frac{\partial#2}{\partial#1}}
\newcommand{\derftwo}[3]{\frac{\partial^2#3}{\partial#1\partial#2}}
\newcommand{\dert}[1]{\frac{d}{dt}#1}
\newcommand{\pinv}[1]{#1^{\dagger}}
\newcommand{\norm}[1]{\left\lVert#1\right\rVert}
\newcommand\mat[1]{\ensuremath{\bm{#1}}}
\renewcommand\vec[1]{\ensuremath{\bm{#1}}}

\newtheoremstyle{spaced}
  {1em plus .2em minus .0em}
  {1em plus .2em minus .0em}
  {\itshape}
  {}
  {\bfseries}
  {.}
  {0.5em}
  {}

\newtheorem{definition}{Definition}[section]
\newtheorem{theorem}[definition]{Theorem}
\newtheorem{proposition}[definition]{Proposition}
\newtheorem{lemma}[definition]{Lemma}

\newcommand{\panda}{panda robot}

\definecolor{OliveGreen}{rgb}{0,0.6,0}
\newcommand{\MS}[1]{\textbf{{\color{OliveGreen}{{#1}}}}} 
\iftrackchanges \newcommand{\changed}[2][]{{\color{blue}{#2}}} \else \newcommand{\changed}[2][]{#2} \fi
\iftrackchanges \newcommand{\deleted}[2][]{} \else \newcommand{\deleted}[2][]{} \fi

\acrodef{mpc}[MPC]{Model Predictive Control}
\acrodef{sf}[SF]{Static Fabrics}
\acrodef{df}[DF]{Dynamic Fabrics}

\renewcommand{\quote}[1]{\textbf{Remark:} \textit{#1}}
\newcommand{\response}[1]{\textbf{Response:} #1}
\newcommand{\todo}[1]{{\color{red}{#1}}}
\newenvironment{fromtext}[1]
  {\textsc{#1}\begin{center}
    \begin{tabular}{|p{0.9\textwidth}|}
    \hline\\
    }
    { 
    \\\hline
    \end{tabular} 
    \end{center}
  }

\maketitle

\begin{abstract}

Optimization fabrics are a geometric approach to real-time local
motion generation, where motions are designed by the composition
of several differential equations that exhibit a desired motion behavior.
We generalize this framework to dynamic scenarios and non-holonomic robots
and prove that fundamental properties can be conserved. We show that convergence to
desired trajectories and avoidance of moving obstacles can be guaranteed using
simple construction rules of the components. Additionally, we present the
first quantitative comparisons between optimization fabrics and
model predictive control and show that optimization fabrics can generate
similar trajectories with better scalability, and thus, much
higher replanning frequency (up to 500 Hz with a 7 degrees of
freedom robotic arm). Finally, we present empirical results on
several robots, including a non-holonomic mobile manipulator with 10
degrees of freedom and avoidance of a moving human,
supporting the theoretical findings.

\changed{The open-source implementation can be found at
  \url{https://github.com/tud-amr/fabrics}
}
\end{abstract}

\section{Introduction}%
\label{sec:introduction}

Robots increasingly populate dynamic environments. 
Imagine a robot
operating alongside customers in a supermarket. It is requested to perform different
tasks, such as cleaning the floor or picking a wide range of products.
These different manipulation tasks may vary in their dimension
and accuracy requirements, e.g. rotation around a suction gripper
does not need to be specified while two-finger grippers require full poses. Thus,
it is important for motion planning algorithms to support various goal definitions.
Further, the robot is operating alongside humans, it has to constantly react to
the changing environment and consequently update an initial plan.
As customers move fast, the adaptations must be
computed in real time.
Therefore, motion planning is often divided into global motion
planning~\cite{Karaman2011} and local motion planning, which we will refer
to as motion generation in this paper. A global planner
generates a first feasible path that is used by a motion generator as global guidance.
This paper proposes a novel approach to motion generation, that deals
with a variety of different goal definitions. 

Motion generation is often solved by formulating an optimization problem 
over a time horizon. The popularity of this approach is partly thanks to
the guaranteed collision avoidance and thus safety~\cite{Hrovat2012,Hewing2020}.
The optimization problem is then assembled from a scalar objective
function, encoding the motion planning problem (e.g., the desired final position, path
constraints, etc.), the transition function, defining the robot’s
dynamics, and several inequality constraints, integrating physical limits and
obstacle avoidance. 
\begin{figure}[t]
  \centering
  \begin{subfigure}{0.5\linewidth}
    \centering
    \includegraphics[width=0.9\textwidth]{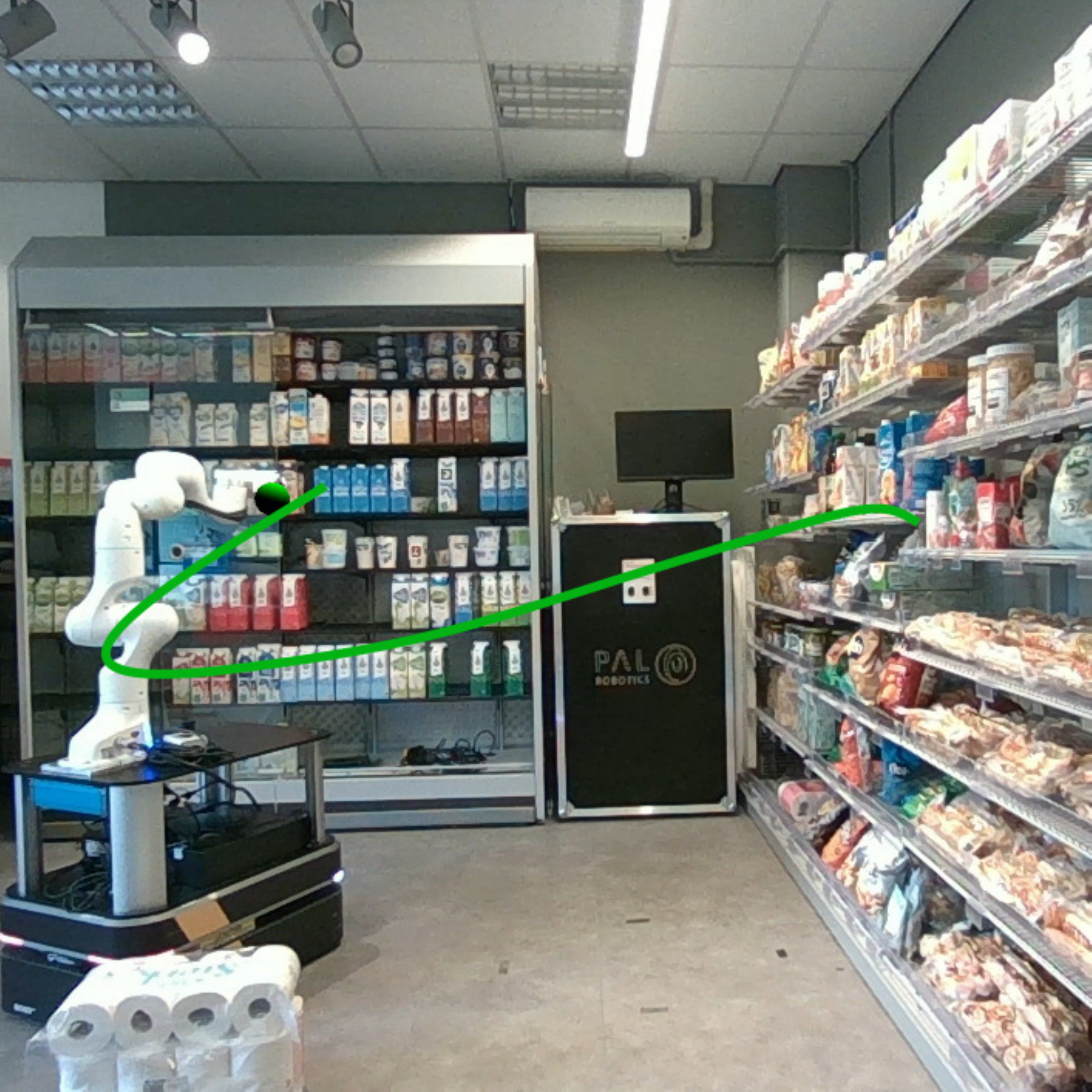}
    \caption{}
    \label{subfig:albert_spline_1}
  \end{subfigure}%
  \begin{subfigure}{0.5\linewidth}
    \centering
    \includegraphics[width=0.9\textwidth]{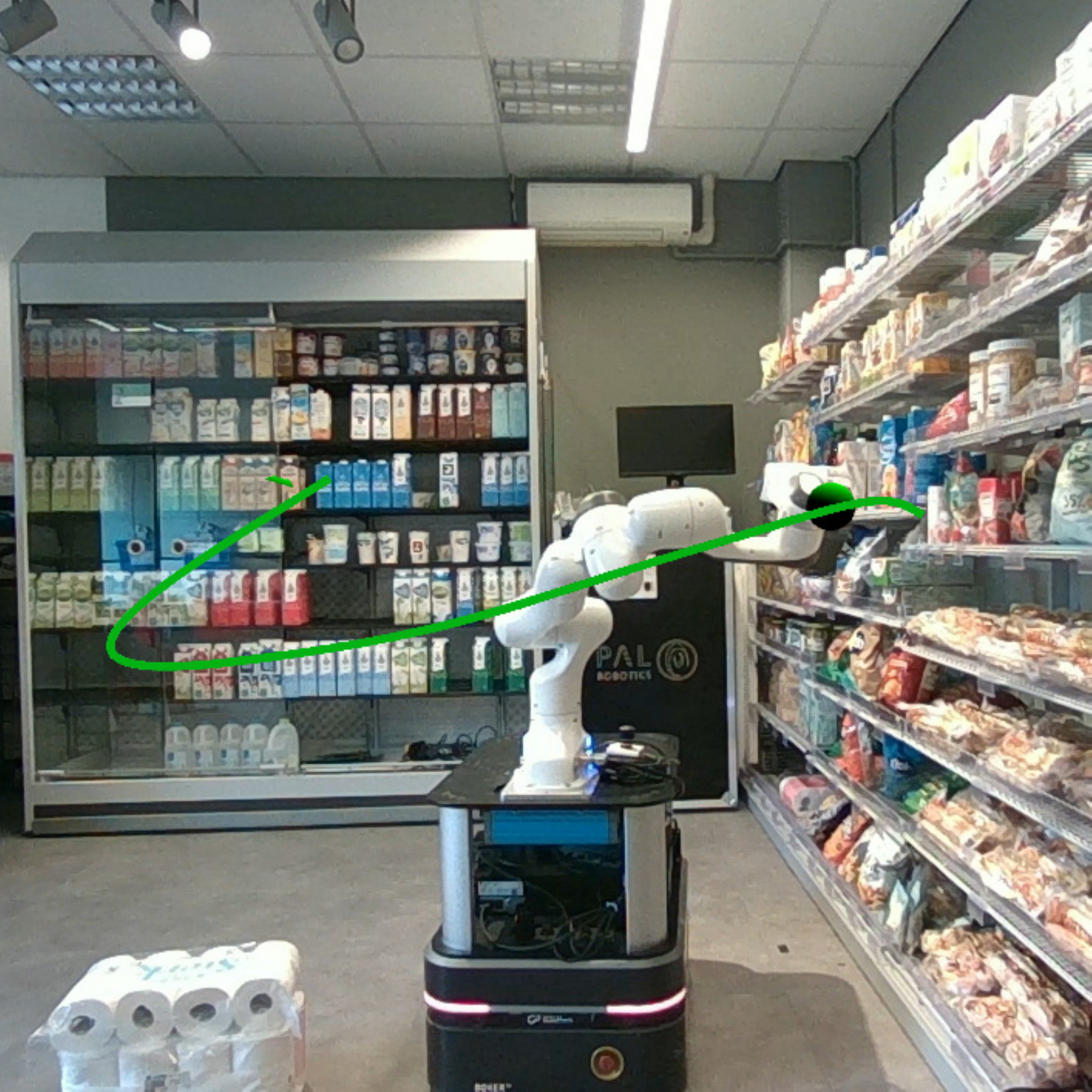}
    \caption{}
    \label{subfig:albert_spline_2}
  \end{subfigure}%
  \caption{
    Dynamic fabrics for path (green) following with a non-holonomic mobile
    manipulator. Dynamic fabrics control all actuators simultaneously to follow
    the end-effector path while keeping a given orientation and avoiding collision
    with the environment.
  }
  \label{fig:albert_spline_example}
\end{figure}
%
Despite abundant applications of such optimization-based approaches
to mobile robots, the computational costs limit applicability when dealing with
high-dimensional configuration spaces~\cite{Bednarczyk2020,Richter2010}.
Data-driven approaches to speed up the optimization process usually come with
reduced generalization abilities, loss of formal guarantees~\cite{Hewing2020}
and require prior, often
costly, data acquisition. Moreover, due to the scalar objective function, the
user must carefully weigh up different parts of the objective function. As
a consequence, optimization-based approaches are challenging to tune and
inflexible to generic motion planning problems with variable goal
objectives~\cite{Xie2020,Edwards2021}. 

In the field of geometric control, namely Riemannian
motion policies (RMP) and optimization fabrics, all individual parts of the motion planning
problem are formulated as differential equations of second order.
Applying operations from differential geometry, the individual
components are combined in the configuration space to define the
resulting motion~\cite{Cheng2018,Ratliff2020}. This allows to iteratively
\textit{design} the motion of the robot while maintaining explainability over the resulting
motion~\cite{Cheng2018,Ratliff2020,Xie2020,Wyk2022}.

These works on optimization fabrics~\cite{Ratliff2020}, but also on
predecessors, such as RMP~\cite{Cheng2018} and RMP-Flow~\cite{Cheng2020a},
have shown the power of designing reactive behavior as second-order
differential equations. However, integration of dynamic features, such as
moving obstacles and path following, have not been proposed
nor have the framework been applied to non-holonomic systems.
In this article, we exploit relative coordinate systems in the framework of
optimization fabrics by introducing the dynamic pullback operation (\cref{eq:dynamic_pull}).
This generalization 
can then integrate moving obstacles and path following.
We show that our generalization maintains
guaranteed convergence for path following tasks and improves collision
avoidance with moving obstacles.
Moreover, we propose a method
to incorporate non-holonomic constraints.
Lastly, we compare a trajectory optimization formulation, namely a model
predictive control formulation, with optimization fabrics to provide the reader
with a better understanding of key differences between the two approaches.
We analyze computational costs and the quality of resulting trajectories
for different robots.
Several simulated results and real-world experiments
show the practical implications of \ac{df}. The contributions of this paper can be
summarized as:


\begin{enumerate}
  \item We enable the usage of optimization fabrics for dynamic scenarios.
    Specifically, we propose time parameterized differential maps using up-to
    second-order predictor models. As a consequence, this enables the
    integration of moving obstacles and path following tasks.
  \item We extend the framework of optimization fabrics to non-holonomic robots.
  \item We present a quantitative comparison between model predictive control and
  optimization fabrics. The results reveal that fabrics are an order of magnitude faster,
  more reliable, and easier to tune for goal-reaching tasks with a robotic manipulator in
  static environments.
\end{enumerate}
All findings are supported by extensive experiments in both simulation and real-world
with a manipulator, a differential drive robot, and a mobile manipulator.

\section{Related Work}%
\label{sec:state}

In dynamic environments, global planning methods are not sufficient
due to low planning frequencies. Thus, local motion generation methods, like
the one presented in this work are employed. These methods typically
require guidance to avoid local minima and thus effectively solve planning problems.

\subsection{Task constrained global motion planning}%
\label{sub:state_global}

Motion planning problems are usually defined by goals in arbitrary task spaces,
such as the 3D Euclidean space or end-effector poses. In this
context, tasks can be regarded as constraints to the motion planning problem.
Conventional approaches to motion planning rely on inverse kinematics to
transform task constraints into sets of configurations. The resulting global
motion planning problem is then often solved using sampling-based methods
\cite{Rickert2014}.

Sampling-based motion planners generate random configurations
until a valid path between an initial
configuration and a set of goal configurations is found
\cite{Karaman2011}. Several methods have been proposed to directly
integrate task constraints into the sampling phase. 
\cite{Stilman2010} proposed a
method to iteratively push a random sample to the manifold adhering to the task constraint.
The notion of
task constraints was later extended to task space regions to define soft constraints for
individual task components~\cite{Berenson2011}.
\cite{Kingston2019} proposed scalar-valued functions to represent task
constraints for sampling-based planning.
As all of the above-mentioned methods rely on implicitly constrained sampling
in the joint space, they exhibit high computational time, which is especially
harmful to real-world applications \cite{Qureshi2020}
and require local motion generation methods for path following
and execution in dynamic environments
\cite{Brito2019}.
In the next
subsection, recent developments in local motion planning are summarized.

\subsection{Receding-horizon trajectory optimization}%
\label{sub:state_local}

Methods formulating motion generation as an optimization problem with a finite
discrete time horizon are known under the name of receding-horizon trajectory
optimization. In line with most literature in robotics, we will refer to such
methods as \ac{mpc}.
Generally, several objectives are encoded in the scalar cost function, dynamics
are formulated as equality constraints and inequality constraints ensure
collision avoidance and joint limit avoidance. The dynamics for this problem can
include the full dynamics model or simple integrating schemes \cite{Hewing2020}.
By explicitly solving the constrained optimization problem, this approach yields
formal guarantees on stability. Stability for \ac{mpc} is proven by formulating
an appropriate Lyapunov function and showing that the finite time-horizon
formulation with an appropriate terminal cost results in the same stability as
the corresponding infinite time-horizon formulation
\cite{l1,l4,keerthi1988optimal}. \ac{mpc} has been applied to various robotic
systems in dynamic environments, such as drones \cite{Tordesillas2019}, mobile
robots \cite{Brito2019}, and mobile manipulators
\cite{Avanzini2015,Avanzini2018}. Despite these results, formal stability
guarantees in such environments are challenging as appropriate terminal cost
functions are often not computable or too conservative. Besides, the
computational costs scale with the degrees of freedom restricting real-time
applicability to simple dynamics and environment models \cite{Spahn2021}.

Some \ac{mpc} formulations are non-linear and can be analyzed
using methods from non-linear control. When analyzing non-linear control
system, Riemannian energies lead to more detailed stability results than
Lyapunov functions. By investigating the variation around the generated
trajectory and its contracting towards the desired trajectory, some control
designs show exponential stabilizing properties \cite{l2}. These
findings have been applied to tracking control problems \cite{l3}.

\subsection{Riemannian motion policies and fabrics}%
\label{sub:riemannian_motion_policies_and_fabrics}

Based on the findings of contracting metrics for non-linear
control design \cite{l2,l3}, geometric control approaches design the motion
generation such that convergence is inherent to the problem formulation
rather than imposing them on the solution process. Practically, individual
constraints to the motion planning problem shape the optimization manifold
so that the solution is accessible through the solution of simple differential
equation. An example for shaping the optimization
manifold is seen in \cref{fig:spec_combination}.

\begin{figure}[h]
  \centering
  \begin{subfigure}{0.33\linewidth}
    \centering
    \includegraphics[width=0.9\textwidth]{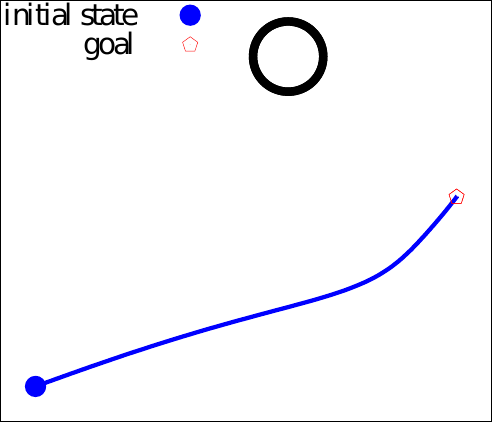}
    \caption{}
    \label{subfig:trajectory_obst1}
  \end{subfigure}%
  \begin{subfigure}{0.33\linewidth}
    \centering
    \includegraphics[width=0.9\textwidth]{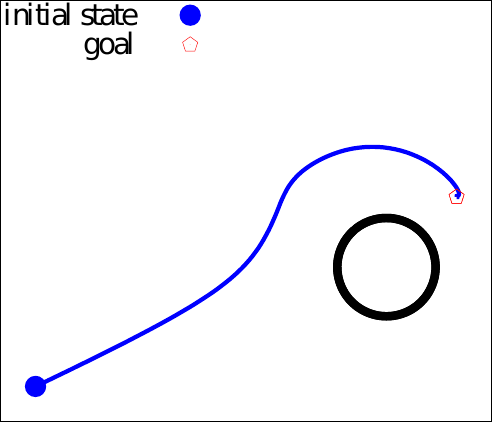}
    \caption{}
    \label{subfig:trajectory_obst2}
  \end{subfigure}%
  \begin{subfigure}{0.33\linewidth}
    \centering
    \includegraphics[width=0.9\textwidth]{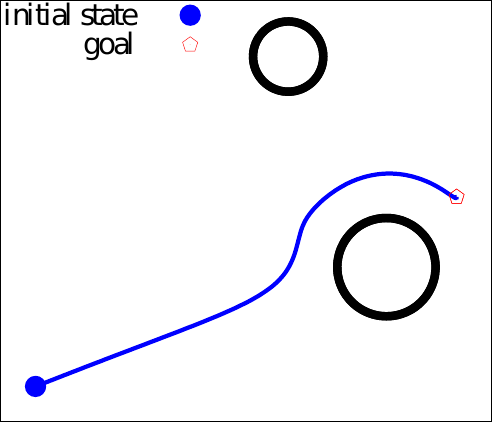}
    \caption{}
    \label{subfig:trajectory_both_obstacles}
  \end{subfigure}
  \caption{
    Combining different avoidance behaviors using optimization fabrics. The
    components defining collision avoidance with single obstacles (a,b) are
    combined in (c). Obstacles are shown in black. Trajectories of the point
    robot are shown in blue.
  }
  \label{fig:spec_combination}
\end{figure}
Realizing this concept, Riemannian motion policies (RMPs) represent a
natural way of combining multiple policies into one joined policy.
RMPs define individual sub-tasks of the motion planning as
differential equations (\textit{spectral semi sprays} or
\textit{specs} for short) of second order and propose the
\textit{pullback} and \textit{summation} operators to combine multiple policies
in the configuration space. As subtasks can be defined in arbitrary manifolds
of the configuration space, RMP generalize operational space
control~\cite{Khatib1987}. The resulting behavior of RMP was reported to be
intuitive while keeping computational costs low~\cite{Ratliff2018}. The concept
of RMP was used in~\cite{Cheng2018,Cheng2020a} to form RMP-Flow, a motion
planning algorithm that is shown to be conditionally stable and invariant
across robots. In RMP-Flow, individual tasks are represented as a pair of
a motion policy and a corresponding metric defining the importance of
individual directions. An RMP adaptation was proposed for non-holonomic robots
in~\cite{Meng2019}. By incorporating the kinematic constraint into the root
equation of the RMP, the computed policy is applicable to non-holonomic robots.
Besides, that work proposed a neural net to learn the collision avoidance task
components. 

Although RMPs have proven to be a powerful tool for
motion generation, it was reported to require intuition and experience
during tuning~\cite{Ratliff2020}. Optimization fabrics with Finsler structures
as metric generators simplify the motion design as the conditions for stability
and convergence are inherent to the definition of Finsler
structures~\cite{Ratliff2020,Ratliff2021}. Opposed to RMPs, where the metric is
typically user-defined, fabrics derive Finsler metrics from artificial
energies, similar to approaches from control design, \cite{l2,l3}, using the
Euler-Lagrange-Equation from geometric mechanics. Although fabrics generalize
the concept of RMPs and make it accessible to a broader audience by
decreasing the intuition and expertise required, they have not yet been applied
to a wide range of robots. 

The reason for this lack of application of fabrics is twofold. First, all the
above mentioned methods are reactive and highly local methods, thus making them
prone to local minima \cite{Bhardwaj2021}. As RMPs and optimization
fabrics do not incorporate path following, integration of global
planning to overcome local minima is not possible to this date. Second, fabrics
and RMP do not make use of velocity estimates of obstacles but rely purely on
their high reactivity in dynamic environments. As for other trajectory
optimization techniques, motion estimates could benefit fabrics (and RMP) to
result in even smoother motion and allow applications in such environments. 

In this paper, we address these issues by proposing time parameterized
differential maps to form \changed{\acl{df}}. This generalization integrates
path following and velocity estimates of moving
obstacles. Together with the extension to non-holonomic robots, our method
allows to deploy the promising theory of optimization fabrics to mobile
manipulators, operating in dynamic environments.

\section{Background}%
\label{sec:mathematics}

In the previous section, we have highlighted that optimization fabrics represent a
powerful tool for reactive motion generation. Since we generalize this concept, this
section aims at familiarizing the reader with key findings on optimization fabrics and
recalling some of the basic notations known from differential geometry. 
We first give an overview on how optimization fabrics are used for motion generation
and how the components are derived and composed. Then the theoretical foundations
are summarized from \cite{Cheng2020,Ratliff2020}.

\subsection{Motion generation using optimization fabrics}%
\label{sub:trajectory_generation_using_optimization_fabrics}

When using optimization fabrics for motion generation, all components including constraints and 
goal attraction are designed as second order differential equations. If specific design
rules for these equations are respected, all components can be combined to 
form a converging motion generator. Specifically, the following steps are performed:
\begin{enumerate}
  \item Design path-consistent geometries in suited manifolds of the configuration space
    (\cref{eq:homogeneous}).
  \item Design corresponding Finsler energies defining the importance metric in this manifold
    (\cref{sub:conservative_fabrics_and_energization}).
  \item Energize all geometries with the associated Finsler energies 
    (\cref{sub:conservative_fabrics_and_energization}).
  \item Pull back the energized systems into the configuration space and sum them 
    (\cref{sub:operations_on_specs}).
  \item Force the combined system with a differentiable potential. As a composition of optimization fabrics, 
    the resulting trajectory converges towards the potential's minimum
    (\cref{sub:optimization_fabrics}).
\end{enumerate}
In the following, we introduce the reader to the theory of optimization fabrics
and recall important findings from \cite{Ratliff2020}.

\subsection{Configurations and task variables}%
\label{sub:configurations_and_task_variables}

We denote $\q\in\Q\subset\Rn$ a
configuration of the robot with $n$ its degrees of freedom; \Q{} is the configuration space of the generalized coordinates
of the system. Generally, $\q(t)$ defines the robot's configuration at time $t$, so that 
\qdot, \qddot{} define the instantaneous derivatives of the robot's configuration.
Similarly, we assume
that there is a set of task variables $\xj\in\Xj\subset\Rmj$ with variable dimension
$m_j \leq n$. The task manifold \Xj{} defines an arbitrary manifold of the configuration
space \Q{} in which a robotic task can be represented. 
Further, we assume that there is a differential map
$\map_j:\Rn\rightarrow\Rmj$ that relates the configuration space to the $j^{th}$ task
space. For example, when a task variable is defined as the end-effector position, then
$\map_j$ is the positional part of the forward kinematics. On the other hand, if a task
variable is defined to be the joint position, then $\map_j$ is the identity function. 
In the following, we drop the subscript $j$ in most cases for readability when the
context is clear.

In this work, we assume that \map{} is smooth and twice differentiable so that the Jacobian is
defined as
\begin{equation}
  \J = \derf{\q}{\map} \in \mathcal{R}^{m\times n}, 
\end{equation}
or $\J = \der{\q}{\map}$ for short.
Thus, we can write the total time derivatives of \x{} as
\begin{align}
  \xdot &= \J\qdot \\
  \xddot &= \J\qddot + \Jdot\qdot.
\end{align}

\subsection{Spectral semi-sprays}%
\label{sub:spectral_semi_sprays}

Inspired by simple mechanics (e.g., the simple pendulum), the framework of optimization
fabrics designs motion generation as second-order dynamical
systems $\xddot = \pi(\x,\xdot)$~\cite{Cheng2020,Ratliff2020}. While higher-order systems seem feasible, their
implementation on robots is much more challenging, as higher order configuration space
derivatives would be required. The trajectory generator is defined by the differential equation
$\M\xddot + \f = 0$, where $\M(\x,\xdot)$ and $\f(\x,\xdot)$ are functions of position and
velocity. Besides, \M{} is symmetric and invertible. Such systems $\Spec = \spec$ are
known as \textit{spectral semi-sprays}, or \textit{specs} for short.  When the space of
the task variable is clear from the context, we drop the subscript.  Then, the trajectory
is computed as the solution to the system $\xddot=-\M^{-1}\f$.

\subsection{Operations on specs}%
\label{sub:operations_on_specs}
Next, the two fundamental operations for specs, transformation between spaces and
summation, are introduced.

Given a differential map $\map: \Q\rightarrow\X$ and a spec \spec{}, the \textit{pullback}
is defined as 
\begin{equation}
  \pull{\map}{\spec} = {\left(\Jt\M\J, \Jt(\f+\M\Jdot\qdot)\right)}_{Q}.
\end{equation}
The pullback allows converting between two distinct manifolds (e.g. a spec could be 
defined in the robot's workspace and being pulled into the robot's configuration space using
the pullback with \map{} being the forward kinematics).

For two specs, $\Spec_1 = {\left(\M_1,\f_1\right)}_{\X}$ and 
$\Spec_2 = {\left(\M_2,\f_2\right)}_{\X}$, their \textit{summation} is defined by:
\begin{equation}
  \Spec_1 + \Spec_2 = {\left(\M_1 + \M_2, \f_1 + \f_2\right)}_{\X}.
\end{equation}

\subsection{Optimization fabrics}%
\label{sub:optimization_fabrics}

Optimization fabrics form a special class of specs, and thus they inherit their properties,
specifically the previously defined operations of \textit{summation} and \textit{pullback}.
First, let us introduce a finite and differentiable potential
function $\forc(\x)$ defined in a task manifold \X{}. 
Then, the modified spec $\Spec_{\forc} = \left(\M,\f + \der{\x}{\forc}\right)$
is called the \textit{forced variant} of $\Spec = \spec$.
Only if the trajectory $\x(t)$ generated by the forced spec converges to the minimum of \forc{}, 
the spec is said to form an \textit{optimization fabric}.
When the spec only converges to the minimum when equipped with a damping term,
$\left(\M,\f + \der{\x}{\forc} + \mat{B}\xdot\right)$, 
it forms a \textit{frictionless fabric}~\cite[Definition 4.4]{Ratliff2020}. 
Note that the mechanical system of a pendulum forms a frictionless fabric, as it optimizes
the potential function defined by gravity when being damped (i.e., it eventually comes to
rest at the configuration with minimal potential energy)

In the following, methods to construct optimization fabrics, or \textit{fabrics} for
short, are summarized: the definitions of conservative fabrics and energization are
introduced.

\subsection{Conservative fabrics and energization}%
\label{sub:conservative_fabrics_and_energization}

While the previous subsection defined what criteria are required for a spec to form
an optimization fabric, the theory on conservative fabrics and energization 
offers a simple way of generating such special specs. As a full summary of the theory
on optimization fabrics and their construction is out of scope here, this
subsection only provides an outline of the theory and the reader is referred to 
\cite{Ratliff2021,Wyk2022} for detailed derivations.

In the context of fabrics, the term \textit{energy} describes a scalar quantity that
changes as the system evolves over time.  Although this quantity has a physical meaning in
natural systems (e.g., kinetic energy), it can be arbitrarily defined for motion generation.
Generally, specs and optimization fabrics do not conserve an energy, but when they do, we
call them \textit{conservative specs}.  A stationary Lagrangian~\cite[Definition
4.11]{Ratliff2020} is one definition for an energy for which the corresponding spec, known
as the Lagrangian spec $\Spec_{\le} = \left(\Me,\fe\right)$, is obtained by applying the
Euler-Lagrange equations. Importantly, Lagrangian specs conserve energy and do thus belong
to the class of \textit{conservative specs}.  It was proven that an unbiased
(\cite[Definition~4.11]{Ratliff2020}) Lagrangian spec forms a frictionless
fabric~\cite[Proposition~4.18]{Ratliff2020}. Such fabrics are analogously called
\textit{conservative fabrics}.  There are two classes of conservative fabrics: Lagrangian
fabrics (i.e., the defining energy is a Lagrangian) and the more specific subclass of
Finsler fabrics (i.e., the
defining energy is a Finsler structure~\cite[Definition 5.4]{Ratliff2020}). 

The operation of \textit{energization} transforms a given differential equation into
a conservative spec.
Specifically, given an unbiased energy Lagrangian \le{} with boundary conforming
\Me{}~\cite[Definition~4.6]{Ratliff2020} and
lower bounded energy \he{}, an unbiased spec of form $\Spec_{\vec{h}} = (\mat{I},\vec{h})$
is transformed into a frictionless fabric using energization as
\begin{equation}
  \begin{split}
  S_{\vec{h}}^{\le} &= \text{energize}_{\le}\{S_{\vec{h}}\} \\
    &= (\Me, \fe + \Pe[\Me\vec{h} - \fe]), 
  \end{split}
\end{equation}
where $\Pe = \Me\left(\Me^{-1} - \frac{\xdot\xdot^T}{\xdot^T\Me\xdot}\right)$ is an
orthogonal projector.
Energized specs maintain the energy of the Lagrangian and generally change
the trajectory of the underlying spec $\Spec_{\vec{h}}$.
However, if 
\begin{enumerate}
  \item $\Spec_{\vec{h}} = (\mat{I},\vec{h})$ is homogeneous of degree 2,
    \begin{equation}\vec{h}(\x, \alpha\xdot) = \alpha^2\vec{h}(\x, \xdot)\label{eq:homogeneous}\end{equation}
    and
  \item the energizing Lagrangian is a Finsler structure, 
\end{enumerate}
the resulting energized spec forms a frictionless fabric for which the trajectory matches
the original trajectory of $\Spec_{\vec{h}}$. We refer to energized fabrics with that
property as \textit{geometric fabrics}. Geometric fabrics form the building blocks for
motion generation with optimization fabrics.
Practically, energization equips the individual components of the planning problem
with a metric when being combined with other components.

\subsection{Experimental results fabrics}%
\label{sub:experimental_results_fabrics}

The theory explained above was tested on several simple kinematic chains
in~\cite{Ratliff2020,Ratliff2021}. As fabrics design motion as a summation of several
differential equations, each representing a specific constraint to the motion, it is
possible to sequentially design motion~\cite{Ratliff2020}. This procedure allows to
carefully tune individual components without harming the others. The application to a
planar arm in a goal-reaching setup was successfully tested in~\cite{Ratliff2020}. Here,
the authors illustrated how the resulting motion can be modified arbitrarily by the user
by adding additional constraints or preferences.

Although important concepts and findings on optimization fabrics were summarized in this
section, we refer to~\cite{Ratliff2020} for a more in-depth presentation of optimization
fabrics. In the following, we generalize the framework of optimization fabrics to dynamic settings.

\section{Derivation of dynamic fabrics}%
\label{sec:methods}

We extend the framework of optimization fabrics to \acf{df}.
including
dynamic environments and path following tasks. We prove that \ac{df}
converge to moving goals and can be combined with previous approaches in geometric
control.
This section first introduces the notion of reference trajectory, dynamic Lagrangians and
the dynamic pullback. These notations allow then to formulate \ac{df}. 
As \ac{df} generalize the concept of optimization fabrics to dynamic scenarios, we refer 
to the non-dynamic fabrics as \acf{sf} to explicitly distinguish between the work
presented in \cite{Ratliff2020} and our work.

\subsection{Motion design using dynamic fabrics}%
\label{sub:motion_design_using_dynamic_fabrics}

The method explained in this paper generalizes the concept of \ac{sf}
from \cite{Ratliff2020} and can then be extended from the procedure outlined in \cref{sub:trajectory_generation_using_optimization_fabrics}.

\begin{enumerate}
  \item Design path-consistent geometries in a suited, \textbf{time-parameterized} (\cref{def:dynamic_map}) manifold of the configuration
  \item Design corresponding Finsler energies defining the importance metric in this manifold.
  \item \changed{Energize all geometries with the associated Finsler energies.}
  \item \textbf{If necessary, pull back the energized system from the time-parameterized manifold into
    the corresponding fixed manifold (\cref{eq:dynamic_pull})}.
  \item Pull back the energized system into the configuration space and combine it with
    all components using summation.
  \item Force the system with a \textbf{time-parameterized} potential. As a composition of \ac{df}, 
    the resulting trajectory converges towards the potential's minimum (\cref{lem:dynamic_lagrangian_fabrics}).
\end{enumerate}
In the following, we explain our proposed changes to the framework of \ac{sf} so that it remains
valid in dynamic environments.

\subsection{Reference trajectories}%
\label{sub:reference_trajectories}

To enable the definition of dynamic convergence and dynamic energy we introduce
a reference trajectory that remains inside a domain \X{} as \textit{boundary conforming}.
This term is chosen in accordance to \cite[Definition 4.6]{Ratliff2020}.
\begin{definition}
  A reference trajectory $\xt(t)$, with its corresponding time derivatives \xtdot{} and
  \xtddot{}, is boundary conforming on the manifold \X{} if $\xt(t) \in \X, \forall t$.%
  \label{def:refTraj}
\end{definition}

In the following, the reference trajectory will be used to define 
dynamic Lagrangians and dynamic fabrics. In this context, the word `dynamic' can often
be read as `relative to the reference trajectory'.
With the notion of reference trajectories we formulate a mapping to the relative
coordinate system.
\begin{definition}
  Given a reference trajectory \xt{} on \X{}, the dynamic mapping
  $\mapd : \X\times\X \to \Xr$ represents the relative coordinate system
  $\xr = \x - \xt$. 
  \label{def:dynamic_map}
\end{definition}

\begin{figure}[h]
  \centering
  \begin{subfigure}{0.5\linewidth}
    \centering
    \includegraphics[height=\textwidth]{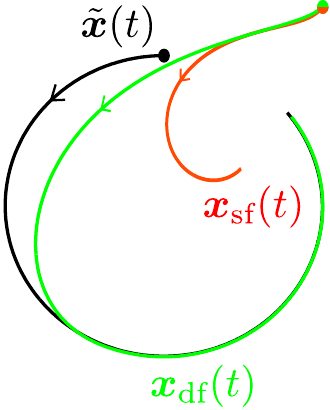}
    \caption{Dynamic convergence}%
    \label{subfig:reference_trajectory_1}
  \end{subfigure}%
  \begin{subfigure}{0.5\linewidth}
    \centering
    \includegraphics[height=\textwidth]{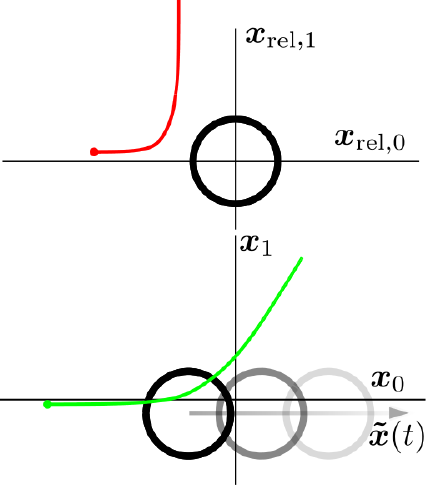}
    \caption{Dynamic Avoidance}%
    \label{subfig:reference_trajectory_2}
  \end{subfigure}
  \caption{The two implications of \acl{df}. 
    In (a), it can be seen that the trajectory obtained with \ac{df} (green)
    converges towards the reference trajectory (black) while the trajectory
    with \acl{sf} (red) does not converge. 
    In (b), the top part
    visualizes collision avoidance as suggested in~\cite{Ratliff2020}. Here,
    the trajectory and obstacle are expressed
    in a relative system $\xr$. Using the dynamic pull, \cref{eq:dynamic_pull}, this
    can be transformed into the static reference frame $\x$, bottom part. Together
    with dynamic energization, the framework of optimization fabrics is leveraged
    for dynamic environments.
    The motion of the obstacle, $\xr(t)$ is visualized with an arrow, future positions
    of the obstacle are shown in lighter color. The resulting trajectory obtained with
    \ac{df} is shown in green.
  }%
  \label{fig:reference_trajectory}
\end{figure}

\subsection{Dynamic pullback}%
\label{sub:dynamic_pullback}

The theory of optimization fabrics also applies to relative coordinates \xr{}, specifically,
specs and potentials can be formulated in moving coordinates. However, 
there is no theory to combine specs defined in relative coordinates with specs in fixed
coordinates.
In most cases, individual components of the
behavior design are not formulated in the same relative coordinates. Specificially, the
configuration space is always static, so we introduce a transformation of a relative spec
into the static space \X{}. We call this operation \textit{dynamic pullback}.
\begin{equation}
  \pull{\mapd}{{(\Md,\fd)}_{\Xr}} = {(\Md, \fd - \Md\xtddot)}_{\X}
  \label{eq:dynamic_pull}
\end{equation}
Two specs $\Spec_{\X_{\textrm{rel,1}}}$ and $\Spec_{\X_{\textrm{rel,2}}}$ defined
in two different relative coordinate systems are then combined by
first applying the dynamic pullback to both individually and then applying the summation
operation for specs. The dynamic pullback is the natural extension to optimization fabrics
for relative coordinate systems. It cannot directly be
integrated into the framework of optimization
fabrics as it breaks the algebra. In the following, we derive several
generalizations so that the theory remains 
valid even in the presence of reference trajectories for individual components, such as moving
obstacles or reference trajectories.

\subsection{Dynamic Lagrangians}%
\label{sub:dynamic_lagrangians}

Next, we show that energy conservation commutes with the dynamic pullback. This allows us to
transfer findings on conservative fabrics to dynamic fabrics.
We call a Lagrangian that is defined using relative coordinates
a \textit{dynamic Lagrangian} and write $\ld(\xr, \xrdot)$. 
In this relative coordinate system, the dynamic
Lagrangian has the same properties as the Lagrangian defined in~\cite{Ratliff2020},
specifically it induces the Lagrangian spec through the Euler-Lagrange equation, 
$\dertwo{\xrdot\xrdot}{\ld}\xrddot + \dertwo{\xrdot\xr}{\ld}\xrdot - \der{\xr}{\ld}$, 
as $(\Mde,\fde)$. The system's Hamiltonian $\hd = \der{\xrdot}{\ld}^T\xrdot - \ld$ is
conserved by the equation of motion as proven in~\cite{Ratliff2020}.

Applying the dynamic pullback to the dynamic Lagrangian we obtain the transformed
Lagrangian $\ld(\x, \xdot, \xt, \xdot)$ in the static coordinate system.

\begin{theorem}
  Let $\ld(\xr,\xrdot)$ be a dynamic Lagrangian and let \mapd{} be
  the dynamic mapping to \xr{}. Then, the application of the Euler-Lagrange equation
  commutes with the dynamic pullback.
  \label{the:dynamic_euler_lagrange}
\end{theorem}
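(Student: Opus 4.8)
The plan is to show that the two composite maps agree on \X{}: path one applies the Euler--Lagrange equation to \ld{} in the relative coordinates \xr{} and then pushes the resulting spec forward with the dynamic pullback \cref{eq:dynamic_pull}; path two first pulls the Lagrangian back to the static frame --- which for \mapd{} amounts to the substitution $\xr = \x - \xt$, $\xrdot = \xdot - \xtdot$, giving $\ld(\x,\xdot,\xt,\xtdot)$ --- and then applies the Euler--Lagrange equation directly in \x{}. This mirrors the static commutation of the ordinary pullback with Lagrangian-spec formation in \cite{Ratliff2020}, with the correction term $-\Mde\xtddot$ here playing the role that $\M\Jdot\qdot$ plays there. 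Because the dynamic pullback leaves the metric untouched and only shifts the forcing term, it suffices to track metric and forcing separately along each path.

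Along path one, the Euler--Lagrange equation gives the Lagrangian spec $(\Mde,\fde)_{\Xr}$ with
\begin{equation}
  \Mde = \dertwo{\xrdot\xrdot}{\ld}, \qquad \fde = \dertwo{\xrdot\xr}{\ld}\xrdot - \der{\xr}{\ld},
\end{equation}
and feeding this into \cref{eq:dynamic_pull} produces $(\Mde,\,\fde - \Mde\xtddot)_{\X}$.

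Along path two, I would treat \xt{} as a prescribed function of time, so that $\der{\x}{\xr} = \mat{I}$ and $\der{\xdot}{\xrdot} = \mat{I}$; hence $\der{\xdot}{\ld} = \der{\xrdot}{\ld}$ and $\der{\x}{\ld} = \der{\xr}{\ld}$, while \xt{} enters only through the total time derivative. Expanding the Euler--Lagrange operator $\dert{\der{\xdot}{\ld}} - \der{\x}{\ld}$ by the chain rule yields $\dertwo{\xrdot\xrdot}{\ld}\xrddot + \dertwo{\xrdot\xr}{\ld}\xrdot - \der{\xr}{\ld}$. The decisive step is the kinematic identity $\xrddot = \xddot - \xtddot$: substituting it splits the leading term into $\Mde\xddot - \Mde\xtddot$, and rewriting the equation in the canonical form $\Mde\xddot + \f = 0$ recovers exactly the static spec $(\Mde,\,\fde - \Mde\xtddot)_{\X}$ of path one.

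I expect the main obstacle to be the bookkeeping between partial and total differentiation: one must insist that \xt{} is independent of \x{} and \xdot{} when forming the partials --- which is what keeps the metric \Mde{} identical along both paths and prevents spurious terms --- yet fully time-dependent inside $\dert{\cdot}$, which is precisely what generates the $-\Mde\xtddot$ correction. Once it is confirmed that the entire effect of the moving frame collapses into this single forcing shift, the commutation follows, and the conservation of the dynamic Hamiltonian \hd{} established for \ld{} in \xr{} transfers unchanged to the static frame.
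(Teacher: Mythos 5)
Your proposal is correct and takes essentially the same route as the paper: both proofs verify commutation by direct calculation, showing that the Euler--Lagrange equation applied to the pulled Lagrangian $\ld(\x,\xdot,\xt,\xtdot)$ reduces via the chain rule (with $\der{\x}{\xr} = \der{\xdot}{\xrdot} = \mat{I}$) to $\Mde\xrddot + \fde = 0$, and then using the kinematic identity $\xrddot = \xddot - \xtddot$ to recover exactly the dynamically pulled spec $(\Mde, \fde - \Mde\xtddot)_{\X}$. The only difference is bookkeeping: the paper expands the generalized Euler--Lagrange operator explicitly in the four variables $(\x,\xdot,\xt,\xtdot)$ before converting each partial back to relative coordinates, whereas you work in relative coordinates throughout and substitute the kinematic identity at the end --- a more compact presentation of the same argument.
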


\begin{proof}
  We will show the equivalence by calculation. As shown above, the induced spec is defined
in the relative system as $(\Mde, \fde)$. It can be dynamically pulled to form
\begin{equation}
  \pull{\mapd}{{(\Mde,\fde)}_{\Xr}} = {(\Mde, \fde - \Mde\xtddot)}_{\X}
  \label{eq:proof_theorem_euler_commutes}
\end{equation}
We can dynamically pull the Lagrangian $\ld(\xr,\xrdot)$
to form $\ld(\x,\xdot,\xt,\xtdot)$, where only the first two variables are system
variables. Using the generalized Euler-Lagrange equation, the equations of motion of the
pulled Lagrangian are obtained as
\begin{align*}
  0 &= \dert{\derf{\xdot}{\ld}}
      - \derf{\x}{\ld} \\
    &=  \derftwo{\xdot}{\xdot}{\ld}\xddot
      + \derftwo{\xdot}{\x}{\ld}\xdot
      + \derftwo{\xdot}{\xtdot}{\ld}\xtddot
      + \derftwo{\xdot}{\xt}{\ld}\xtdot
      - \derf{\x}{\ld} \\
    &=  \derftwo{\xrdot}{\xrdot}{\ld}\derf{\xdot}{\xrdot}\derf{\xdot}{\xrdot}\xddot 
      + \derftwo{\xrdot}{\xr}{\ld}\derf{\xdot}{\xrdot}\derf{\x}{\xr}\xdot \\
      &+ \derftwo{\xrdot}{\xrdot}{\ld}\derf{\xdot}{\xrdot}\derf{\xtdot}{\xrdot}\xtddot
      + \derftwo{\xrdot}{\xr}{\ld}\derf{\xdot}{\xrdot}\derf{\xt}{\xr}\xtdot \\
      &- \derf{\xr}{\ld}\derf{\x}{\xr} \\
    &=  \derftwo{\xrdot}{\xrdot}{\ld}\xddot
      + \derftwo{\xrdot}{\xr}{\ld}\xdot
      - \derftwo{\xrdot}{\xrdot}{\ld}\xtddot \\
      &- \derftwo{\xrdot}{\xr}{\ld}\xtdot
      - \derf{\xr}{\ld} \\
    &=  \derftwo{\xrdot}{\xrdot}{\ld}(\xddot - \xtddot)
      + \derftwo{\xrdot}{\xr}{\ld}(\xdot - \xtdot)
      - \derf{\xr}{\ld} \\
    &=  \derftwo{\xrdot}{\xrdot}{\ld}(\xddot - \xtddot)
      + \derftwo{\xrdot}{\xr}{\ld}(\xrdot)
      - \der{\xr}{\ld} \\
    &= \Mde \xddot + \fde - \Mde\xtddot
\end{align*}
The obtained equations of motion match the one obtained by applying the dynamic pullback,
see \cref{eq:proof_theorem_euler_commutes}.
\end{proof}
Hence, independently of the coordinates, the system conserves the energy \hd{} computed with the
Hamiltonion in relative coordinates.
Next, we adapt the operation of energization to dynamic Lagrangians. 
Dynamic Lagrangians are a necessary step to allow
for collision avoidance with dynamic obstacles in the framework of
optimization fabrics.
Specifically, the metric for a moving obstacle is computed
using the Euler-Lagrange equation in the relative coordinate system. Importantly, 
in this system, the same energies as with \ac{sf} can be employed. Using the dynamic
pullback, the energy defining the metric for the moving obstacle is then maintained
according to \cref{the:dynamic_euler_lagrange}. Concretely, this means that
collision avoidance can be achieved in a similar manner as with \ac{sf} with the added
advantage of integrated motion estimates of obstacles.

\begin{proposition}[Dynamic Energization]
  Let ${\xddot + \vec{h}(\x,\xdot) = \vec{0}}$ be a differential equation and suppose \ld{} is a
dynamic Lagrangian with the induced spec $(\Mde,\fde)$ and dynamic energy \hd{}. Then the
dynamically energized system~{$\xddot + \vec{h}(\x,\xdot) + \alpha_{\hd}\xrdot = \vec{0}$} with 
\[
  \alpha_{\hd} = -{(\xrdot^T\Mde\xrdot)}^{-1}\xrdot^T(\Mde(\vec{h}+\xtddot)-\fde)
\]
conserves the dynamic energy \hd{}.%
\label{prop:dynamic_energization}
\end{proposition}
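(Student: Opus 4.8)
The plan is to prove conservation by showing directly that $\dert{\hd} = 0$ along the trajectory generated by the energized system. The argument mirrors the static energization proof, but the energy bookkeeping is carried out in the relative coordinate system, where $\hd$ is naturally defined, and only then transferred to the static frame by invoking \cref{the:dynamic_euler_lagrange}. The structure is: (i) derive a rate-of-change formula for $\hd$; (ii) substitute the energized dynamics; (iii) insert the prescribed $\alpha_{\hd}$ and observe that everything cancels.

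First I would establish the rate-of-change formula. Starting from the Hamiltonian $\hd = \der{\xrdot}{\ld}^T\xrdot - \ld$ and differentiating in time, the two terms proportional to $\xrddot$ cancel, leaving
\[
  \dert{\hd} = \xrdot^T\left(\dert{\der{\xrdot}{\ld}} - \der{\xr}{\ld}\right).
\]
The bracketed expression is exactly the Euler-Lagrange operator of $\ld$, which by the definition of the induced Lagrangian spec equals $\Mde\xrddot + \fde$. Hence $\dert{\hd} = \xrdot^T(\Mde\xrddot + \fde)$, the dynamic analog of the static identity $\dert{\he} = \xdot^T(\Me\xddot + \fe)$. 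This shows $\hd$ changes only through the component of the net relative force along $\xrdot$.

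Next I would substitute the energized dynamics. Solving $\xddot + \h + \alpha_{\hd}\xrdot = \zerovec$ and using $\xrddot = \xddot - \xtddot$ gives $\xrddot = -\h - \xtddot - \alpha_{\hd}\xrdot$, so that
\[
  \dert{\hd} = \xrdot^T\left(\fde - \Mde(\h + \xtddot) - \alpha_{\hd}\Mde\xrdot\right).
\]
It then remains to insert $\alpha_{\hd} = -(\xrdot^T\Mde\xrdot)^{-1}\xrdot^T(\Mde(\h+\xtddot)-\fde)$. Because $\alpha_{\hd}$ is scalar, the quadratic term $\alpha_{\hd}\,\xrdot^T\Mde\xrdot$ reduces to exactly $-\xrdot^T(\Mde(\h+\xtddot)-\fde)$, which cancels the remaining two terms and leaves $\dert{\hd} = 0$. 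This final step is pure algebra.

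The main obstacle is the first step rather than the concluding cancellation. Care is required because $\hd$ is a function of the relative variables, whereas the dynamics, and in particular the energization force $\alpha_{\hd}\xrdot$, are written in the static frame with a time-varying reference $\xt(t)$. The key point is that \cref{the:dynamic_euler_lagrange} guarantees that the Euler-Lagrange operator of the pulled Lagrangian in the static frame coincides with $\Mde\xrddot + \fde$, so the energy accounting may be done consistently in relative coordinates; one must also keep the distinction between $\der{\xdot}{\cdot}$ and $\der{\xrdot}{\cdot}$ straight, relying on $\der{\xdot}{\xrdot} = \I$ and $\xrddot = \xddot - \xtddot$ when passing between the two descriptions.
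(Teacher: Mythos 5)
Your proposal is correct and takes essentially the same route as the paper's proof: compute $\dot{\hd} = \xrdot^T(\Mde\xrddot + \fde)$, substitute the energized dynamics via $\xrddot = \xddot - \xtddot$, insert the prescribed $\alpha_{\hd}$, and observe the cancellation. The only difference is presentational — you derive the rate-of-change formula explicitly from the Hamiltonian, whereas the paper cites the derivations in \cite{Ratliff2020} for it — so no further comment is needed.
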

\begin{proof}
  From the derivations in~\cite{Ratliff2020}, we can compute the rate of change of the
dynamic energy as $\dot{\hd} = \xrdot^T(\Mde\xrddot + \fde)$. The equations of motion can be
plugged in through the definition of the reference trajectory \cref{def:refTraj}, 
$\xrddot = \xddot - \xtddot$ to obtain:
\begin{align*}
  \dot{\hd} &= \xrdot^T(\Mde(-\vec{h}-\alpha_{\hd}\xrdot - \xtddot) + \fde) \\
  &= \xrdot^T(-\Mde\vec{h}-\Mde\xrdot\alpha_{\hd} - \Mde\xtddot + \fde) \\
  &= \xrdot^T(-\Mde\vec{h} \\
      & +\Mde\xrdot{(\xrdot^T\Mde\xrdot)}^{-1}\xrdot^T(\Mde(\vec{h}+\xtddot)-\fde) \\
      & - \Mde\xtddot + \fde) \\
  &= -\xrdot^T\Mde\vec{h}
      +\xrdot^T(\Mde(\vec{h}+\xtddot)-\fde) \\
      & - \xrdot^T\Mde\xtddot + \xrdot^T\fde \\
  &= 0
\end{align*}
The energized system conserves the dynamic energy.
\end{proof}
\cref{prop:dynamic_energization} allows to combine dynamic components of the
motion generator
with static components. Effectively, the dynamic component \textit{bends} the
underlying geometry according to the motion of the dynamic components (e.g., a moving
obstacle).

While dynamic Lagrangians and the corresponding energization operation are similar to the
methods described in~\cite{Ratliff2020}, the operation of the standard pull to the
dynamically energized system must be slightly modified. Specifically, the reference
velocity must be pulled. We show that dynamic
energization also commutes with the standard pullback.

\begin{theorem}
Let \ld{} be a dynamic Lagrangian to the reference trajectory \xt{}, and
let~{$\xddot+\h(\x, \xdot) = \vec{0}$} be a second order differential equation with a metric \Md{}
such that $\Jt\Md\J$ has full rank that can be written as spec $(\Md, \Md\h)$. Suppose $x =
\map(\q)$ is a differential map with \J{} its Jacobian. Then
\begin{equation}
  \energize{\pull{\map}{\ld}}{\left(\pull{\map}{(\I, \h)}\right)} =
\pull{\map}{\left(\energize{\ld}{(\I,\h)}\right)}, 
\end{equation}
when the reference velocity is being pulled as $\qtdot = \pinv{\J}\xtdot$.
$\pinv{\J}$ denotes the pseudo-inverse of \J{}.
We say that the dynamic energization operation commutes with the pullback transform.
\end{theorem}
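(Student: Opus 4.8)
The plan is to prove the commutation identity by computing both sides explicitly as specs in the configuration space \Q{} and verifying that their metrics and forcing terms coincide. Since a spec is fully determined by the pair $(\mat{M},\f)$, equality of specs reduces to equality of these two components. I would organize the computation so that each operation (energization, pullback, and dynamic pullback where it hides inside the dynamic energization) is applied in the prescribed order, and then the two resulting expressions are compared term by term.

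First I would compute the right-hand side, $\pull{\map}{(\energize{\ld}{(\I,\h)})}$. The inner dynamic energization, by \cref{prop:dynamic_energization}, turns $(\I,\h)$ into the spec $(\Mde,\Mde\h+\Mde\alpha_{\hd}\xrdot)$ in the task space \X{} (written with the induced metric \Mde{} rather than the identity, since energization equips the component with its Finsler metric). I would then apply the ordinary pullback from \cref{sub:operations_on_specs}, conjugating the metric by \J{} and transforming the forcing term via $\Jt(\f+\M\Jdot\qdot)$, to land in \Q{}. The key subtlety is that the relative velocity \xrdot{} appearing inside $\alpha_{\hd}$ and in the friction term must itself be expressed through the Jacobian: using $\xdot=\J\qdot$ and the pulled reference velocity $\qtdot=\pinv{\J}\xtdot$, I would show that $\xrdot=\xdot-\xtdot$ pulls back consistently to a configuration-space relative velocity $\qrdot=\qdot-\qtdot$, so that $\J\qrdot=\xrdot$.

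Next I would compute the left-hand side, $\energize{\pull{\map}{\ld}}{(\pull{\map}{(\I,\h)})}$. Here the differential map is applied first: the dynamic Lagrangian \ld{} is pulled to \Q{}, inducing (by \cref{the:dynamic_euler_lagrange}, which guarantees the Euler-Lagrange construction commutes with the pullback) a Lagrangian spec whose metric is $\Jt\Mde\J$ and whose energy is the same \hd{} expressed in the pulled coordinates. The spec $(\I,\h)$ is pulled in parallel to $(\Jt\Md\J,\Jt(\Md\h+\Md\Jdot\qdot))$, and the full-rank hypothesis on $\Jt\Md\J$ ensures the pulled metric is invertible so energization is well-defined. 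I would then apply dynamic energization in \Q{}, using the pulled reference velocity $\qtdot=\pinv{\J}\xtdot$ to build the corresponding $\alpha$ coefficient, and collect the result into a single spec.

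The main obstacle will be managing the $\Jdot\qdot$ cross terms and the reference-velocity bookkeeping so that the two friction coefficients match. The crux is to verify that the scalar $\alpha$ computed after pulling (inside the left-hand side) equals the scalar $\alpha_{\hd}$ computed before pulling (inside the right-hand side): this requires showing that the Rayleigh-type quotient $(\qrdot^T\Jt\Mde\J\,\qrdot)^{-1}$ and its numerator transform covariantly, with the $\Jt$ factors and the $\Jdot\qdot$ acceleration corrections cancelling exactly against the $\Mde\xtddot$ term hidden in the dynamic pullback. I expect the projector structure of $\Pe$ (equivalently the rank-one friction direction $\xrdot\xrdot^T$) to be the ingredient that makes these cancellations go through, since energization only acts in the velocity direction and that direction is preserved under \J{}. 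Once $\alpha=\alpha_{\hd}$ and the metrics both equal $\Jt\Mde\J$ are established, equality of the forcing terms follows directly, completing the verification.
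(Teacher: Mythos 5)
Your proposal is correct and takes essentially the same route as the paper's own proof: compute both sides explicitly as specs in \Q{}, with the crux being that the energization scalar built after pulling equals $\alpha_{\hd}$, established through the identity $\J\pinv{\J}\xtdot = \xtdot$ (so that $\J(\qdot - \qtdot) = \xdot - \xtdot$) and the cancellation of the $\Jt\Md\Jdot\qdot$ cross-terms against the $\Md\xtddot$ contribution from the dynamic pullback. The only cosmetic difference is that you attribute the cancellation to the projector structure of $\Pe$, whereas the paper carries it out as a direct algebraic computation on the Rayleigh-type quotient; the substance is identical.
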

\begin{proof}
  The commutation can be proven by calculation. First, we compute the right side of the
equivalence. According to \cref{prop:dynamic_energization}, the energized system (that
maintains the dynamic energy \hd{}) writes as 
\[
  \Md\xddot + \Md\h + \alpha_{\hd}(\xdot-\xtdot) = 0, 
\]
with $\alpha_{\hd}$ as defined in \cref{prop:dynamic_energization}.
Applying the pull-operation, we obtain
\begin{equation}
  \Jt\Md\J\qddot + \Jt\Md\h + \Jt\Md\Jdot\qdot + \Jt\Md\alpha_{\hd}(\xdot-\xtdot) = 0.
  \label{eq:pulled_energized_system}
\end{equation}
As the equation expressed in \X{}, this equation in \Q{} maintains the energy \hd{}. Next,
we compute the left hand side. The equation of motion of the pulled dynamic Lagrangian \ld{}
computes as 
\begin{align*}
  \pull{\map}{(\Md,\fd)}
    &= \Jt\left(\Md\J\qddot  + \fd + \Md\Jdot\qdot - \Md\xtddot\right)\\
    &= \tilde{\Md}\qddot  + \tilde{\fd} - \Jt\Md\xtddot.
\end{align*}
The original spec is pulled accordingly
\begin{align*}
  \pull{\map}{(\Md,\Md\h)} &=
  \Jt\Md\J\qddot + \Jt\Md\h + \Jt\Md\Jdot\qdot \\
  &= \tilde{\Md}\qddot + \tilde{\Md}\tilde{\h}
\end{align*}
We energize the pulled system according to \cref{prop:dynamic_energization}
\begin{equation}
  \begin{split}
    \Jt\Md\J\qddot + \Jt\Md\h + \Jt\Md\Jdot\qdot \\
    + \Jt\Md\J\alpha_{\pull{\map}{\hd}}(\qdot-\pinv{\J}\xdot) = 0, 
  \end{split}
\label{eq:energized_pulled_system}
\end{equation}
with 
\begin{align*}
  \alpha_{\pull{\map}{\hd}} = 
    & -{\left({(\qdot-\pinv{\J}\xtdot)}^T\Jt\Md\J(\qdot-\pinv{\J}\xtdot)\right)}^{-1} \\
    & {(\qdot-\pinv{\J}\xtdot)}^T\left(\Jt\Md\J(\tilde{\h}+\xtddot)\right. \\
    & - \left.\Jt\fd - \Jt\Md\Jdot\qdot\right) \\
    = & -{\left({(\J\qdot-\J\pinv{\J}\xtdot)}^T\Md(\J\qdot-\J\pinv{\J}\xtdot)\right)}^{-1} \\
    & {(\qdot-\pinv{\J}\xtdot)}^T\left(\Jt\Md\J\tilde{\h}+\Jt\Md\xtddot\right. \\
    & \left.- \Jt\fd - \Jt\Md\Jdot\qdot\right) \\
    = & -{\left({(\xdot-\xtdot)}^T\Md(\xdot-\xtdot)\right)}^{-1} \\
    & {(\qdot-\pinv{\J}\xtdot)}^T\left(\Jt\Md\h + \Jt\Md\Jdot\qdot +\Jt\Md\xtddot\right. \\
    & \left.- \Jt\fd - \Jt\Md\Jdot\qdot\right) \\
    = & -{\left({(\xdot-\xtdot)}^T\Md(\xdot-\xtdot)\right)}^{-1} \\
    & {(\qdot-\pinv{\J}\xtdot)}^T\left(\Jt\Md\h + \Jt\Md\xtddot- \Jt\fd \right) \\
    = & -{\left({(\xdot-\xtdot)}^T\Md(\xdot-\xtdot)\right)}^{-1} \\
    & {(\J\qdot-\J\pinv{\J}\xtdot)}^T\left(\Md\h + \Md\xtddot- \fd \right) \\
    = & -{\left({(\xdot-\xtdot)}^T\Md(\xdot-\xtdot)\right)}^{-1}
    {(\xdot-\xtdot)}^T\\
    & \left(\Md\h + \Md\xtddot- \fd \right) \\
    & = \alpha_{\hd}
\end{align*}
Thus, we have shown equivalence between $\alpha_{\hd}$
and $\alpha_{\pull{\map}{\hd}}$. As $\alpha$ is scalar we can can rewrite the energization
term in \cref{eq:energized_pulled_system} as
\begin{align*}
  &\Jt\Md\J\alpha_{\pull{\map}{\hd}}(\qdot-\pinv{\J}\xdot)\\
  = &\Jt\Md\alpha_{\pull{\map}{\hd}}(\J\qdot-\J\pinv{\J}\xdot)\\
  = &\Jt\Md\alpha_{\pull{\map}{\hd}}(\xdot-\xdot)\\
  = &\Jt\Md\alpha_{\hd}(\xdot-\xdot)\\
\end{align*}
With the equivalence of the energization terms, we conclude the
proof that dynamic energization commutes with the standard pullback.
\end{proof}

\subsection{Dynamic fabrics}%
\label{sub:dynamic_fabrics}

With the previous results, we formulate a new class of fabrics that converge to a
reference trajectory. We call this class of fabrics \acl{df}. 
First, some notations are introduced to eventually show that dynamically energized specs
form dynamic fabrics.
%
%
Analogously to unbiased specs, we define dynamically unbiased specs (i.e., specs whose
solutions do not diverge from the reference \xt{} when starting on the reference).
\begin{definition}
A spec is said to be \textit{dynamically unbiased} with respect to $\xt(t)$ if
$\f(\x, \xdot) = -\M\xtddot$, for $\x(t) = \xt(t)$ and $\xdot(t) = \xtdot(t)$.
\end{definition}

Beside being dynamically unbiased, some specs will converge to the reference trajectory
independently from their initial conditions.

\begin{definition}
A spec is \textit{dynamically rough} with respect to $\xt(t)$ if all
its integral curves $\x(t)$ converge dynamically with respect to $\xt(t)$.
\end{definition}

As for \ac{sf}, \ac{df} can be formed by
specs when they are being forced by a potential function \forc. Such a forcing potential is
generally a function of \x{} and \xt{} and has at least one minimum. A spec that converges
to a minimum of the forcing potential then forms a dynamic fabrics.

\begin{definition}
A spec forms a \textit{dynamically rough fabric} if it is dynamically rough with respect
to $\xt(t)$ when forced by a dynamic potential and 
$\exists t_1 > 0$ such that $\forall t>t_1, \x(t)$ satisfies the
Karush-Kuhn-Tucker (KKT) conditions for the optimization problem $\text{min}_{\x\in\X} \forc(\x,
\xt(t))$. If a spec does not form a dynamically rough fabric but all its damped variants
do, it forms a \textit{dynamically frictionless fabric}.
\end{definition}

\begin{theorem}[Dynamic Fabrics]
Suppose $S={(\M,\f)}_{\X}$ is a spec.  Then $S$ forms a dynamically rough fabric with
respect to \xt{} if and only if it is dynamically unbiased with respect to \xt{} and it
converges dynamically when being forced by a dynamic potential $\forc(\x,\xt)$ with
$\norm{\der{\x}{\forc}} < \infty$ on \X{}.%
\label{the:dynamic_fabrics}
\end{theorem}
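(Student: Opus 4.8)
The plan is to prove the biconditional by transporting it to the relative coordinate system $\xr = \x - \xt$ defined by the dynamic map \mapd{}, where every ``dynamic'' notion collapses to its static counterpart, and then to invoke the characterization of (static) rough fabrics from~\cite{Ratliff2020}. This is the natural route because the commutation results already established---that the Euler--Lagrange equation commutes with the dynamic pullback (\cref{the:dynamic_euler_lagrange}) and that dynamic energization commutes with the standard pullback---guarantee that the fabric-defining structure survives this change of coordinates, which is precisely what lets us transfer the static theory.

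First I would rewrite the spec in relative coordinates. Since $\xddot = \xrddot + \xtddot$, the defining equation $\M\xddot + \f = \zerovec$ becomes $\M\xrddot + (\f + \M\xtddot) = \zerovec$, so $S$ corresponds to the relative spec $(\M, \f + \M\xtddot)_{\Xr}$, which is exactly the inverse of the dynamic pullback of \cref{eq:dynamic_pull}; a curve $\x(t)$ is an integral curve of $S$ if and only if $\xr(t)$ is an integral curve of this relative spec. Under this bijection the notions translate termwise: dynamic convergence ($\x\to\xt,\,\xdot\to\xtdot,\,\xddot\to\xtddot$) becomes convergence to rest at the origin ($\xr\to\zerovec,\,\xrdot\to\zerovec,\,\xrddot\to\zerovec$); dynamic unbiasedness becomes the statement that the origin at rest is an equilibrium of the relative spec, i.e. $\f + \M\xtddot = \zerovec$ there; and, because the shift $\x\mapsto\xr$ has identity Jacobian at fixed $\xt$, the forcing gradient satisfies $\der{\x}{\forc} = \der{\xr}{\forc}$, so forcing by $\forc(\x,\xt)$ becomes ordinary forcing, the bound $\norm{\der{\x}{\forc}}<\infty$ is preserved, and the KKT points of $\min_{\x\in\X}\forc(\x,\xt)$ coincide with those of the relative minimization.

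With this dictionary in place I would invoke the commutation theorems to assert that $S$ forms a dynamically rough fabric if and only if its relative spec forms a (static) rough fabric, then apply the static characterization of~\cite{Ratliff2020} to the relative spec, settling both implications of the biconditional simultaneously, and finally translate the equivalence back through the dictionary of the previous paragraph to recover the claim.

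The hard part will be the transfer of the KKT/stationarity condition. At a dynamic-convergence limit the relative velocity and acceleration vanish, so the forced relative dynamics reduce to a balance between the geometry's residual force at rest and the potential gradient $\der{\xr}{\forc}$ evaluated at the limit point; to read off stationarity $\der{\xr}{\forc}=\zerovec$ there (hence KKT for the unconstrained problem) one must know that the geometry exerts no force at rest even away from the origin, which is stronger than the pointwise dynamic-unbiasedness hypothesis and is supplied by the homogeneity of the underlying geometric fabric. Ensuring that this homogeneity, together with the finiteness of $\der{\x}{\forc}$, excludes spurious limit points and makes the limit a genuine minimizer is the delicate step; the remaining bookkeeping with the $\xtddot$ term and the explicit dependence of \f{} on $\xt,\xtdot$ is routine given the commutation lemmas.
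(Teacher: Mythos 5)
Your pivotal step --- ``invoke the commutation theorems to assert that $S$ forms a dynamically rough fabric if and only if its relative spec forms a (static) rough fabric, then apply the static characterization of \cite{Ratliff2020}'' --- is where the argument breaks, for three compounding reasons. First, the relative spec $(\M,\f+\M\xtddot)_{\Xr}$ is not a spec in the sense of the static theory: its terms depend explicitly on $t$ through $\xt(t)$, $\xtdot(t)$, $\xtddot(t)$, so the system is non-autonomous, and the static characterization of rough fabrics is stated and proved only for time-invariant specs and potentials. This is exactly why the paper says the dynamic pullback ``breaks the algebra'' and proves a new theorem rather than deducing it from the old one. Second, the commutation results you lean on (\cref{the:dynamic_euler_lagrange} and the energization/pullback commutation) concern dynamic Lagrangians and energized systems; they say nothing about an arbitrary spec $(\M,\f)_{\X}$, so they cannot deliver the equivalence ``dynamically rough fabric $\Leftrightarrow$ relative spec is a static rough fabric'' --- that equivalence is essentially the content of \cref{the:dynamic_fabrics} itself, and assuming it is circular. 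Third, as you yourself flag in your last paragraph, the hypotheses do not transport: dynamic unbiasedness constrains \f{} only on the reference trajectory (the origin of the relative system), whereas the static notion of unbiasedness that the static theorem requires is a condition at every position, because there the limit point of a forced trajectory is a priori unknown. Your proposed patch --- homogeneity of ``the underlying geometric fabric'' --- is not available: the theorem is about an arbitrary spec, with no geometry, energization, or homogeneity assumed, so the gap you correctly identified cannot be closed this way.

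The reason no such patch is needed is that here the limit point is \emph{not} unknown: dynamic convergence means, by definition, convergence to the reference trajectory itself, and dynamic unbiasedness is precisely unbiasedness at that known limit. The paper therefore argues directly on the forced equation $\M\xddot+\f=-\der{\x}{\forc}$. If the forced spec converges dynamically, then $\x\to\xt$, $\xdot\to\xtdot$, $\xddot\to\xtddot$, and dynamic unbiasedness gives $\f\to-\M\xtddot$, so the left-hand side tends to \zerovec{} and hence $\der{\x}{\forc}\to\zerovec$, which is the KKT condition; conversely, if \f{} is dynamically biased, then on the reference trajectory the left-hand side equals a nonzero residual $\vec{a}(\xt,\xtdot)$ while the KKT condition forces the right-hand side to vanish, a contradiction. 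If you carry out this limit argument in your relative coordinates (which is perfectly legitimate), you will find you are re-proving the static-style statement at the designated point $\xr=\zerovec$ rather than invoking it: your reduction, once made rigorous, collapses into the paper's direct proof.
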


\begin{proof}
We can write the corresponding differential equation
\begin{equation}
  \M\xddot + \f = -\der{\x}{\forc}
  \label{eq:proof_4_10_b}
\end{equation}
Assume that $S$ is dynamically unbiased.
Since the spec converges with respect to $\xt(t)$, we have $\xdot\to\xtdot, \x\to\xt$.
Because it is dynamically unbiased we also have $\f\to-\M\xtddot$.
Thus, the left hand side of
\cref{eq:proof_4_10_b}, approaches $\vec{0}$.
Consequently, the right hand side must also
approach $\vec{0}$ and hence $\der{\x}{\forc} \to \vec{0}$. The last satisfies the 
Karush–Kuhn–Tucker (KKT)
conditions of \forc{}.

To prove the converse, assume \f{} dynamically biased. That implies that 
\[
  \exists t > 0, \f = \M\xtddot + \vec{a}(\xt, \xtdot),
    \vec{a}(\xt, \xtdot) \neq \vec{0}.
\]
Hence, there exist a $t > 0$ for which the left hand side does not vanish. As \forc{}
satisifies the KKT conditions at $\x = \xt$, its derivative equals zero at $\x = \xt$ which contradicts
\cref{eq:proof_4_10_b} with $\M\vec{a}(\xt, \xtdot) = \vec{0}$.
\end{proof}

Hence, the spec is required to be unbiased and convergent when forced. While the former
can be verified using straight-forward computation, convergence is difficult to verify in
the general case. 

\begin{lemma}[Dynamically energized fabrics]
  Suppose $S$ is a dynamically unbiased energized spec. Then $S$
  forms a dynamically frictionless
  fabric if  $\der{\x}{\forc} = -\der{\xt}{\forc}$.%
\label{lem:dynamically_energized_fabrics}
\end{lemma}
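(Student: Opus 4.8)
The plan is to reduce the claim to the characterization of dynamically rough fabrics in \cref{the:dynamic_fabrics} and then supply the one missing ingredient---convergence---through an energy (Lyapunov) argument. By definition, $S$ forms a dynamically frictionless fabric exactly when $S$ itself need not be dynamically rough but every damped variant is. Since \cref{the:dynamic_fabrics} states that a spec is a dynamically rough fabric if and only if it is dynamically unbiased and converges dynamically when forced, it suffices to establish these two properties for the forced, damped system, given that $S$ is already dynamically unbiased and energized.

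First I would check that damping preserves the dynamically unbiased property. Taking the damping term proportional to the relative velocity, $\mat{B}\xrdot$ with $\mat{B}$ positive definite, on the reference we have $\x=\xt$ and $\xdot=\xtdot$, hence $\xrdot=\vec{0}$ and the damping term vanishes. Thus $\f$ still equals $-\M\xtddot$ there and the damped variant is dynamically unbiased; this step is routine.

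The heart of the proof is convergence, which I would obtain by exhibiting $\tilde{H}=\hd+\forc$ as a Lyapunov function for the forced, damped, energized system $\M\xddot+\f+\der{\x}{\forc}+\mat{B}\xrdot=\vec{0}$. Using the hypothesis $\der{\x}{\forc}=-\der{\xt}{\forc}$, the total time derivative of the potential collapses to $\dot{\forc}=\der{\x}{\forc}^T\xdot+\der{\xt}{\forc}^T\xtdot=\der{\x}{\forc}^T(\xdot-\xtdot)=\xrdot^T\der{\x}{\forc}$. Next, substituting $\xrddot=\xddot-\xtddot$ into the rate-of-change expression $\dot{\hd}=\xrdot^T(\Mde\xrddot+\fde)$ used in the proof of \cref{prop:dynamic_energization}, the purely geometric and reference-acceleration terms cancel exactly---this is precisely the conservation identity of the energized spec---leaving only the contributions of the added forcing and damping: $\dot{\hd}=-\xrdot^T\der{\x}{\forc}-\xrdot^T\mat{B}\xrdot$. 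Adding the two expressions, the potential terms cancel and $\dot{\tilde{H}}=-\xrdot^T\mat{B}\xrdot\leq 0$, with equality only when $\xrdot=\vec{0}$.

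Finally I would invoke a LaSalle-type invariance argument, for which the bound $\norm{\der{\x}{\forc}}<\infty$ supplies a compact sublevel set of $\tilde{H}$: the trajectory approaches the largest invariant set on which $\dot{\tilde{H}}=0$, i.e. $\xrdot=\vec{0}$; invariance then forces $\xrddot=\vec{0}$, which through the equation of motion requires $\der{\x}{\forc}=\vec{0}$---exactly the KKT condition for $\text{min}_{\x\in\X}\forc(\x,\xt)$. This establishes dynamic convergence, and combined with the unbiased property, \cref{the:dynamic_fabrics} shows every damped variant is a dynamically rough fabric, so $S$ is a dynamically frictionless fabric. I expect the main obstacle to be the energy bookkeeping in the $\dot{\hd}$ computation: one must separate the energized force from the added forcing and damping and rely on \cref{prop:dynamic_energization} to annihilate the geometric terms, then use the relative-coordinate structure implied by $\der{\x}{\forc}=-\der{\xt}{\forc}$ to cancel the potential's contribution against $\dot{\forc}$. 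Making the invariance step fully rigorous---ruling out trajectories that linger off the reference---is the remaining subtlety.
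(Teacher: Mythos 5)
Your proposal is correct and follows essentially the same route as the paper's proof: the Lyapunov function $\hd + \forc$, the rate-of-change computation in which the energization terms cancel by \cref{prop:dynamic_energization} and the hypothesis $\der{\x}{\forc} = -\der{\xt}{\forc}$ annihilates the potential contributions, leaving $-\xrdot^T\mat{B}\xrdot \leq 0$, and finally the appeal to \cref{the:dynamic_fabrics}. Your closing LaSalle-invariance step, deducing $\der{\x}{\forc}\to\vec{0}$ and hence the KKT condition on the invariant set, is in fact slightly more careful than the paper, which stops at ``$\xrdot$ goes to zero, hence the system converges dynamically.''
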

\begin{proof}
  The equation of motion for the energized, forced and damped system writes as 
  \begin{equation}
    \xddot + \vec{h} + \alpha_{\hd}\xrdot + \mat{B}\xrdot + \der{\x}{\forc} = 0
  \end{equation}
  The systems energy (dynamic Hamiltonian) is used as a Lyapunov function to show
  convergence. The rate of change is computed as
  \begin{align*}
    \dot{\hd^{\forc}} &= \xrdot^T(
      \Mde(
        -\vec{h}
        - \alpha_{\hd}\xrdot 
        - \mat{B}\xrdot 
        - \der{\x}{\forc}
        - \xtddot) \\
      & + \fde) 
      + \dot{\forc} \\
              &= -\xrdot^T\mat{B}\xrdot
      - \xrdot^T\der{\x}{\forc}
      + \xdot^T\der{\x}{\forc}
      + \xtdot^T\der{\xt}{\forc} \\
              &= -\xrdot^T\mat{B}\xrdot \\
  \end{align*}

  As the system energy is lower bounded with $\hd + \forc \geq 0$ and 
  $\dot{\hd^{\forc}} \leq 0$, when $\mat{B}$ stricly positive definite, we must have
$\dot{\hd^{\forc}} \to 0$. Thus, \xrdot{} goes to zero. We can conclude that the system is
dynamically converging. As it it also said to be dynamically unbiased, the damped
energized system forms a dynamic fabric by \cref{the:dynamic_fabrics}.
\end{proof}

\begin{lemma}[Dynamic Lagrangian fabrics]
An unbiased, dynamic Lagrangian spec forms a dynamically frictionless
fabric if  $\der{\x}{\forc} = -\der{\xt}{\forc}$ holds for the forcing term.
\label{lem:dynamic_lagrangian_fabrics}
\end{lemma}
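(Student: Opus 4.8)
The plan is to reduce this lemma to the immediately preceding \cref{lem:dynamically_energized_fabrics}, which already establishes that any \emph{dynamically unbiased energized spec} forms a dynamically frictionless fabric whenever $\der{\x}{\forc} = -\der{\xt}{\forc}$. The entire task is therefore to exhibit a dynamic Lagrangian spec as a dynamically energized spec and to reconcile the unbiasedness hypotheses; no new Lyapunov argument is needed, since that was carried out in the earlier lemma.

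First I would recall, via \cref{the:dynamic_euler_lagrange}, that the dynamic Lagrangian spec $(\Mde,\fde)$ conserves the dynamic energy (Hamiltonian) $\hd$ computed in the relative coordinates \xr{}, and that its equation of motion in the static frame, obtained through the dynamic pullback, reads $\Mde\xddot + \fde - \Mde\xtddot = \vec{0}$. Writing this as $\xddot + \vec{h} = \vec{0}$ with $\vec{h} = \Mde^{-1}\fde - \xtddot$ casts it in exactly the form required by \cref{prop:dynamic_energization}.

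The key step is to energize this system with its own Lagrangian and observe that the energization force vanishes. Substituting $\vec{h} = \Mde^{-1}\fde - \xtddot$ into the coefficient $\alpha_{\hd} = -{(\xrdot^T\Mde\xrdot)}^{-1}\xrdot^T(\Mde(\vec{h}+\xtddot)-\fde)$ of \cref{prop:dynamic_energization} gives $\Mde(\vec{h}+\xtddot)-\fde = \fde - \fde = \vec{0}$, hence $\alpha_{\hd} = 0$. This confirms that a dynamic Lagrangian spec already conserves \hd{} and coincides with its own dynamic energization; equivalently, it is a dynamically energized spec with vanishing energization term. I expect this short computation, together with the correct identification of \vec{h}, to be the main (and essentially only) obstacle, as everything else is bookkeeping.

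Finally I would check the unbiasedness hypothesis. For a dynamic Lagrangian that is unbiased in the relative frame (so $\fde = \vec{0}$ whenever $\xrdot = \vec{0}$), the static equation of motion forces $\f = \fde - \Mde\xtddot = -\Mde\xtddot = -\M\xtddot$ at $\x = \xt,\ \xdot = \xtdot$, where $\xrdot = \vec{0}$ by \cref{def:refTraj}; this is precisely the definition of \emph{dynamically unbiased}. Having identified the spec as a dynamically unbiased energized spec, and with the potential satisfying $\der{\x}{\forc} = -\der{\xt}{\forc}$ by assumption, \cref{lem:dynamically_energized_fabrics} applies verbatim and shows that the damped dynamic Lagrangian spec forms a dynamically frictionless fabric, completing the proof.
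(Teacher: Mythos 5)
Your proposal is correct, and its central computation checks out: substituting $\vec{h} = \Mde^{-1}\fde - \xtddot$ into the coefficient of \cref{prop:dynamic_energization} indeed gives $\Mde(\vec{h}+\xtddot)-\fde = \vec{0}$, so $\alpha_{\hd}=0$ and the dynamic Lagrangian spec is its own dynamic energization; likewise your translation of relative-frame unbiasedness ($\fde=\vec{0}$ when $\xrdot=\vec{0}$) into the dynamically unbiased condition $\f = -\Mde\xtddot$ at $\x=\xt$, $\xdot=\xtdot$ is exactly what is needed. However, this is a genuinely different route from the paper's. The paper does not reduce \cref{lem:dynamic_lagrangian_fabrics} to \cref{lem:dynamically_energized_fabrics}; it repeats the Lyapunov argument from scratch for the Lagrangian spec: it writes the damped, forced equations of motion $\M_{\ld}\xrddot + \f_{\ld} + \mat{B}\xrdot + \der{\x}{\forc} = \vec{0}$, takes $\hd^{\forc} = \hd + \forc$ as Lyapunov function, computes
\begin{equation*}
  \dot{\hd}^{\forc} = -\xrdot^T\mat{B}\xrdot + \xtdot^T\left(\der{\x}{\forc} + \der{\xt}{\forc}\right),
\end{equation*}
and invokes the hypothesis $\der{\x}{\forc} = -\der{\xt}{\forc}$ to kill the second term before concluding via \cref{the:dynamic_fabrics}. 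Your reduction buys economy (the Lyapunov computation is carried out once, in the earlier lemma) and makes the logical relationship between the two lemmas explicit. What the paper's self-contained computation buys is immunity to a mismatch of conventions that your phrase ``applies verbatim'' glosses over: in \cref{lem:dynamically_energized_fabrics} the damping and forcing enter at the acceleration level, $\xddot + \vec{h} + \alpha_{\hd}\xrdot + \mat{B}\xrdot + \der{\x}{\forc} = \vec{0}$, whereas in \cref{lem:dynamic_lagrangian_fabrics} they enter at the force level, $\Mde\xrddot + \fde + \mat{B}\xrdot + \der{\x}{\forc} = \vec{0}$; these differ by a factor of $\Mde$ on the damping and forcing terms, so the two families of ``damped variants'' are not literally identical. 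This wrinkle originates in the paper's own formulation rather than in your argument, but a fully airtight version of your reduction should either state this identification explicitly or observe that the key cancellation $-\xrdot^T\der{\x}{\forc} + \xdot^T\der{\x}{\forc} + \xtdot^T\der{\xt}{\forc} = \xtdot^T(\der{\x}{\forc}+\der{\xt}{\forc}) = \vec{0}$ goes through identically under either convention, so the conclusion is unaffected.
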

\begin{proof}
The equations of motion induced by the dynamic Lagrangian including damping and forcing 
are defined by the spec and can
be written explicitly as
\begin{align}
  \M_{\ld}\xrddot + \f_{\ld} + \mat{B}\xrdot + \der{\x}{\forc} = 0 \nonumber \\
  \M_{\ld}(\xddot - \xtddot) + \f_{\ld} + \mat{B}(\xdot - \xtdot) + \der{\x}{\forc} = 0
\label{eq:motion} \\
  \M_{\ld}\xddot - \M_{\ld}\xtddot + \f_{\ld} + \mat{B}\xdot - \mat{B}\xtdot +
    \der{\x}{\forc} = 0 \nonumber
\end{align}

In the following we use the Hamiltonian and the potential function as Lyapunov function to
show convergence of the damped spec.
\begin{align*}
  \hd^{\forc}(\x) &= \hd + \forc \\
  &= \der{\xrdot}{\ld^T}\xrdot - \ld + \forc
\end{align*}
The time derivative is composed of the time derivative of the Hamiltonian, $\dot{\he} =
\xrdot^T (\M_{\ld}\xrddot + \f_{\ld})$, and the time derivative of the forcing potential, 
$\dot{\forc} = \xdot^T\der{\x}{\forc} + \xtdot^T\der{\xt}{\forc}$.
Thus, the system's total energy varies over time:
\[
  \dot{\hd}^{\forc}(\x) = {(\xdot - \xtdot)}^T(\M_{\ld} (\xddot - \xtddot) + \f_{\ld}) +
\xdot^T\der{\x}{\forc} + \xtdot^T\der{\xt}{\forc}
\]
Plugging in the equations of motion \cref{eq:motion} gives
\begin{align*}
  \dot{\hd}^{\forc}(\x) &= {(\xdot - \xtdot)}^T(-\f_{\ld} - \mat{B}(\xdot - \xtdot) - \der{\x}{\forc} +
    \f_{\ld}) \\
  &  + \xdot^T\der{\x}{\forc} + \xtdot^T\der{\xt}{\forc} \\
  &= -{(\xdot - \xtdot)}^T\mat{B}(\xdot - \xtdot) - {(\xdot - \xtdot)}^T\der{\x}{\forc} \\
  & + \xdot^T\der{\x}{\forc} + \xtdot^T\der{\xt}{\forc} \\
  &= -{(\xdot - \xtdot)}^T\mat{B}(\xdot - \xtdot)
    + \xtdot^T(\der{\x}{\forc} + \der{\xt}{\forc}).
\end{align*}
For $\der{\x}{\forc} = -\der{\xt}{\forc}$ and $\mat{B}$ strictly positive definite,
$\dot{\hd}$ is strictly negative for $(\xdot - \xtdot) \neq 0$. Since $\hd^{\forc}$ is lower
bounded as composition of lower bounded function and $\dot{\hd}^{\forc} \leq 0$,
$\dot{\hd}^{\forc} \to 0$ and thus, $\xdot \to \xtdot$ and $\x \to \xt$. Hence, the spec
converges dynamically with respect to \xt{}.
As the spec is further said to be
dynamically unbiased, the damped spec forms a dynamic fabric by \cref{the:dynamic_fabrics}.
\end{proof}

Concretely, \cref{lem:dynamic_lagrangian_fabrics} allows for trajectory following
with guaranteed convergence with \ac{df}. For example, a reference trajectory 
for the robot's end-effector is defined as $\xt(t)$. Then, the potential can be designed
as $\psi = \xt(t) - \x$ (respecting the construction rule required
for \cref{lem:dynamic_lagrangian_fabrics}). In contrast to \ac{sf}, where the static
potential function is simply updated at every time step, \ac{df} makes use of the dynamics 
of the reference trajectory through the dynamic pullback.

\subsection{Construction procedure}

From the high-level procedure explained in \cref{sub:motion_design_using_dynamic_fabrics}, we can
derive the algorithm using the formal findings in this section, see \cref{alg:motion_design}.
\begin{algorithm}
  Define basic inertia as spec $\M\qddot + \M\h = 0$ \\
  \For{avoidance in avoidances}{
    Define differential map between \Q{} and $\X_i$: $\map$ or $\map_t$ \\
    Design geometry on $\X_i$: $\xddot_i + \h_{2,i} = \zerovec$ \\
    Design Finsler energy for behavior on $\X_i$: $\l_i$ \\
    Energize geometry with Finsler energy \ref{prop:dynamic_energization}\\
    \If{\map{} is time-parameterized}{
      Apply dynamic pullback to energized system\\
    }
    Apply standard pullback\\
    Add pulled avoidance component to root fabric\\
  }
  Force root system with (time-parameterized) potential\\
  \caption{Motion design with dynamic fabrics}
  \label{alg:motion_design}
\end{algorithm}

Methods to design the individual components, such as geometry and 
Finsler structures, are introduced in~\cite{Ratliff2020}.
As these design patterns do not vary for \ac{df}, they are not repeated here.
In the result section, we show some experimental examples highlighting
the comparative advantage of
optimization fabrics over model predictive schemes and the advantage of \ac{df}
over \ac{sf} for dynamic environments.

\section{Extension to non-Holonomic Constraints}%
\label{sec:non_holonomic_constraints}
Mobile manipulators are often equipped with a non-holonomic base (e.g., a
differential drive mobile robot). In contrast to revolute joints for
manipulators, non-holonomic bases imply non-holonomic constraints. Based on
ideas presented in \cite{Meng2019}, we propose a method to integrate such
constraints in optimization fabrics, including \ac{df}.

We assume that the non-holonomic constraint at hand can be expressed as an
equality of form
\begin{equation}
  \xdot = \Jnh\qdot,
  \label{eq:non_holonomic_constraint}
\end{equation}
where \Jnh{} is the Jacobian of the constraint, \qdot{} is
the velocity of the controlled joints of the system and \xdot{} is the root velocity of
the fabric. For a differential drive \xdot{} is the velocity of the system in the
Cartesian plane ($\dot{x}, \dot{y}, \dot{\theta}$) and \qdot{} is the velocity of the
actuated wheels ($u_{\textrm{left}}, u_{\textrm{right}}$). Moreover, we assume that Eq.
\ref{eq:non_holonomic_constraint} is smooth and differentiable so that we can write 
\begin{equation}
  \xddot = \Jnhdot\qdot + \Jnh\qddot.
  \label{eq:non_holonomic_constraint_derived}
\end{equation}

The theory of optimization fabrics allows to pull a tree of fabrics back into one fabric
expressed in its root-coordinates of form $\M\xddot + \f = 0$ with its solution as
\begin{equation}
  \xddot = -\M^{-1}\f.
  \label{eq:fabric_solution}
\end{equation}
Plugging Eq. \ref{eq:non_holonomic_constraint_derived} into the root
fabric we obtain the non-holonomic fabric of form
\begin{align*}
  \M\Jnh\qddot + \M\Jnhdot\qdot + \f & = 0 \\
  \Mnh\qddot + \fnh & = 0 \\.
\end{align*}
Note that \Mnh{} is not necessarily a square matrix and thus not invertible as it was in the
original fabric. To find the best actuation for the wheels, we
formulate motion generation with fabrics as an
unconstrained optimization problem
\begin{equation}
  \qddot^{\ast} = \min_{\qddot}\norm{\Mnh\qddot + \fnh}_2^2.
  \label{eq:non_holonomic_fabrics}
\end{equation}
In this approach, we minimize the error of the final equation. We
could equally derive \cref{eq:non_holonomic_fabrics} with the objective
of minimizing the error between $\xddot = \Jnh\qddot+\Jnhdot\qdot$ and the
original fabric's solution $\xddot = -\M\f$. The mimization of the difference
leads to \changed{similar} result.
This optimization problem replaces \cref{eq:fabric_solution} and is solved by
\begin{equation}
  \qddot^{\ast} = \pinv{\Mnh}\fnh.
\end{equation}
The solutions to this problem makes optimization fabrics, and thus dynamic fabrics,
applicable to non-holonomic robots, such as differential drive robots or cars.
A qualitative comparison between a trajectory generated for a holonomic
and a non-holonomic robot is shown in \cref{fig:non_holonomic_trajectory}.
\begin{figure}
  \centering
  \begin{subfigure}{0.3\linewidth}
    \centering
    \includegraphics[width=0.95\textwidth]{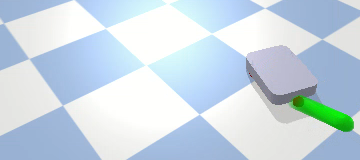}
    \caption{7s}
  \end{subfigure}%
  \begin{subfigure}{0.3\linewidth}
    \centering
    \includegraphics[width=0.95\textwidth]{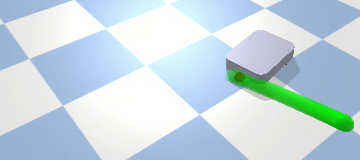}
    \caption{11s}
  \end{subfigure}%
  \begin{subfigure}{0.3\linewidth}
    \centering
    \includegraphics[width=0.95\textwidth]{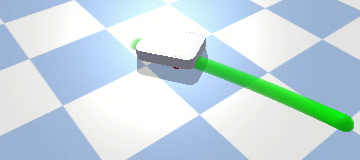}
    \caption{18s}
  \end{subfigure}
  \begin{subfigure}{0.3\linewidth}
    \centering
    \includegraphics[width=0.95\textwidth]{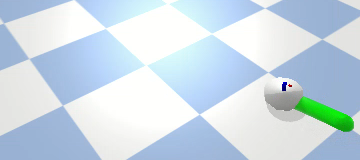}
    \caption{7s}
  \end{subfigure}%
  \begin{subfigure}{0.3\linewidth}
    \centering
    \includegraphics[width=0.95\textwidth]{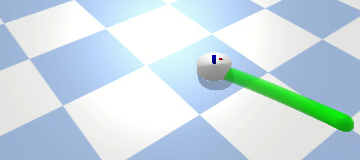}
    \caption{11s}
  \end{subfigure}%
  \begin{subfigure}{0.3\linewidth}
    \centering
    \includegraphics[width=0.95\textwidth]{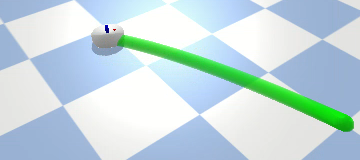}
    \caption{18s}
  \end{subfigure}%
  \caption{Path (green) following with a holonomic and a non-holonomic robot using \ac{df} with the extension
  to non-holonomic robots}
  \label{fig:non_holonomic_trajectory}
\end{figure}


The theory of optimization fabrics is built upon energy conservation of artificial energies
that design the motion. \cref{eq:non_holonomic_fabrics} does not solve the
resulting spec exactly, but minimizes the deviation according to the least square objective function.
For many kinematic systems, e.g., differential drive model, bicycle model, the
non-holonomic constraint additionally reduces the number of degrees of freedom, 
$\dim{q} < \dim{x}$. As a consequence, the least square solution has a non-zero residuum.
Then, some fundamental properties of optimization fabrics, such as energy
conservation and convergence can no longer be guaranteed. Despite this theoretical
shortcoming, we show that this approach leads to good performance in practical applications.

\section{Experimental results}%
\label{sec:experimental_results}
\begin{figure*}[ht]
  \centering
  \begin{subfigure}{0.33\linewidth}
    \centering
    \includegraphics[width=0.9\textwidth]{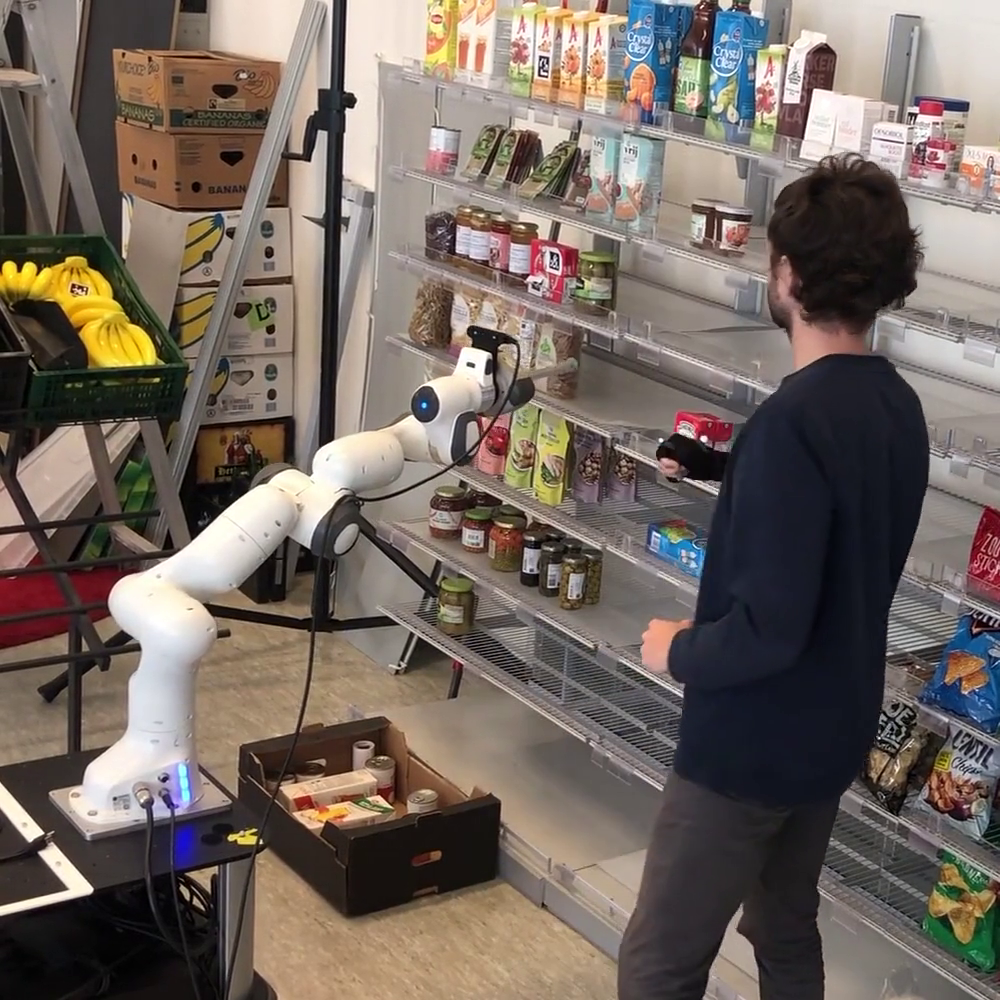}
    \caption{$t=0$s}%
    \label{subfig:experiment3_realPanda_example_1}
  \end{subfigure}%
  \begin{subfigure}{0.33\linewidth}
    \centering
    \includegraphics[width=0.9\textwidth]{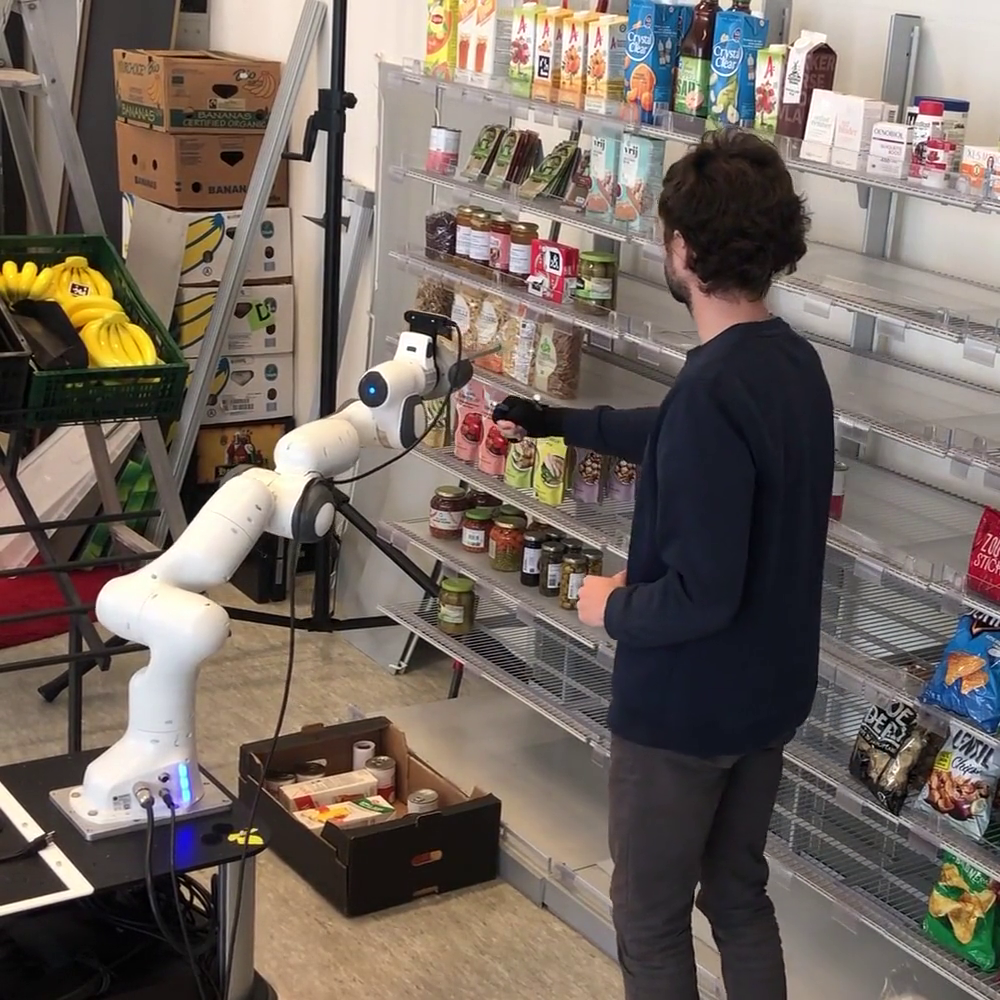}
    \caption{$t=2$s}%
    \label{subfig:experiment3_realPanda_example_1}
  \end{subfigure}%
  \begin{subfigure}{0.33\linewidth}
    \centering
    \includegraphics[width=0.9\textwidth]{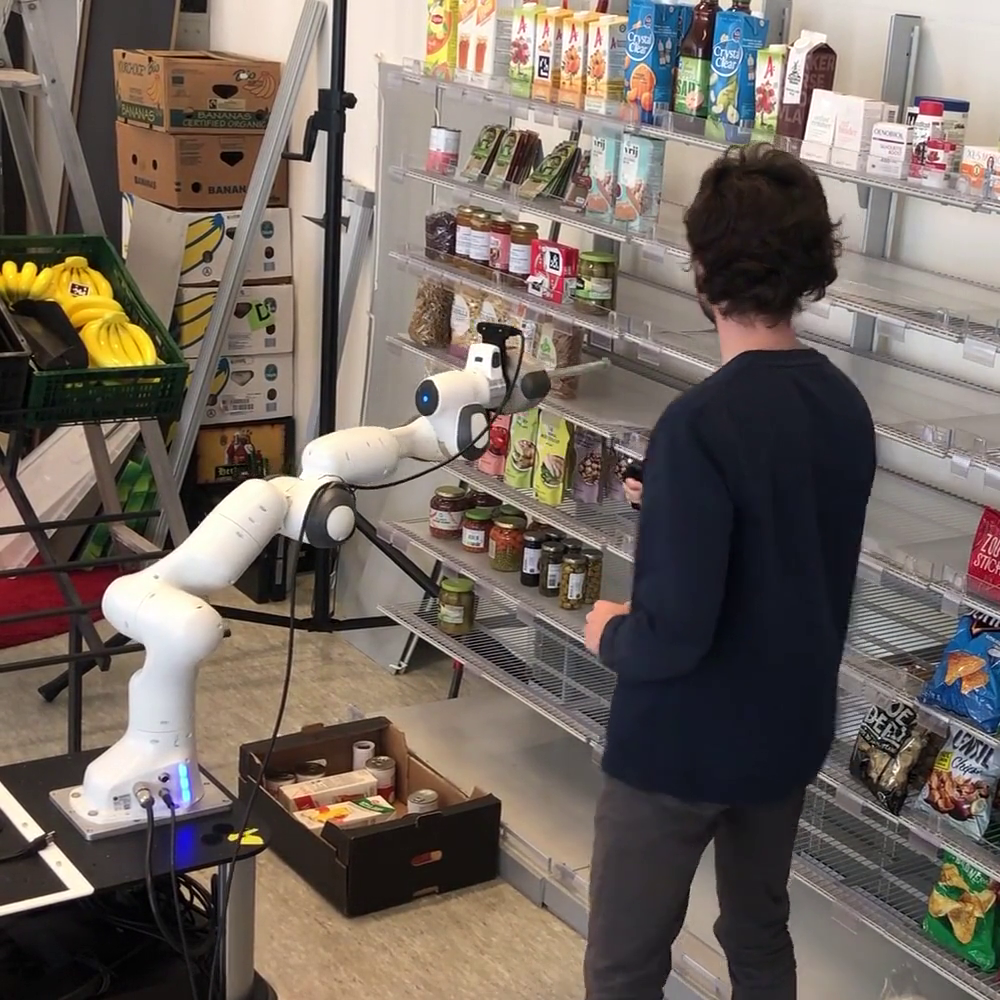}
    \caption{$t=3$s}
    \label{subfig:experiment3_realPanda_example_1}
  \end{subfigure}%
  \caption{\acl{df} in the presence of a human. The human hand's state is estimated with a motion capture
    system. The robot smoothly and in advance avoids the human operator and allows for safe coexistence.
  }%
  \label{fig:experiment6_realPanda}
\end{figure*}

In this section, the performance of optimization fabrics is assessed on various
robotic platforms. Although~\cite{Ratliff2020} suggested performance benefits
over optimization-based methods to local motion planning, no quantitative
comparisons have been presented to this date. 
The scenarios that we have chosen here (especially in the 
first two experiments) are intentionally simple to identify the specific
differences. In the real world experiments, we show the differences
on more dynamic scenarios, where the limited frequency of a global
planning method, such as RRT, justifies the need for a local planning method.
To give a general idea of the
performance differences between \ac{sf} and receding-horizon trajectory
optimization, we compare the performance of an \ac{mpc} formulation, adapted
from~\cite{Spahn2021}, with \ac{sf}, as proposed in~\cite{Ratliff2020}.  The
second experiment compares performance between \ac{sf} and \ac{df} for
trajectory following tasks. In the third experiment, moving obstacles are added
to the scene to form a dynamic environment. Our extension to non-holonomic
systems is tested in the fourth experiment. Then, everything is combined in
an experiment with a differential drive mobile manipulator.
Finally, we present a possible application of a robot sharing the environment
with a human.
The experiments described here are supported by videos accompanying this paper.

\subsection{Settings \& performance metrics}%
\label{sub:settings}

We present a detailed analysis of the experimental results for two commonly used setups,
namely the \textit{Franka Emika Panda}, a \textit{Clearpath Boxer}, and a mobile manipulator
composed of both components
see~\cite{Spahn2021}. Note, that these
robots are representative of commonly used robots in dynamic environments. The
Franka Emika Panda is a 7 degree-of-freedom robot with joint torque sensors, comparable to the
Kuka Iiwa. Mobile manipulators equipped with differential drives are widely used by other
manufacturers, see Pal Robotics Tiago or the Fetch Robotics Mobile Manipulator.

Compared to~\cite{Cheng2018}, we propose a more extensive list of metrics.
With regards to safety, we measure the \textbf{Clearance}, the minimum distance
between the robot and any obstacle along the path.
For static goals, solver planner performance is measured in terms of
\textbf{Path Length}, euclidean length of the end-effector trajectory, and
\textbf{Time-to-Goal}, time to reach the goal. For trajectory following tasks,
this measure\ is replaced by \textbf{Summed Error}, the normed sum of deviation
from the desired trajectory. Computational costs are measured by the average
\textbf{Solver Time} in each time step. Most important, binary success is
measured by the \textbf{Success Rate}, where failure indicates that either the
goal was not reached or a collision occurred during execution. Performance
metrics are only evaluated if the concerned motion generator succeeded.
More information on the testbed can be found in \cite{spahn2022local}.
 
In static, industrial environments the time to reach the goal can be considered
the one single most important metric, but we argue that dynamic environments
require a more nuanced performance evaluation and thus a set of metrics.
Intentionally, we do not give general weights to the individual metrics, as
their corresponding importance highly depends on the application. As a
consequence, we tuned the compared planners in such a way that they reach the
goal in a similar time. Note that the general speed for all planners compared
in this article can be adjusted by choosing a different parameter setup.

As this work does not focus on obstacle detection, we simplify
obstacles to spheres. Thus, we assume that an operational perception
pipeline detects obstacles and constructs englobing spheres.
The experiments are randomized in either the location of the obstacles, the location of the
goal, the initial configuration, or in a combination of all three aspects.
For every experiment, the type and level of randomization are stated.

\subsection{Experiment 1: Static fabrics vs. \ac{mpc}}%
\label{sub:experiment_1_static_fabrics_vs_mpc}

In the first experiment, we compare the performance of an \ac{mpc}
formulation with \ac{sf}
\cite{Ratliff2020,Wyk2022}.
Compared to the formulation used in \cite{Spahn2021}, we use a workspace goal
rather than a configuration space goal, and apply a second order integration
scheme so that the control outputs are accelerations instead of velocities. We
clarify that the formulation deployed for the following tests is geometric as
the model used is a second order integrator and does not include the dynamics
of the robots. The main reason lies in the reduced computational costs and the
inaccessibility of a highly accurate model \cite{meo2021adaptation}.

\paragraph{Parameters}
The low-level controller of the robot runs at $1$ kHz. The fabrics are running at $100$ Hz
and the \ac{mpc} at $10$ Hz. The time horizon for the \ac{mpc}
planner was set to $T=3$s spread equally over
$H=30$ stages. Based on the findings in~\cite{Spahn2021}, we are confident that the \ac{mpc}
planner is close to its optimal settings. Moreover, we used the implementations
by~\cite{forcespro,forcesnlp}, which are reported to have improved performance over
open-source libraries like \textit{acado}.

\begin{figure}[h]
  \centering
  \begin{subfigure}{0.4\linewidth}
    \centering
    \includegraphics[height=4cm]{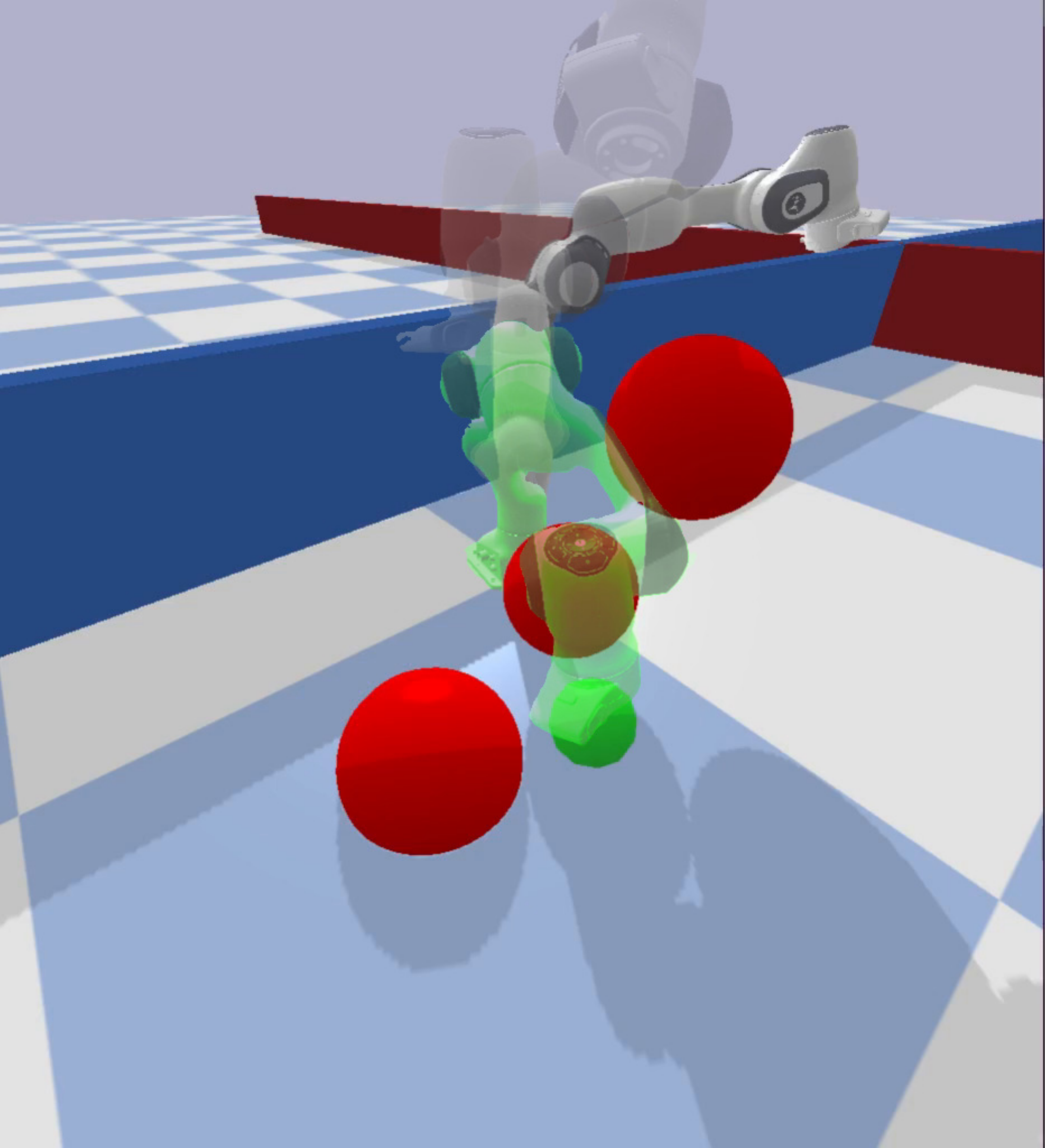}
    \caption{}%
    \label{subfig:experiment1_example}
  \end{subfigure}%
  \begin{subfigure}{0.6\linewidth}
    \centering
    \includegraphics[height=4cm]{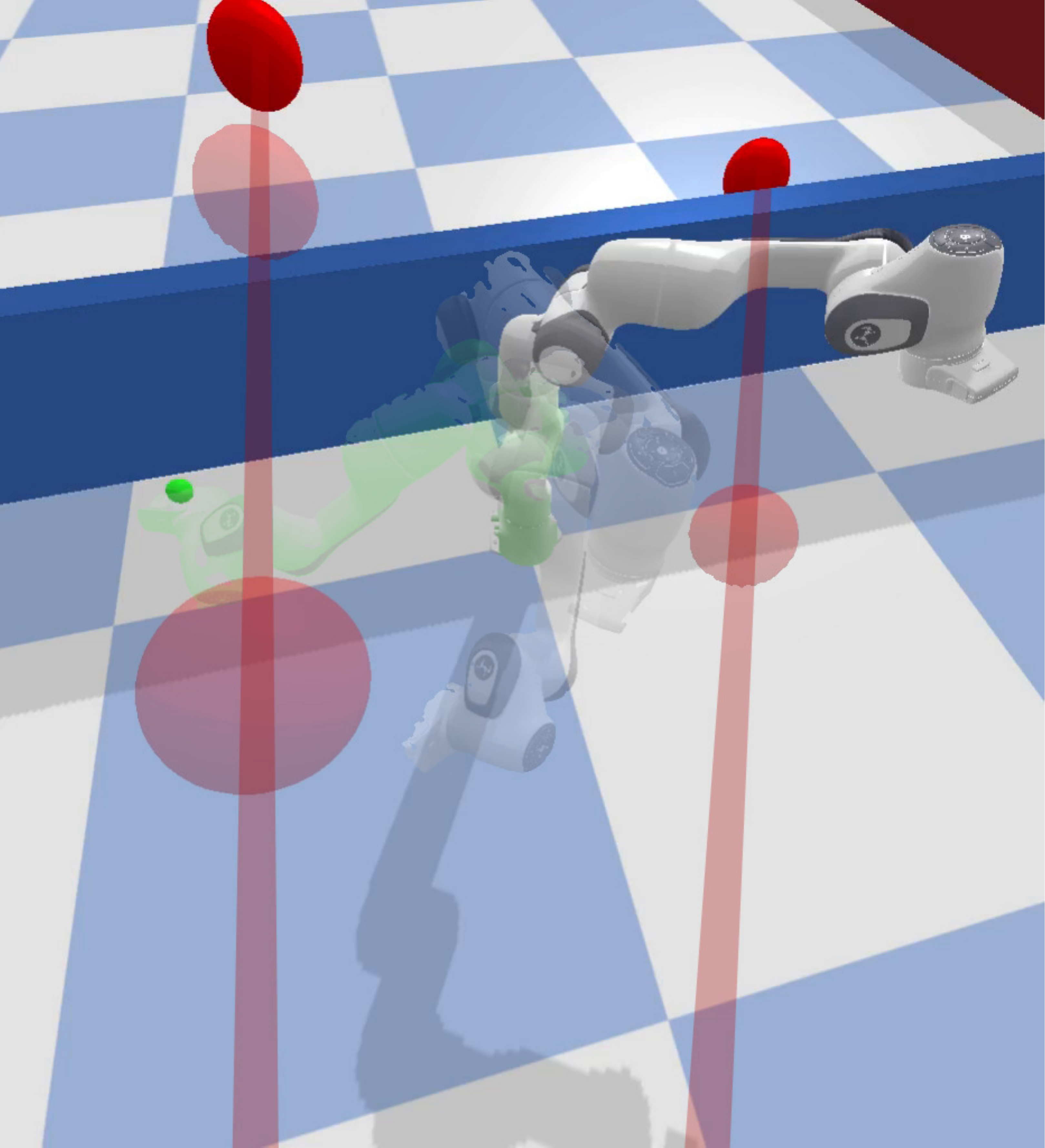}
    \caption{}%
    \label{subfig:experiment3_example}
  \end{subfigure}
  \caption{Examples for simulation setups with \panda{}.
    Initial configuration are shown in white and final configurations in light green.
    Obstacles are visualized in red. In (a), only static obstacles are considered. In (b),
    the trajectories of two moving obstacles are visualized in light red.
  }%
  \label{fig:experiments_example}
\end{figure}

\paragraph{Simulation}
A series of $N=50$ runs was evaluated with the \panda{} in simulation. Randomized
end-effector positions were set for every run, while the initial configuration
remained unchanged. One to five spherical obstacles of
radius $r=0.15$ m were
placed in the workspace at random. An example setup is shown in
\cref{subfig:experiment1_example}. The results are summarized in 
\cref{fig:experiment1_simPanda}. Solver times with fabrics averaged at $1$ ms while the
\ac{mpc} solver took around $50$ ms in every time step. Although the path length is similar
with both solvers, the minimum clearance from obstacles is increased with \ac{sf}
($0.183$ m) compared to \ac{mpc} ($0.138$ m). This means that the trajectories are
safer and thus more suitable for dynamic environments.
Both motion generation methods fail in 6 cases. However, the \ac{sf} produce only
one collision while \ac{mpc} creates 5 collisions. The remaining failures are deadlocks.
For both methods, deadlocks result from local minima, highlighting the 
need for supportive global plans. Collisions are caused
by numerical inaccuracies, which are generally higher with \ac{mpc} due
to the lower frequency.

\begin{figure}[h]
  \centering
  \begin{subfigure}{1.0\linewidth}
    \centering
    \includegraphics[angle=-90,width=\textwidth]{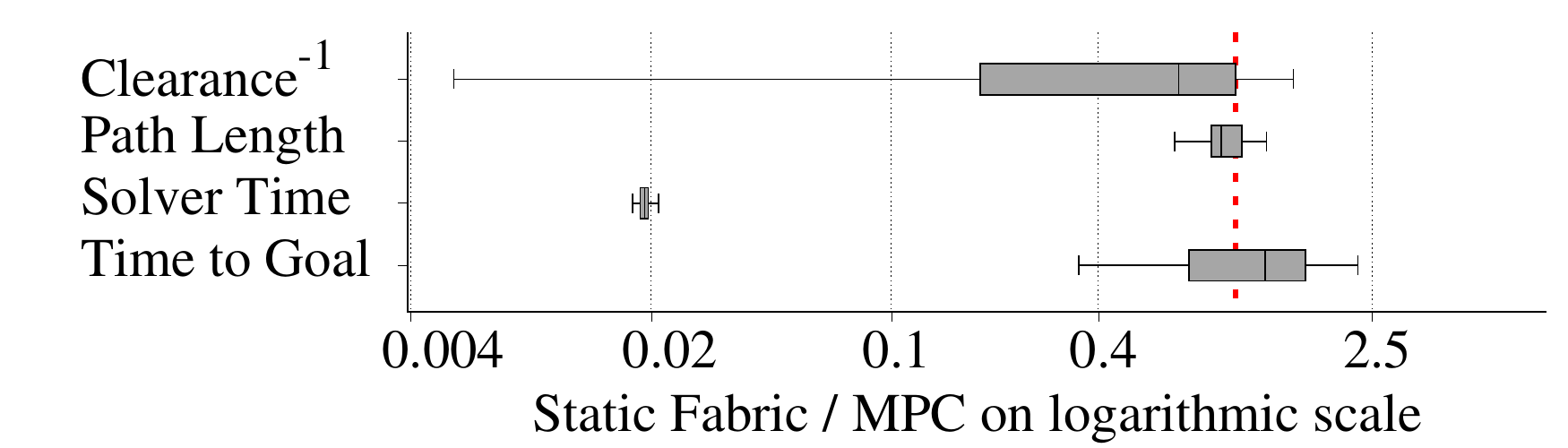}
    \caption{Metrics evaluation for successful experiments}%
    \label{subfig:experiment1_simPanda_res}
  \end{subfigure}
  \begin{subfigure}{1.0\linewidth}
    \centering
    \includegraphics[angle=-90,width=\textwidth]{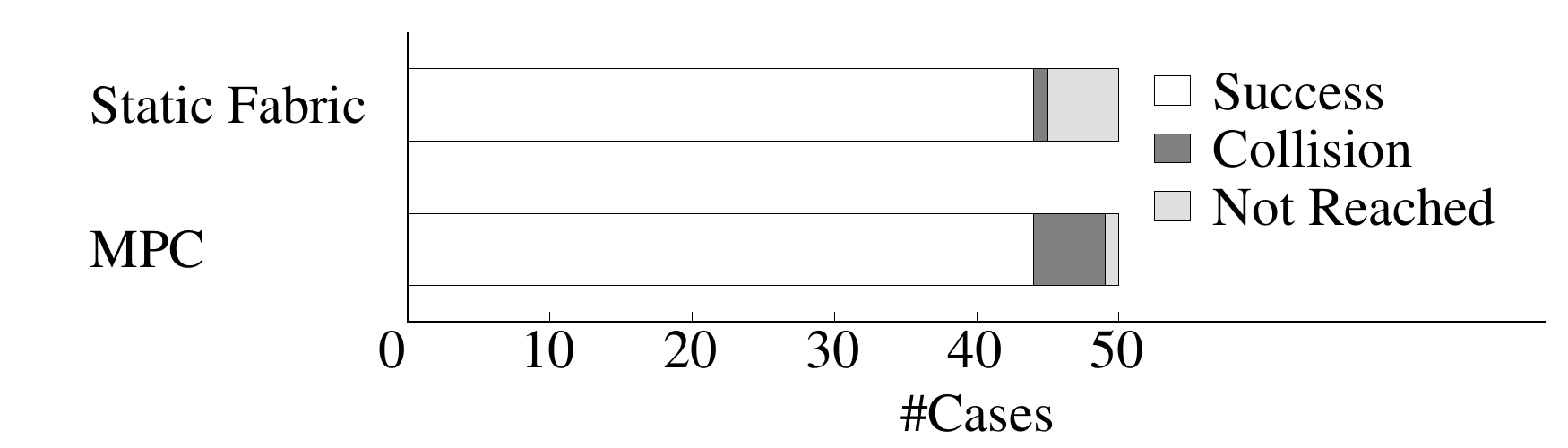}
    \caption{Success results}%
    \label{subfig:experiment1_simPanda_success}
  \end{subfigure}
  \caption{Results for randomized motion planning problems with the \panda{} in simulation.
    Lower values represent an improved performance of \ac{sf} over MPC.
  }%
  \label{fig:experiment1_simPanda}
\end{figure}

\paragraph{Real World}
For the experiments with the real robot, we limited the number of test runs to $N=20$. In
contrast to the simulated results, \ac{mpc} has significantly more collisions than
\ac{sf}, \cref{subfig:experiment1_realPanda_success}.  This is likely to be caused by the
lower frequency at which the \ac{mpc} is running. While in simulation the model matches
the actual behaviour perfectly and the time interval between two computations can be
accuratly predicted, more uncertainty in the model is present in the real world. This
leads to prediction errors that cause collisions.  For the collision free cases, the real
world experiments confirm that optimization fabrics tend to be more conservative with
respect to obstacles, see \textit{Clearance} in \cref{subfig:experiment1_realPanda_res}.
Similar to the simulated results, the solving time is reduced by a factor of around $50$
with fabrics. This
allows to run the planner at a higher frequency and thus generating smoother motions.

\begin{figure}[h]
  \centering
  \begin{subfigure}{\linewidth}
    \centering
    \includegraphics[angle=-90,width=\textwidth]{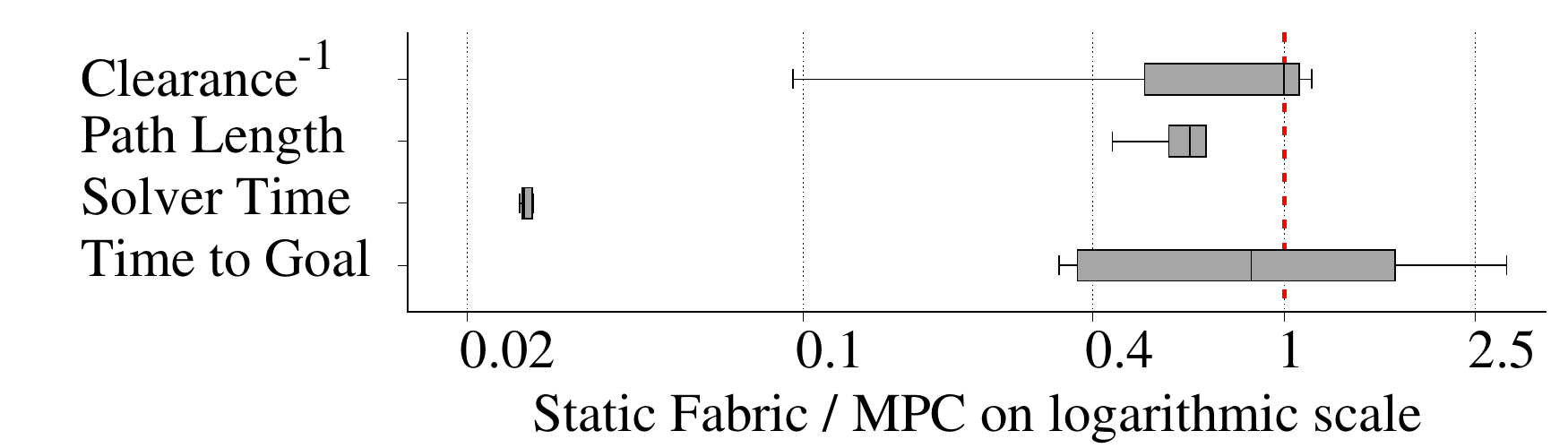}
    \caption{Metrics evaluation for successful experiments}%
    \label{subfig:experiment1_realPanda_res}
  \end{subfigure}
  \begin{subfigure}{\linewidth}
    \centering
    \includegraphics[angle=-90,width=\textwidth]{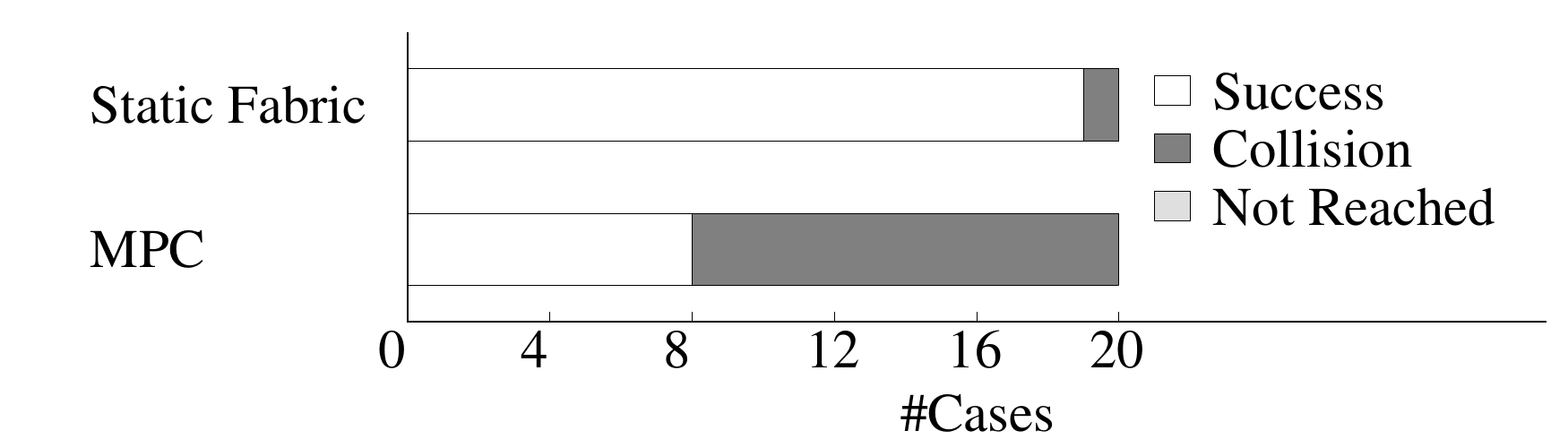}
    \caption{Success results}%
    \label{subfig:experiment1_realPanda_success}
  \end{subfigure}
  \caption{Results for randomized motion planning problems with the real \panda{}.
    \ac{sf} are more conservative around obstacles, improving on safety, while
    reducing the computational cost by a factor of $\approx$ 50.
    As a result of the increased clearance, collisions are more reliably avoided with
    \ac{sf}.
  }%
  \label{fig:experiment1_realPanda}
\end{figure}

\paragraph{Discussion}
The difference in performance (except for solver time) is likely caused by
the different objective metrics. The objective function in the \ac{mpc}
formulation is mainly governed by the Euclidean distance to the goal while
control inputs and velocity magnitude are given a relative small weight.
Avoidance behaviors, such as joint limit avoidance and obstacle avoidance,
are respected through inequality constraints. In contrast, \ac{sf} design the
objective in a purely geometric manner including all avoidance behaviors.
Thus the manifold for the motion is directly altered by the avoidance
behaviors, i.e., the manifold is \textit{bent} \cite{Ratliff2020} so that
the notion of shortest path changes with the addition of obstacles. This
shaping of the manifold leads to improved canvergence compared to the
combination of Euclidean distance objective function and inequality
constraints used with \ac{mpc}.

\subsection{Experiment 2: Static fabrics vs. Dynamic fabrics}%
\label{sub:experiment_2_static_fabrics_vs_dynamic_fabrics}

In motion planning for dynamic environments, global and local planning methods work together
to achieve efficient and safe motion of the robot. However, \ac{sf}
are not designed to follow global paths. Path
following can only be achieved using a pseudo-dynamic approach where the forcing potential
is shifted in every time step without considering the dynamics of the trajectory.
Therefore, we
propose \ac{df} to allow smoother path following tasks, where the speed of
the goal is also considered during execution. 
In this second experiment, we investigate
how \ac{df} compare to \ac{sf} for path following tasks.
Specifically, we show that \ac{df} outperform \ac{sf} when
following a path generated by a global planner.

\paragraph{Simulation}
We evaluated \ac{df} on the \panda{} robot in simulation with an
analytic, time-parameterized curve and a path generated by a global planner, namely RRT
(\cref{fig:experiment2_simPanda_spline_example}).
In the case of the analytic trajectory, the three obstacles were
randomized across all runs. For the experiment with the global planner, the goal position and 
the obstacles were randomized across all runs.
A total of $N=50$ experiments were executed for this
experiment. The reduced summed error for dynamic fabrics verifies the
theoretical finding that dynamic fabrics can follow paths more closely.
The average error over all runs with the
analytic trajectory is $0.0792$m (\ac{df})
and $0.136$m (\ac{sf}), see
\cref{subfig:experiment2_simPanda_res_analytic} for the comparison.
For the spline path generated with RRT, the
average error over all runs is $0.145$m (\ac{df}) and $0.240$m (\ac{sf}), see
\cref{subfig:experiment2_simPanda_res_spline} for the comparison.

\begin{figure}[ht]
  \begin{subfigure}{0.5\linewidth}
    \centering
    \includegraphics[width=0.95\textwidth]{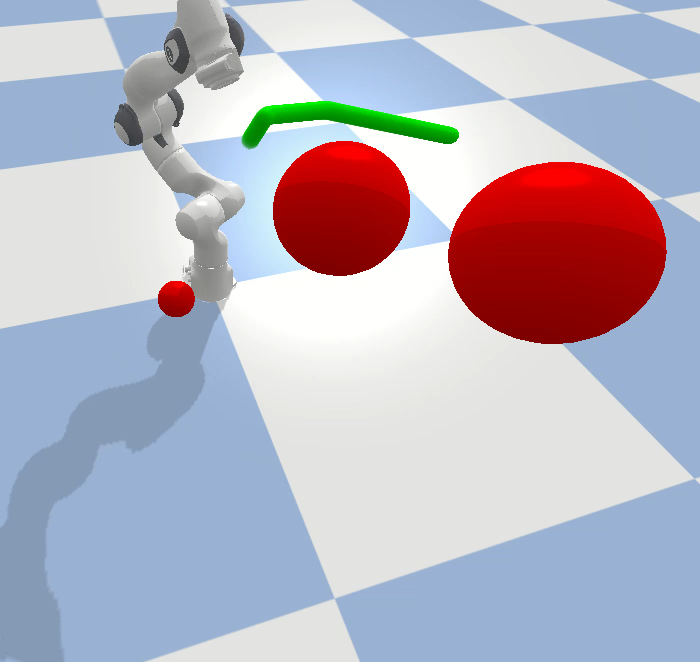}
  \end{subfigure}%
  \begin{subfigure}{0.5\linewidth}
    \centering
    \includegraphics[width=0.95\textwidth]{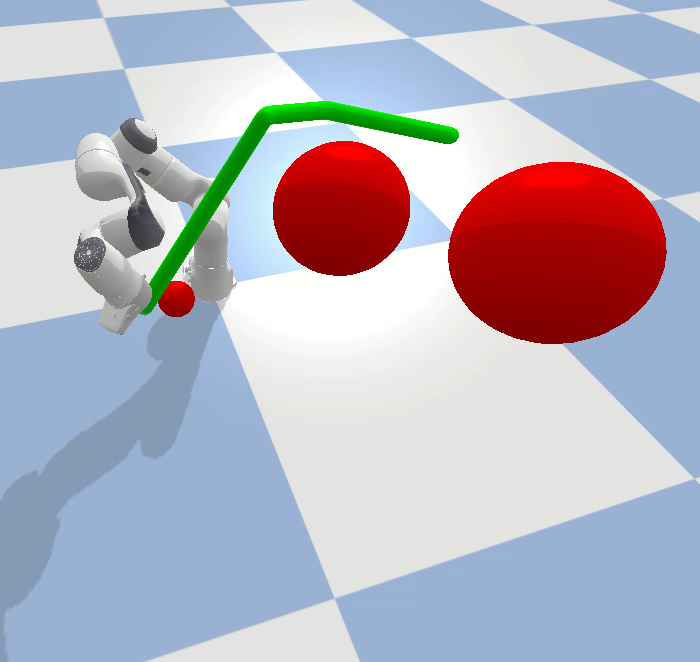}
  \end{subfigure}
  \caption{Path generated with RRT from OMPL.}
  \label{fig:experiment2_simPanda_spline_example}
\end{figure}

\begin{figure}[ht]
  \begin{subfigure}{0.5\linewidth}
    \centering
    \includegraphics[width=0.95\textwidth]{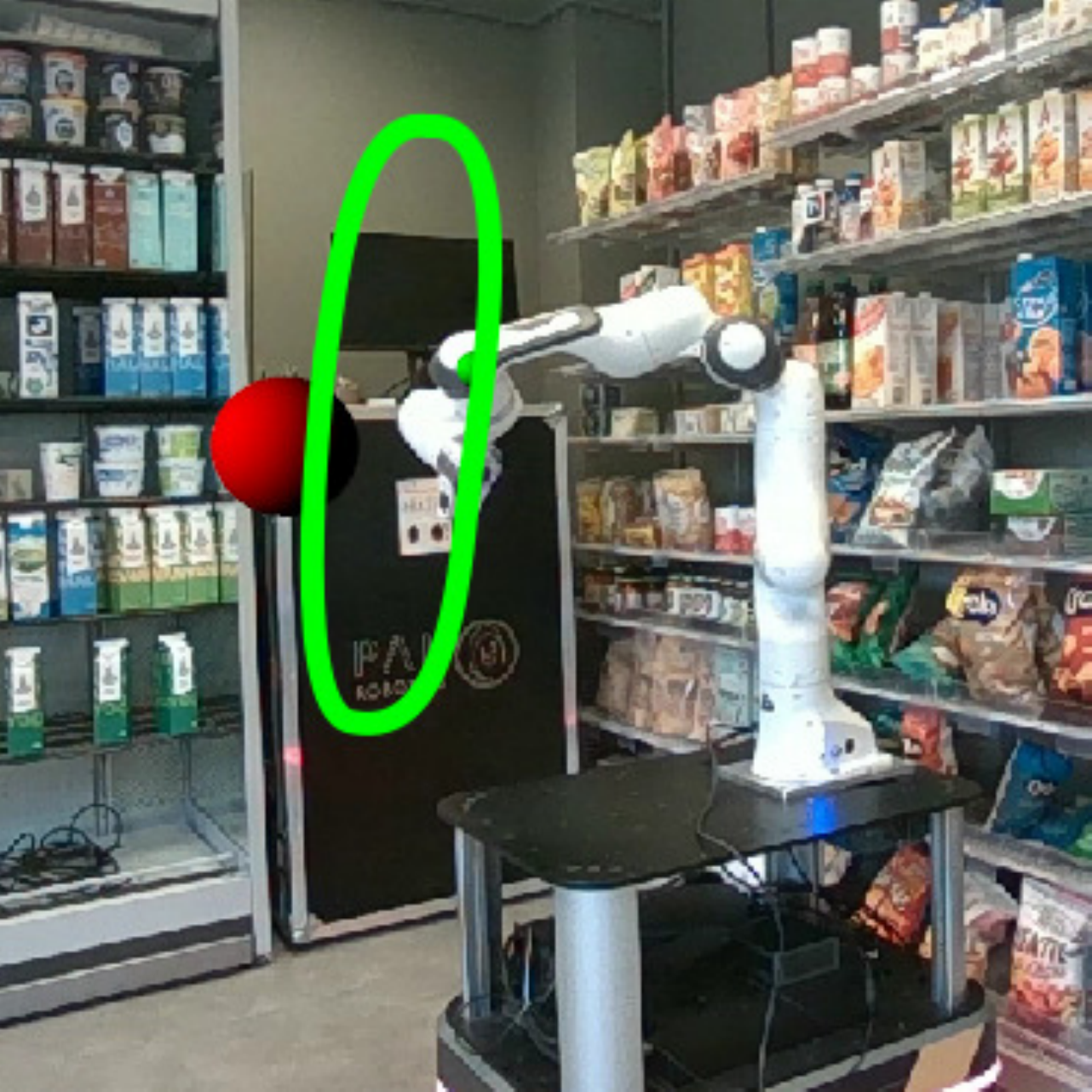}
    \caption{}
  \end{subfigure}%
  \begin{subfigure}{0.5\linewidth}
    \centering
    \includegraphics[width=0.95\textwidth]{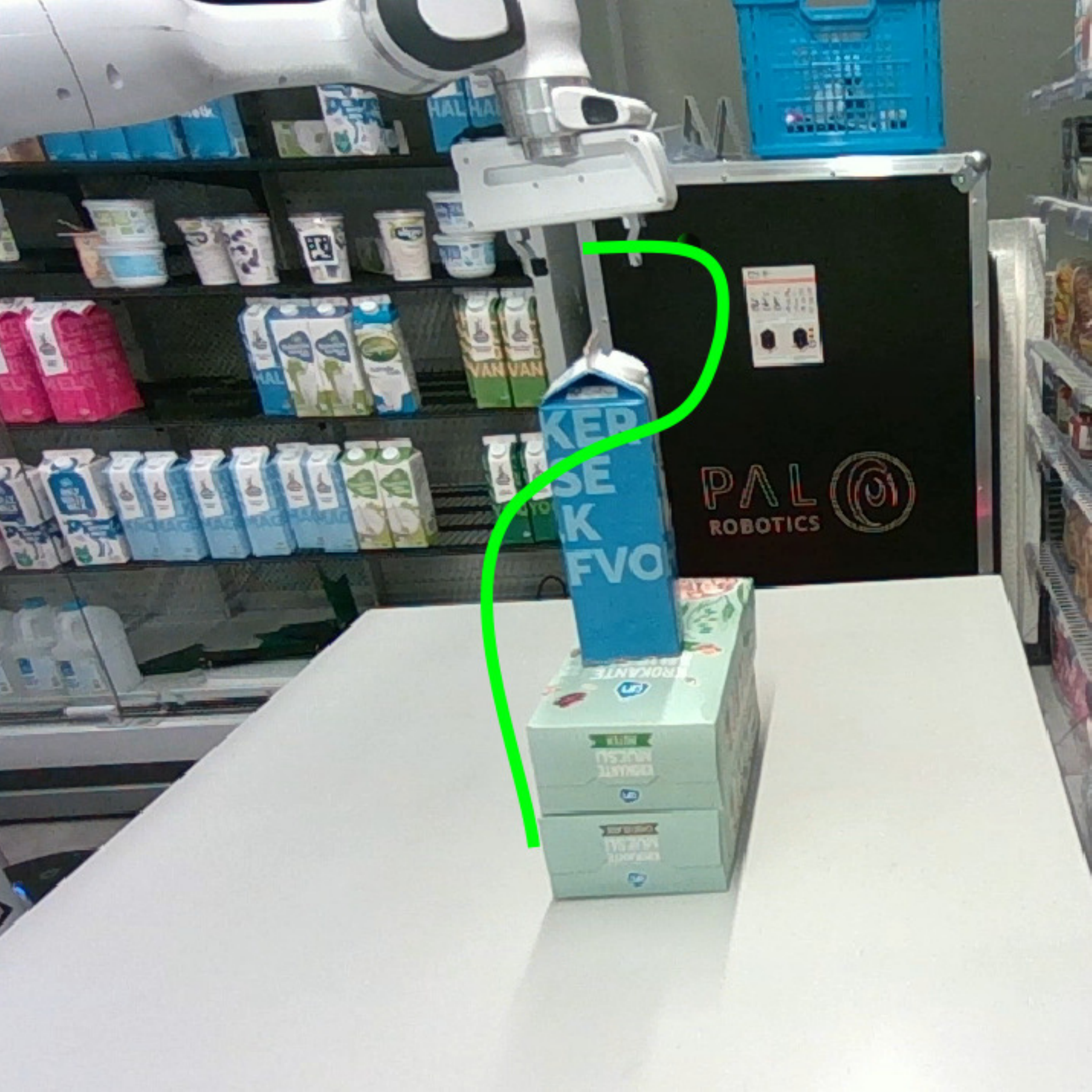}
    \caption{}
  \end{subfigure}
  \caption{Trajectory following tasks with \acl{df}. In (a), the trajectory is a 
  time-parameterized analytic curve. In (b), the trajectory is described by a spline.}%
  \label{fig:experiment2_realPanda_examples}
\end{figure}

\begin{figure}[ht]
  \centering
  \begin{subfigure}{1.0\linewidth}
    \centering
    \includegraphics[angle=-90,width=\textwidth]{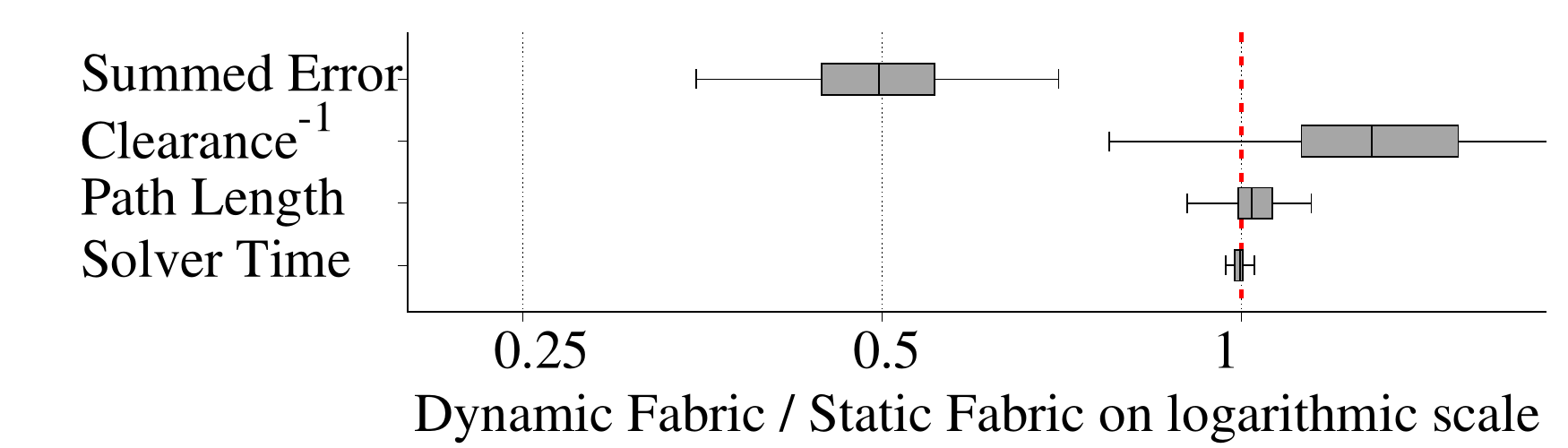}
    \caption{Analytic, user-specified global path}%
    \label{subfig:experiment2_simPanda_res_analytic}
  \end{subfigure}
  \begin{subfigure}{1.0\linewidth}
    \centering
    \includegraphics[angle=-90,width=\textwidth]{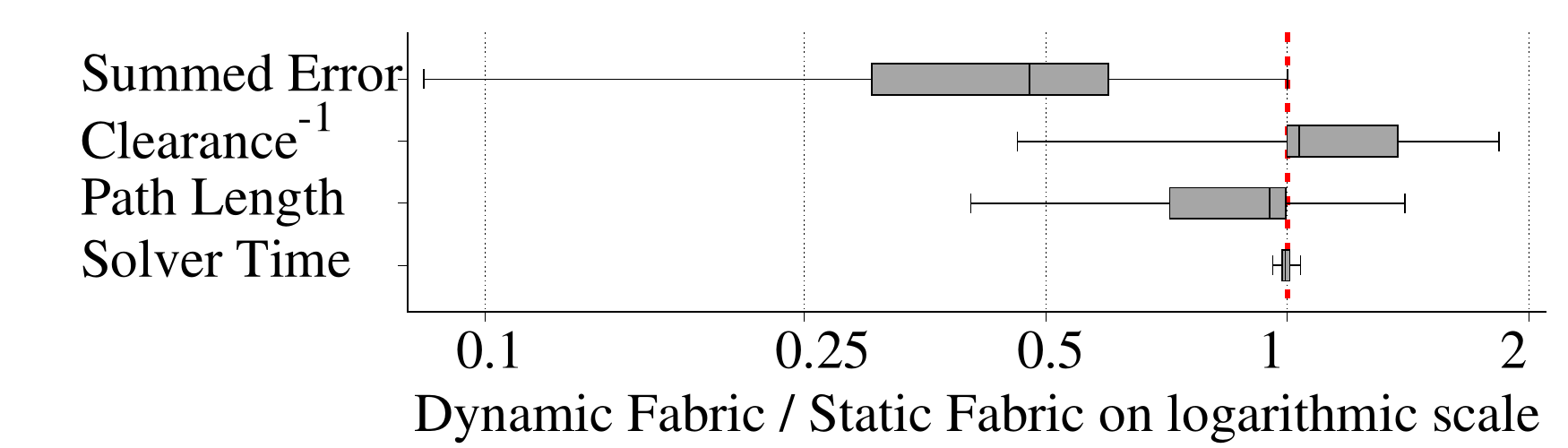}
    \caption{Global path generated by RRT using OMPL}%
    \label{subfig:experiment2_simPanda_res_spline}
  \end{subfigure}
  \caption{Comparison between static and dynamic fabrics for trajectory following tasks
    in simulation. Lower values in a metric indicate that \ac{df} performed better than
    \ac{sf}.
  }%
  \label{fig:experiment2_simPanda}
\end{figure}

\paragraph{Real-World}
Path following was also assessed with the real \panda{} in similar settings.
Quantitative results are only presented for $N=20$ different paths with splines
where up to three obstacles were added to the workspace, see \cref{fig:experiment2_realPanda_examples}. The results in real-world confirm the 
findings from the simulation. By exploiting the velocity information of the 
trajectory, the integration error can be effectively
reduced, \cref{fig:experiment2_realPanda_res}. In contrast to the simulation
we see a higher fluctuation in solver times, which can be caused by a generally lower capacity 
of the computing unit on the robot.

\begin{figure}[ht]
    \centering
    \includegraphics[angle=-90,width=\linewidth]{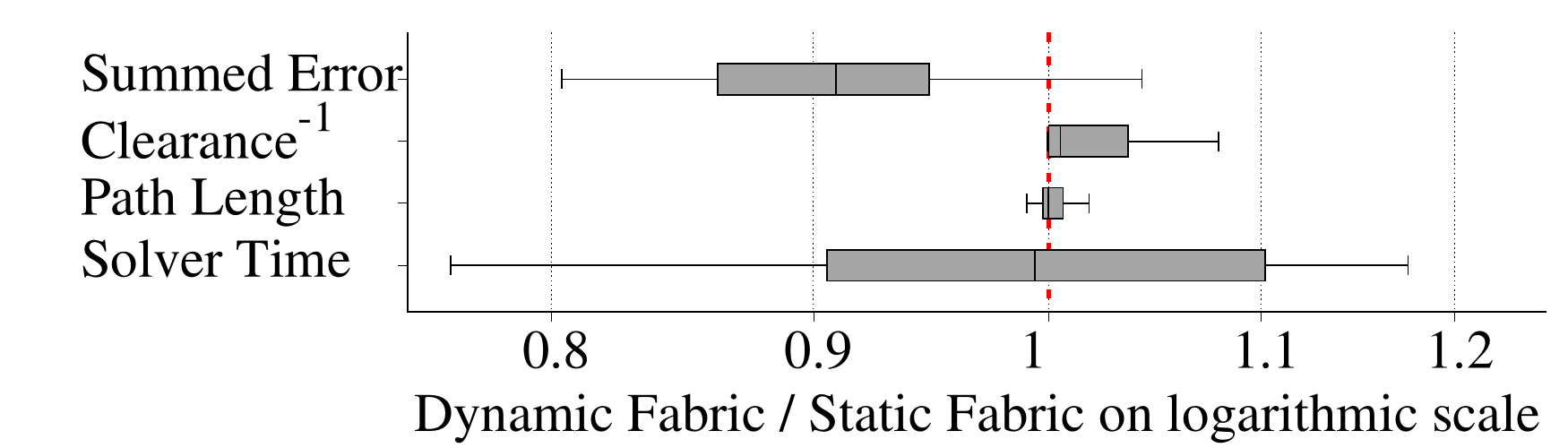}
    \caption{Comparison between \ac{sf} and \ac{df} when following a path defined
      by a basic spline in the real world. 
      The splines and the obstacles are different for the $N=20$ case.
      \ac{df} achieve lower deviation errors that \ac{sf}.
    }%
    \label{fig:experiment2_realPanda_res}
\end{figure}

\subsection{Experiment 3: Moving Obstacles}%
\label{sub:experiment_3_moving_obstacles}
Next, we compare the different methods in the presence of dynamic obstacles. All
experiments in this section consist of at least one moving obstacle that follows either an
analytic trajectory or a spline. Here, we use stationary goals to isolate the results from
the behavior investigated in the previous section.

\paragraph{Simulation}
For this series with the simulated \panda{}, 
only the goal position was randomized. The initial configuration 
\[
  \q_0 = {[1.0, 0.0, 0.0, -1.5, 0.0, 1.8675]}^T, 
\]
and the two moving obstacles with the trajectories
\begin{align*}
  \xt_{\textrm{obst1}} &= {[-1.0 + 0.1t, -0.4, 0.7]}^T,  \\
  \xt_{\textrm{obst2}} &= {[-1.0 + 0.2t, 1.0 - 0.1t, 0.3]}^T
\end{align*}
were kept constant throughout all experiments. The environment is visualized in
\cref{subfig:experiment3_example}. The comparison between \ac{sf} and \ac{df}
shows that \ac{df} are more conservative in terms of collision avoidance with
dynamic obstacles. Specifically, the distance between the robot and the
obstacles is increased (\cref{subfig:experiment3_simPanda_res}).
The success rate with \ac{df} compared to \ac{sf} is significantly improved,
see \cref{subfig:experiment3_simPanda_success}. Thus showing the need for
using \ac{df} in dynamic environments.
\begin{figure}[h]
  \centering
  \begin{subfigure}{1.0\linewidth}
    \centering
    \includegraphics[angle=-90,width=\textwidth]{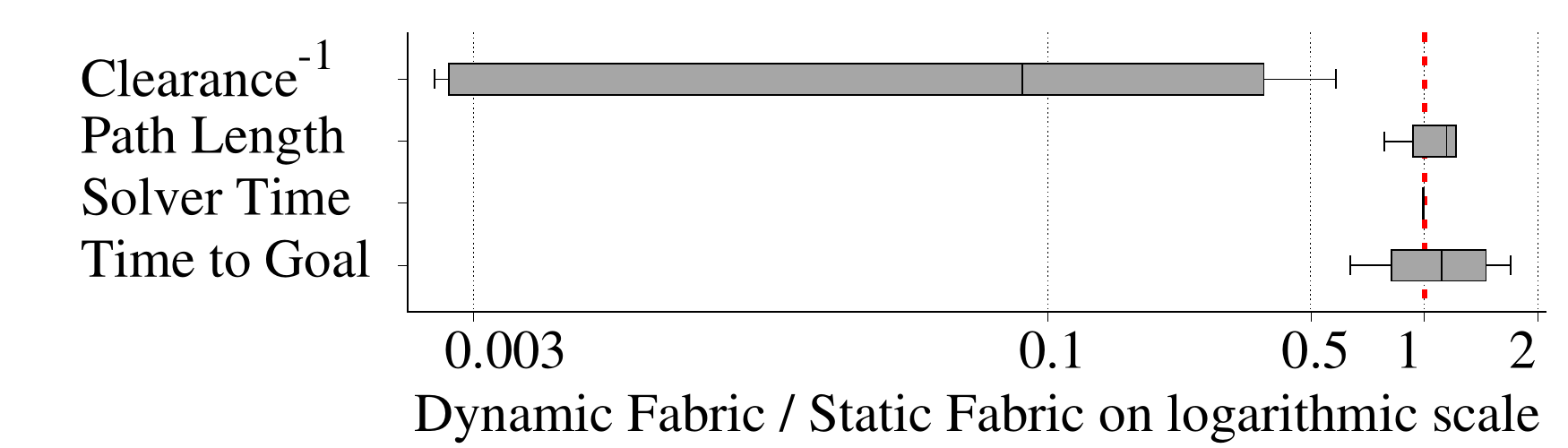}
    \caption{Metrics evaluation for sucessful experiments}%
    \label{subfig:experiment3_simPanda_res}
  \end{subfigure}
  \begin{subfigure}{1.0\linewidth}
    \centering
    \includegraphics[angle=-90,width=\textwidth]{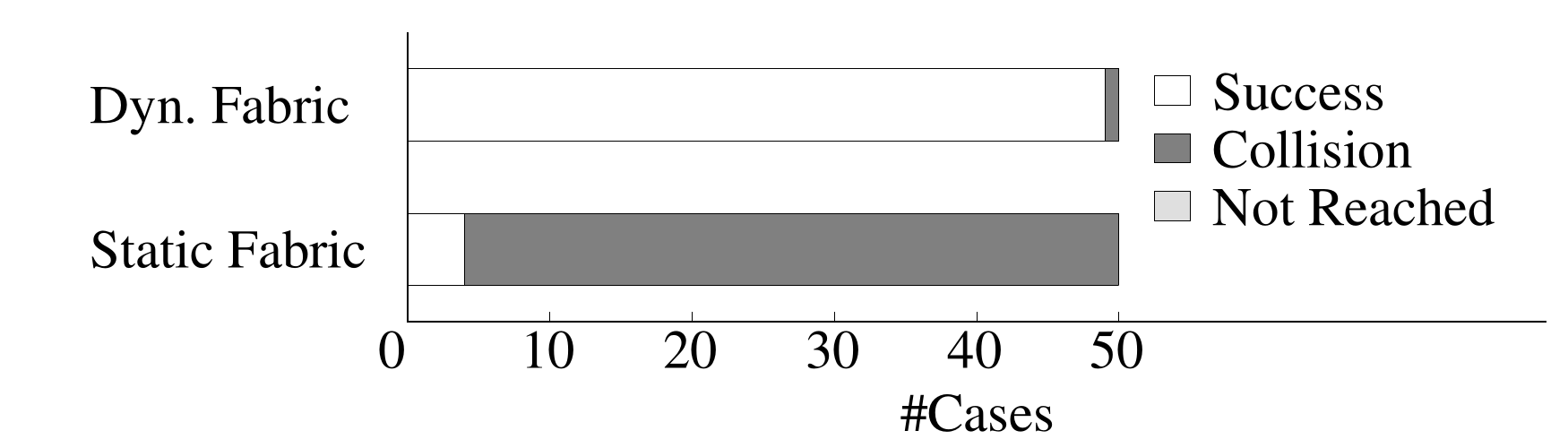}
    \caption{Success results}%
    \label{subfig:experiment3_simPanda_success}
  \end{subfigure}
  \caption{Comparison between \ac{sf} and \ac{df} for scenarios with dynamic obstacles.
    While path length and solver time is not increased, clearance is increased and
    the time to reach the goal is reduced with \ac{df} compared to \ac{sf}.
  }%
  \label{fig:experiment3_simPanda}
\end{figure}
\paragraph{Real-World}
In a series of $N=20$ experiments, performance on the real panda arm was assessed. The same
trend for more conservative behavior with \ac{df} compared to \ac{sf} can be observed,
\cref{fig:experiment3_realPanda}. However, \ac{df} take longer on average to reach the
goal as they keep larger clearance from obstacles. Note that collisions are effectively
eliminated with \ac{df} compared to \ac{sf}.

\begin{figure}[h]
  \centering
  \begin{subfigure}{1.0\linewidth}
    \centering
    \includegraphics[angle=-90,width=\textwidth]{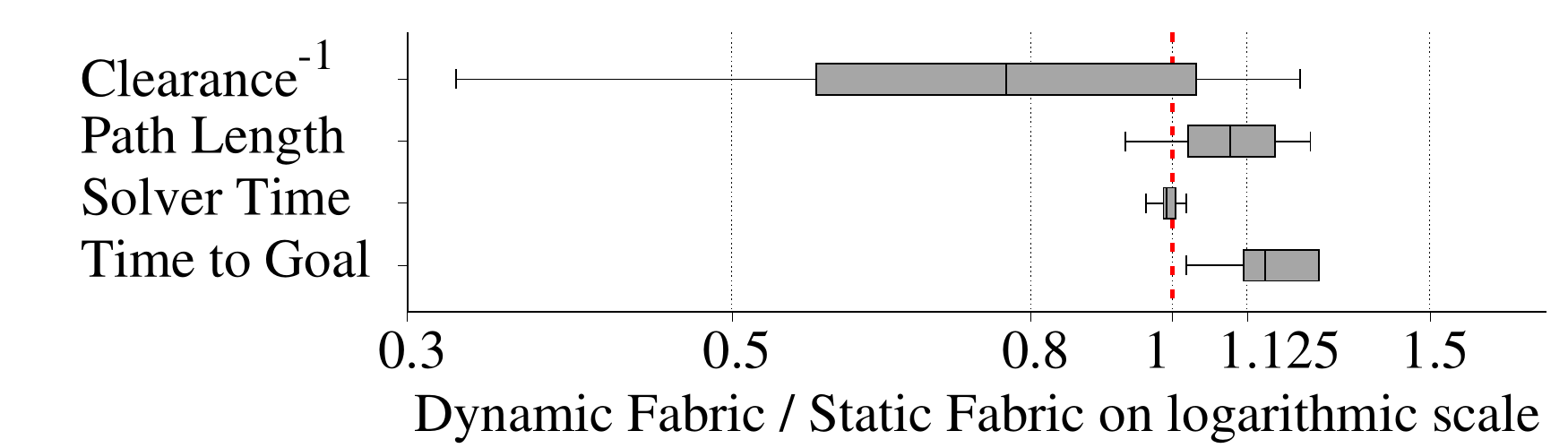}
    \caption{Metrics evaluation for successful experiments}%
    \label{subfig:experiment3_realPanda_res}
  \end{subfigure}
  \begin{subfigure}{1.0\linewidth}
    \centering
    \includegraphics[angle=-90,width=\textwidth]{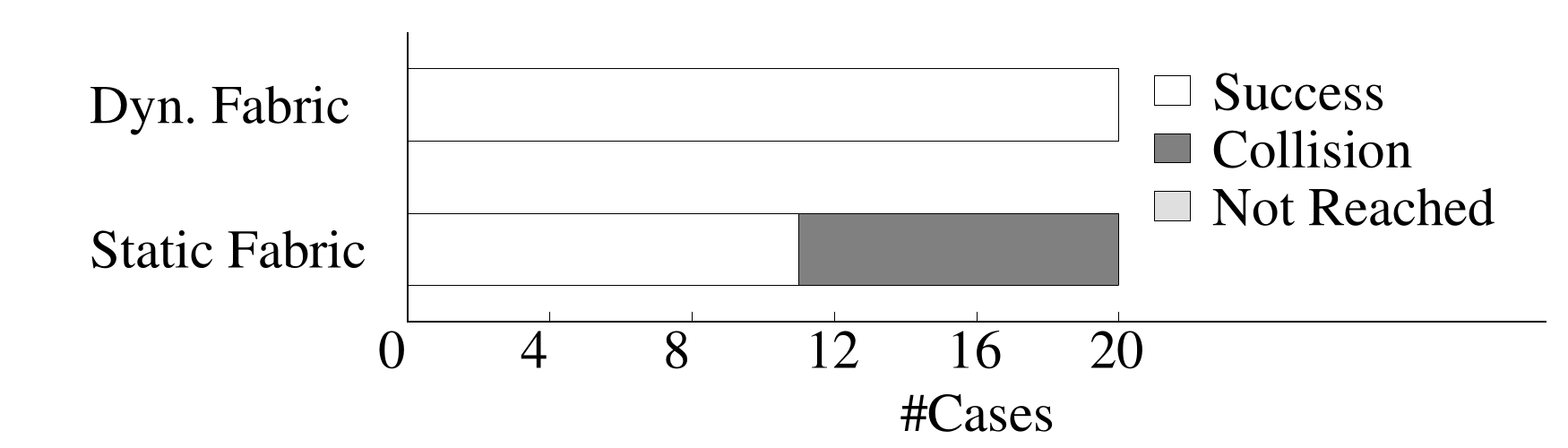}
    \caption{Success results}%
    \label{subfig:experiment3_realPanda_success}
  \end{subfigure}
  \caption{Comparison between \ac{sf} and \ac{df} for real-world scenarios with dynamic obstacles.
  }%
  \label{fig:experiment3_realPanda}
\end{figure}

By investigating one example out of the series, see trajectories in
\cref{fig:experiment3_realPanda_example}, the reason for the large number of collisions
with \ac{sf} can be explained.
Both methods initially drive
the end-effector to the goal position. As the moving obstacle is approaching the robot,
the \ac{df} are starting to react while \ac{sf} are not changing its behavior resulting
in a very sudden motion at around $t=30$s.
\ac{sf} treat moving obstacles as pseudo-static (i.e., the position of the obstacle
is updated at every time step, but the information on its velocity is discarded).  As a
result, the relative velocity between obstacle and robot is only a function of the
velocity of the robot. Geometries and energies for collision avoidance with fabrics are,
by design, a function of this velocity and therefore fail to avoid moving obstacles when
the robot moves slowly or not at all. This behavior is most visible when the goal has
already been reached but an obstacle is approaching. \ac{df} on the other hand
take the velocity of the moving obstacles into account and can therefore avoid them.

\begin{figure}[ht]
  \centering
  \begin{subfigure}{0.5\linewidth}
    \centering
    \includegraphics[width=1.\textwidth]{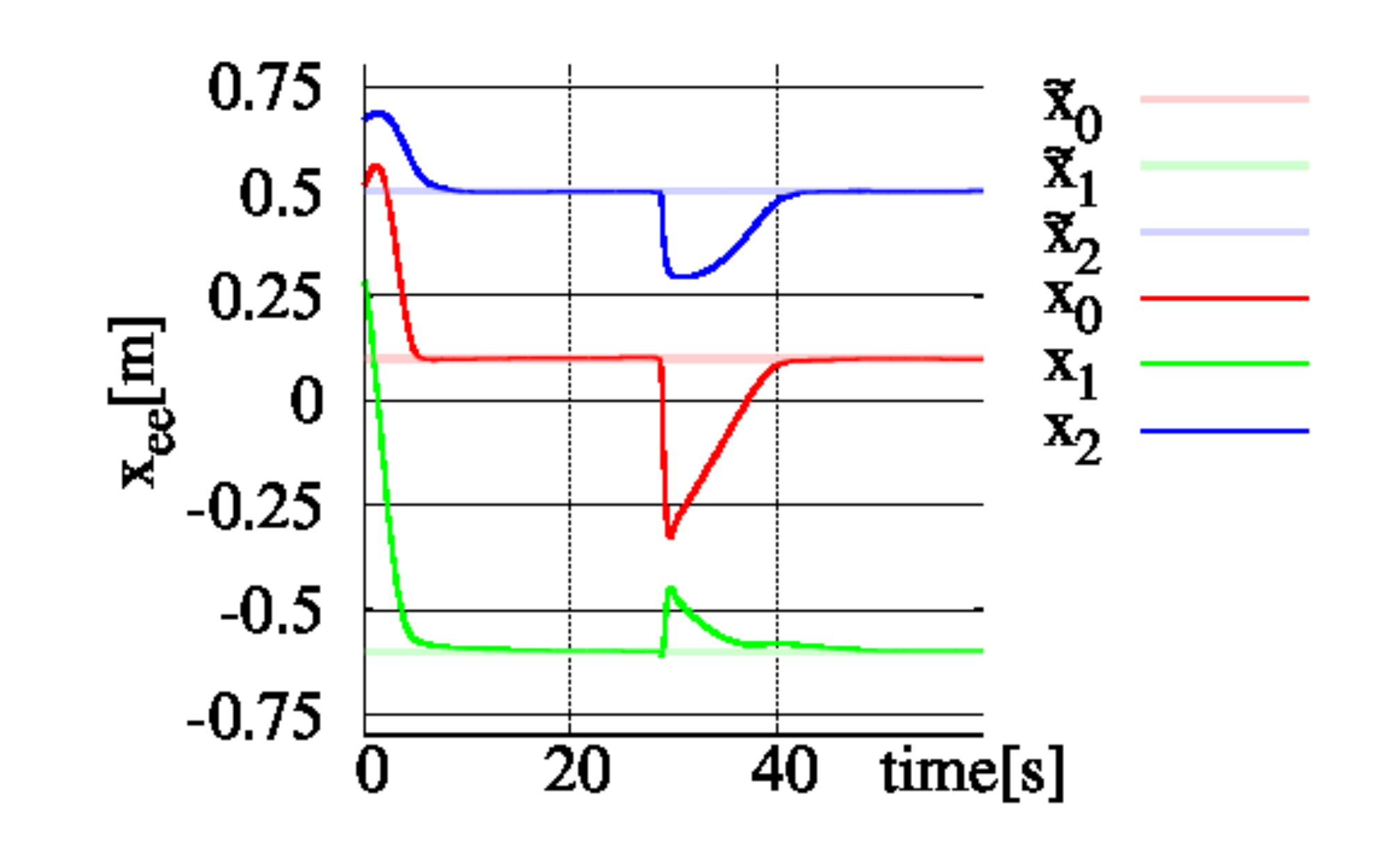}
    \caption{Static Fabric}%
    \label{subfig:experiment3_realPanda_trajectory_static}
  \end{subfigure}%
  \begin{subfigure}{0.5\linewidth}
    \centering
    \includegraphics[width=1.\textwidth]{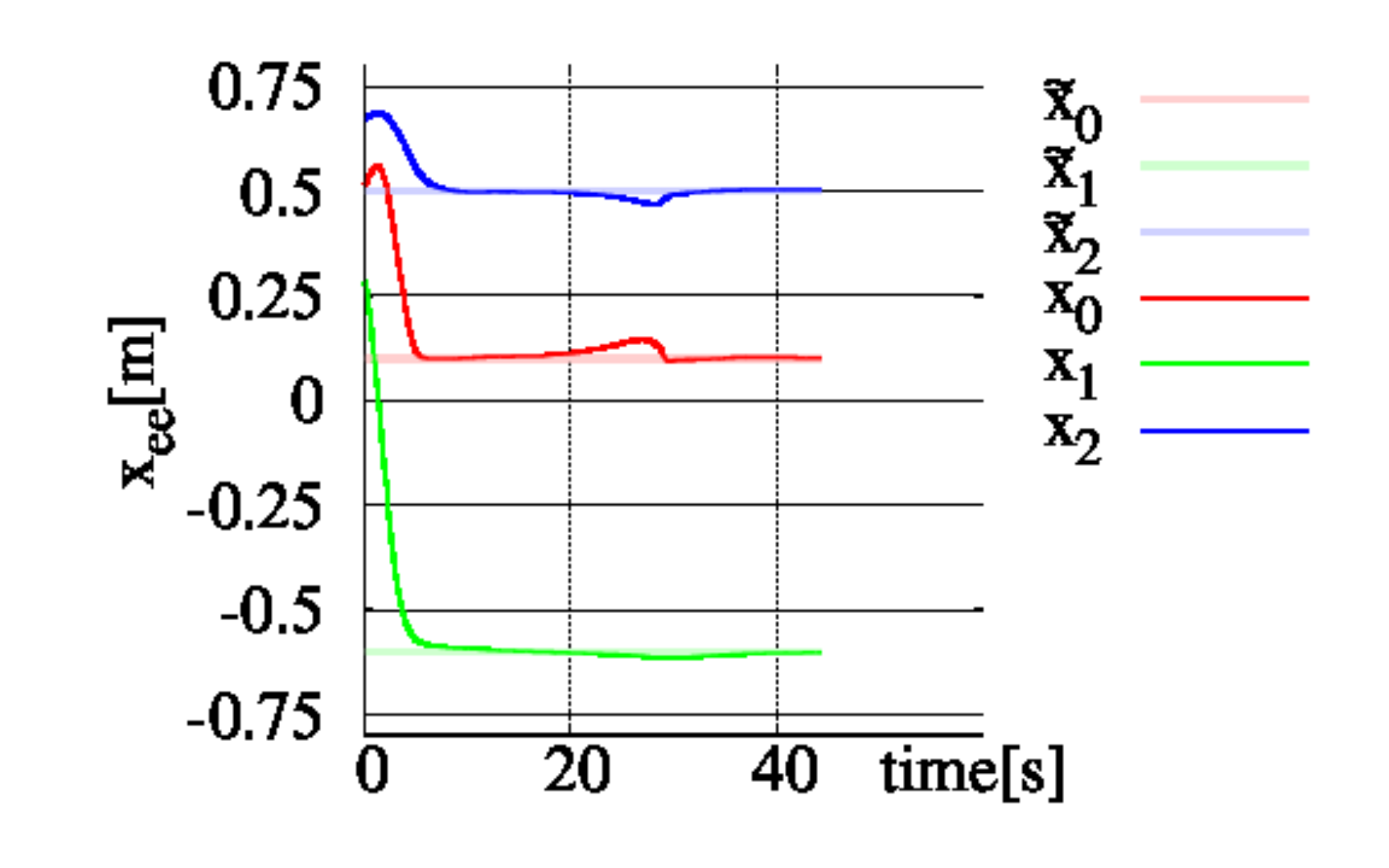}
    \caption{Dynamic Fabric}%
    \label{subfig:experiment3_realPanda_trajectory_dynamic}
  \end{subfigure}
  \caption{Trajectories for real panda robot in the presence of a dynamic obstacle. \ac{df}
    show a smoother and in-advance reaction to the approaching obstacle
    while \ac{sf} can only react in sudden motion.
  }%
  \label{fig:experiment3_realPanda_example}
\end{figure}

\subsection{Experiment 4: Nonholonomic robots}%
\label{sub:experiment_4_nonholonomic_robots}

\paragraph{Simulation}
This experiment assesses the performance of the proposed method to compute
trajectories for non-holonomic robots with fabrics. Specifically, we run
experiments for a \textit{Clearpath Boxer} for position. As
for the first experiment, we compare the performance to \ac{mpc}. In this
experiment, the initial position, the goal location, and the position of five
obstacles were randomized. The results reveal that our extension of
optimization fabrics to non-holonomic robots maintains similar results as with
a robotic arm. Specifically, computational time can be reduced to
optimization-based methods while maintaining good performance in terms of
safety and goal-reaching, \cref{fig:experiment4_simBoxer}. We can also observe
that success rate with \ac{sf} is lower compare to \ac{mpc} due to a high
number of unreached goals.

\begin{figure}[h]
  \centering
  \begin{subfigure}{1.0\linewidth}
    \centering
    \includegraphics[angle=-90,width=\textwidth]{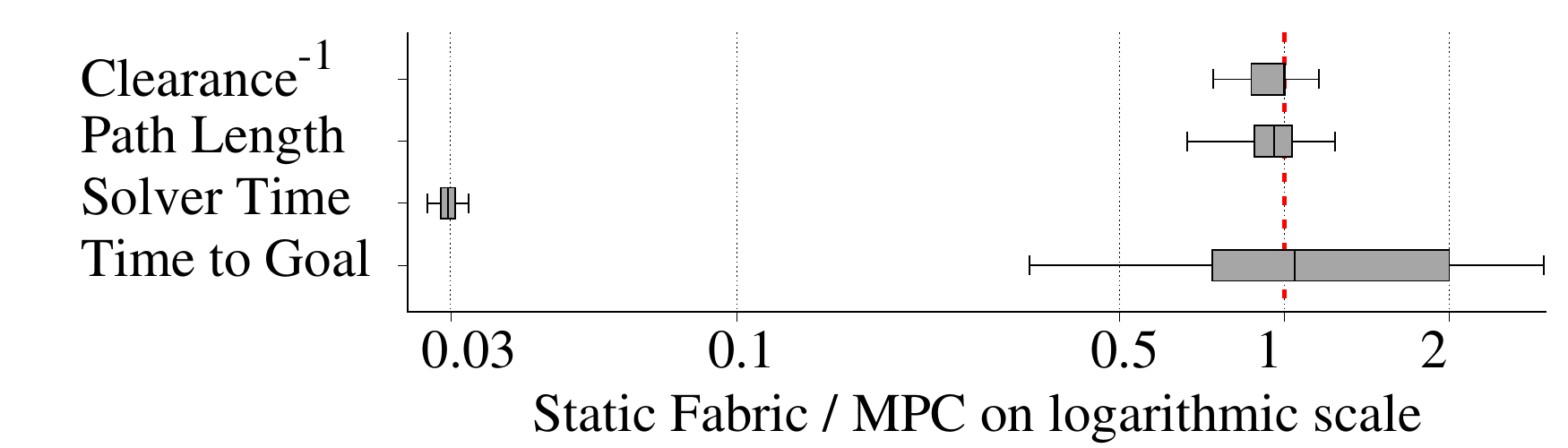}
    \caption{Metrics evaluation for successful experiments}%
    \label{subfig:experiment4_simBoxer_res}
  \end{subfigure}
  \begin{subfigure}{1.0\linewidth}
    \centering
    \includegraphics[angle=-90,width=\textwidth]{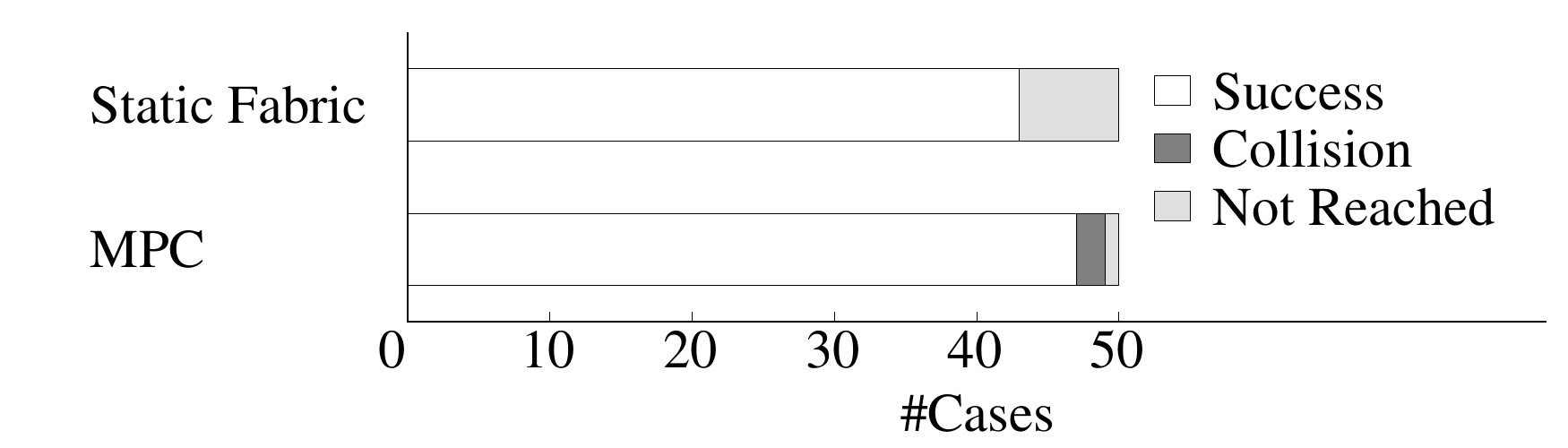}
    \caption{Success results}%
    \label{subfig:experiment4_simBoxer_success}
  \end{subfigure}
  \caption{Results for randomized cased with the Clearpath Boxer robot.
    Similar performance in terms of safety and goal-reaching can be combined with very
    fast computation using optimization fabrics.
  }%
  \label{fig:experiment4_simBoxer}
\end{figure}

\subsection{Experiment 5: Mobile manipulators}%
\label{sub:experiment_5_mobile_manipulators}

In the final experiment, we assess the applicability of \ac{sf} and \ac{df} to a 
non-holonomic mobile manipulator. In an environment that is densely occluded by obstacles,
the motion planning problem is defined by a desired end-effector position and additional
path constraints (e.g. desired orientation of the end-effector).
\paragraph{Simulation}
In simulation, we evaluate the performance of our extension to non-holonomic
mobile manipulators with \ac{sf}. In this series, the positions of 8 obstacles
are randomized for $N=50$ cases. The workspace was limited to a
7mx7m square, so that random obstacles are ensured to be actually hindering the
motion planner. The results reveal that properties shown in the previous
experiments transfer to more complex systems without loss of the computational
benefit, \cref{fig:experiment5_simAlbert_results}. In this series, there were
1 unreached goals and 4 collisions which are, similar
to the previous experiments, caused by local minima. Local minima are more
likely for mobile manipulators as their workspace is larger.
\begin{figure}[h]
  \centering
  \begin{subfigure}{1.0\linewidth}
    \centering
    \includegraphics[angle=-90,width=\textwidth]{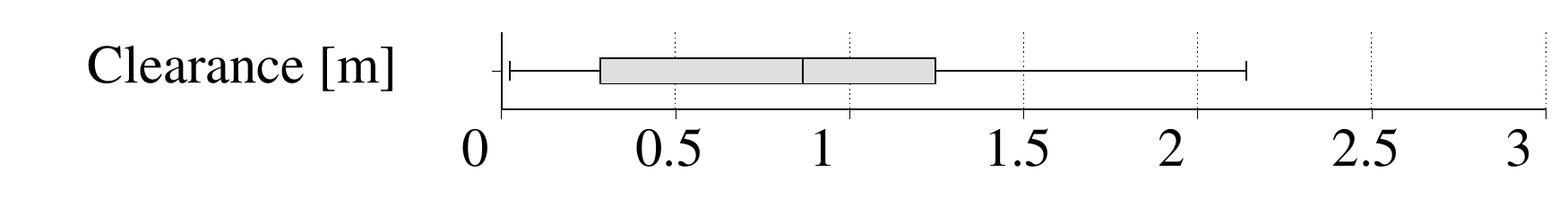}
  \end{subfigure}
  \begin{subfigure}{1.0\linewidth}
    \centering
    \includegraphics[angle=-90,width=\textwidth]{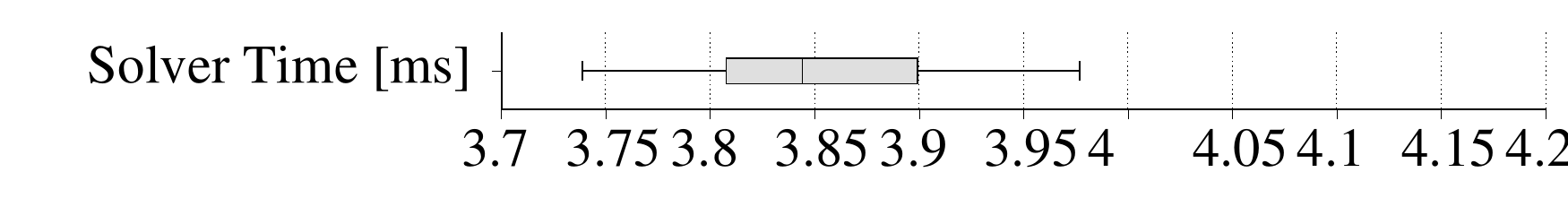}
  \end{subfigure}
  \begin{subfigure}{1.0\linewidth}
    \centering
    \includegraphics[angle=-90,width=\textwidth]{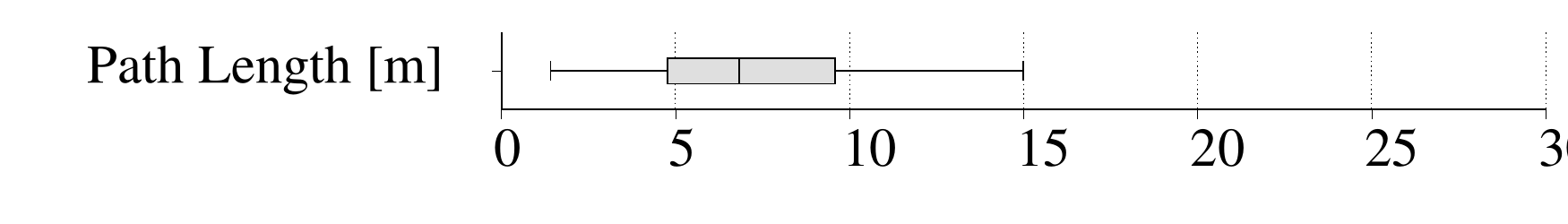}
  \end{subfigure}
  \begin{subfigure}{1.0\linewidth}
    \centering
    \includegraphics[angle=-90,width=\textwidth]{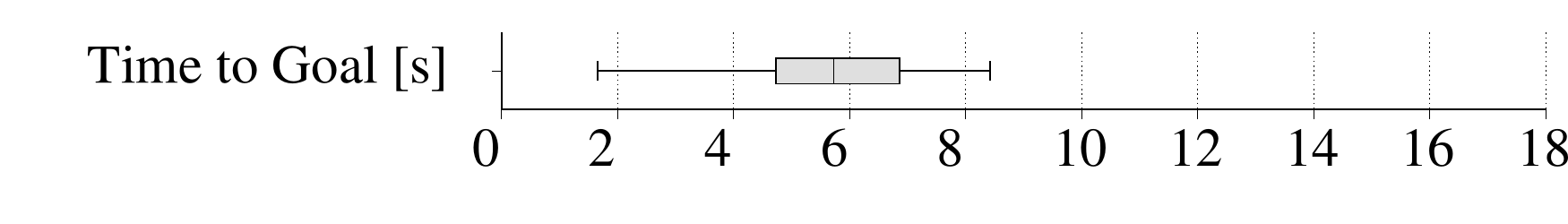}
  \end{subfigure}
  \begin{subfigure}{1.0\linewidth}
    \centering
    \includegraphics[angle=-90,width=\textwidth]{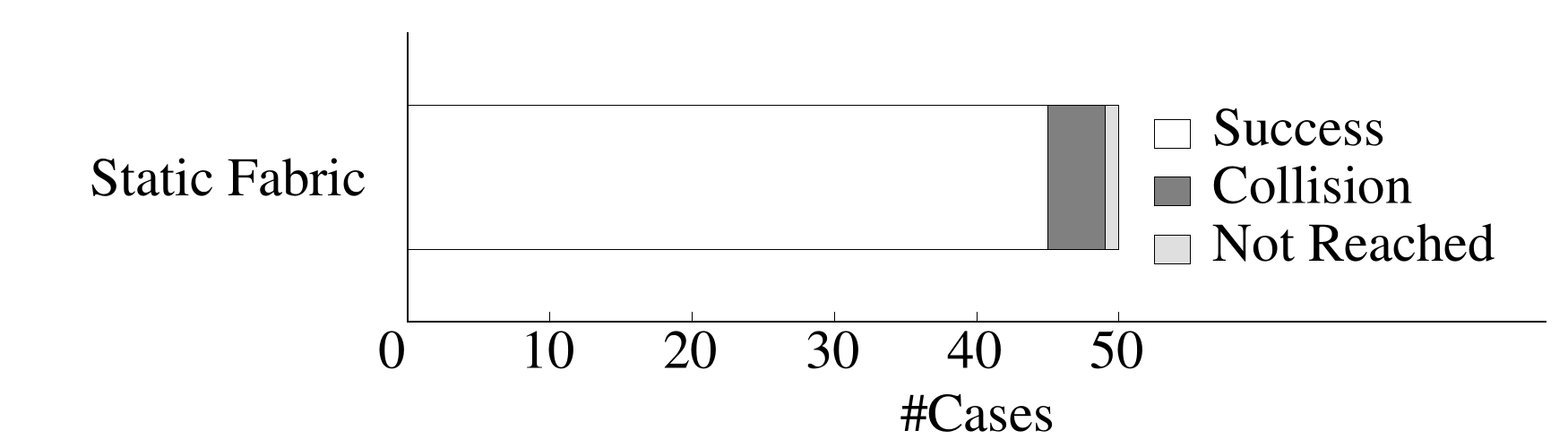}
  \end{subfigure}
  \caption{Quantitative results with static fabrics for a 
    non-holonomic mobile manipulator in simulation. Fabrics solve planning problems in
    randomized environments in low planning time. This allows whole-body control and highly
    reactive behavior.
  }%
  \label{fig:experiment5_simAlbert_results}
\end{figure}
Combining our contributions, \ac{df} and the extension to non-holonomic robots, we
achieve reactive and safe behavior in dynamic environments. Moving obstacles are avoided in a natural way
using our method, see \cref{fig:albert_moving_obstacles}.
\begin{figure*}
  \centering
  \begin{subfigure}{.2\linewidth}
    \centering
    \includegraphics[width=0.9\linewidth]{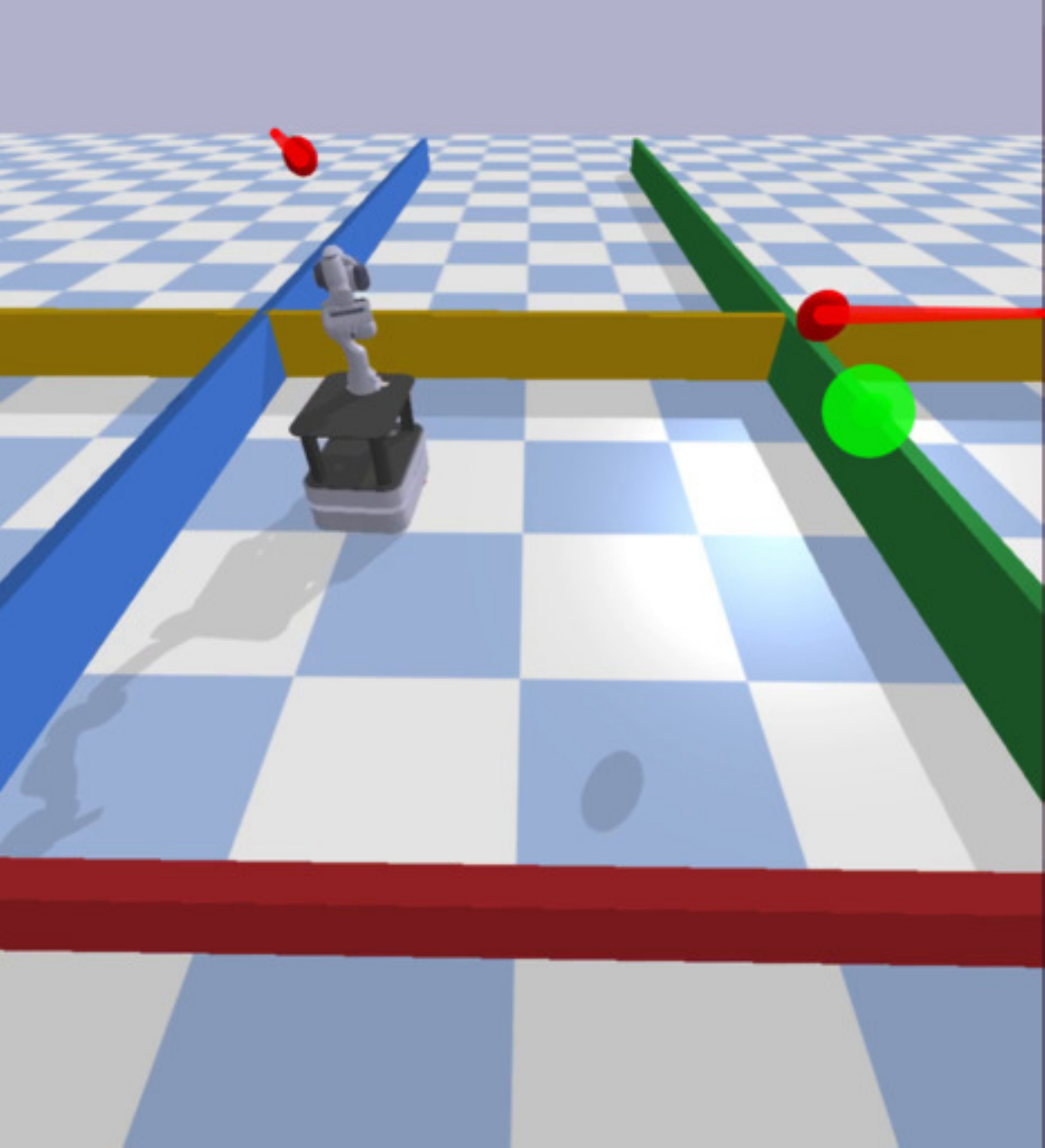}
  \caption{$t=0$s}
  \end{subfigure}%
  \begin{subfigure}{.2\linewidth}
    \centering
    \includegraphics[width=0.9\linewidth]{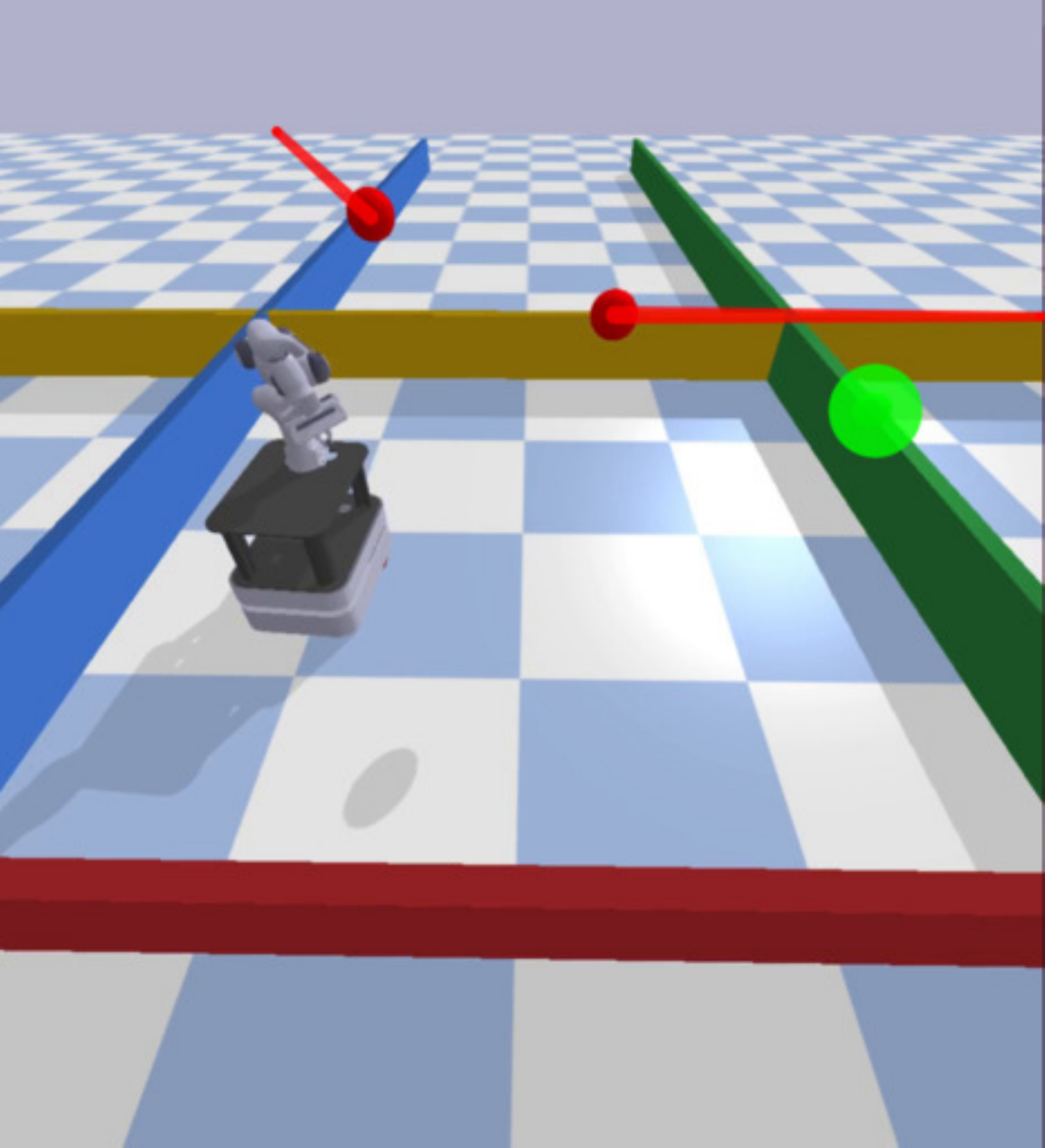}
  \caption{$t=7$s}
  \end{subfigure}%
  \begin{subfigure}{.2\linewidth}
    \centering
    \includegraphics[width=0.9\linewidth]{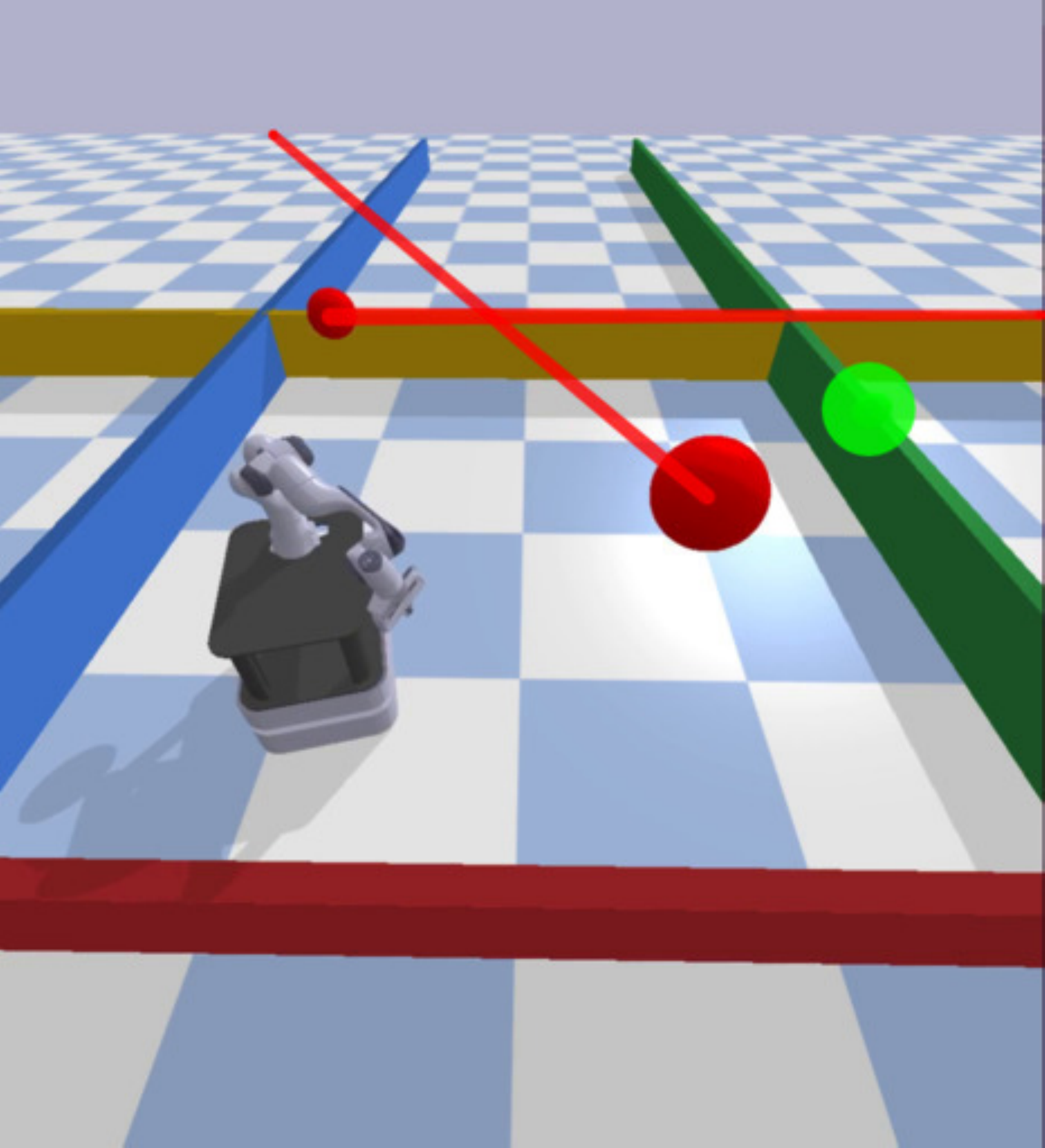}
  \caption{$t=14$s}
  \end{subfigure}%
  \begin{subfigure}{.2\linewidth}
    \centering
    \includegraphics[width=0.9\linewidth]{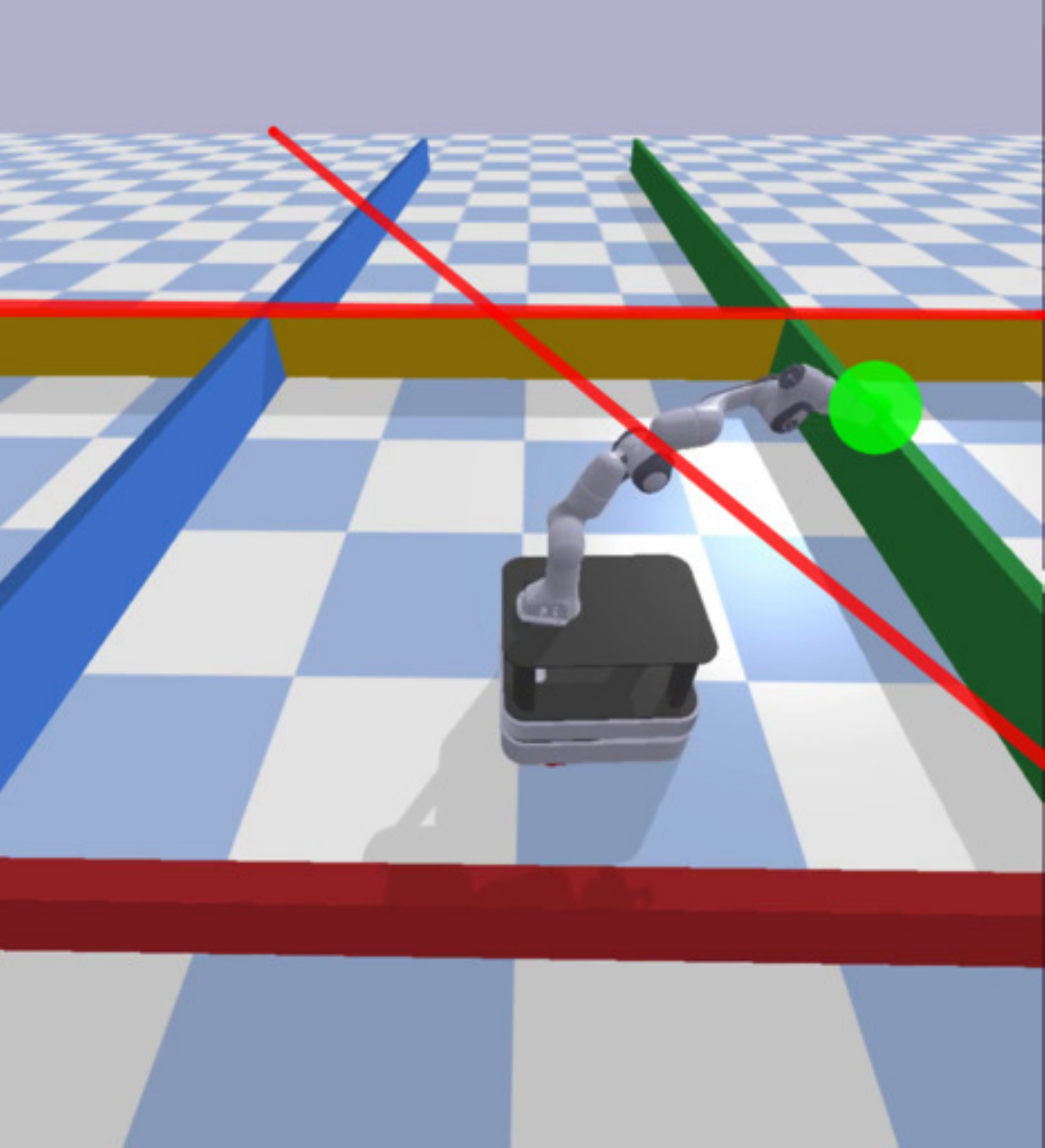}
  \caption{$t=20$s}
  \end{subfigure}%
  \begin{subfigure}{.2\linewidth}
    \centering
    \includegraphics[width=1.0\linewidth]{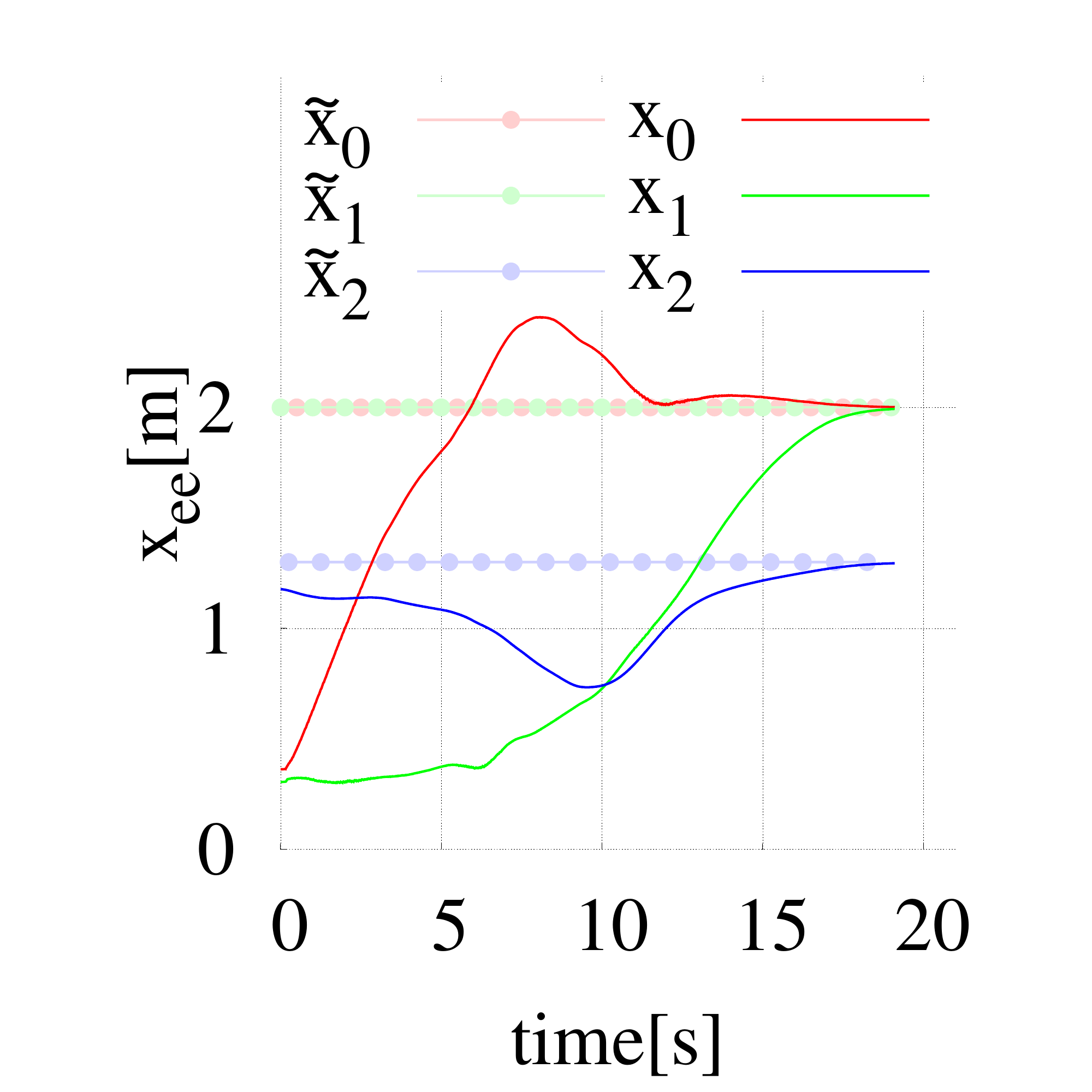}
  \caption{}%
  \label{subfig:albert_moving_obstacles}
  \end{subfigure}
  \caption{Sequence of trajectory computed with \ac{df} for a mobile manipulator in simulation with moving obstacles (red sphere with line indicating the past trajectory) and one end-effector goal (green). The trajectory of the end-effector are 
  visualized in (e) as \x{} and the desired end-effector
  position as \xt{}.
  }%
  \label{fig:albert_moving_obstacles}
\end{figure*}
\paragraph{Real-World}
We present qualitative results for a non-holonomic mobile manipulator using \ac{df}.
In \cref{fig:albert_spline_example}, the robot follows a trajectory defined by a basic spline,
while additionally respecting an orientation constraints on its end-effector and avoiding
the shelves and an obstacle on the ground. The end-effector trajectory is plotted in \cref{fig:albert_spline_trajectory}.
\begin{figure}[t!]
    \centering
    \includegraphics[width=0.9\linewidth]{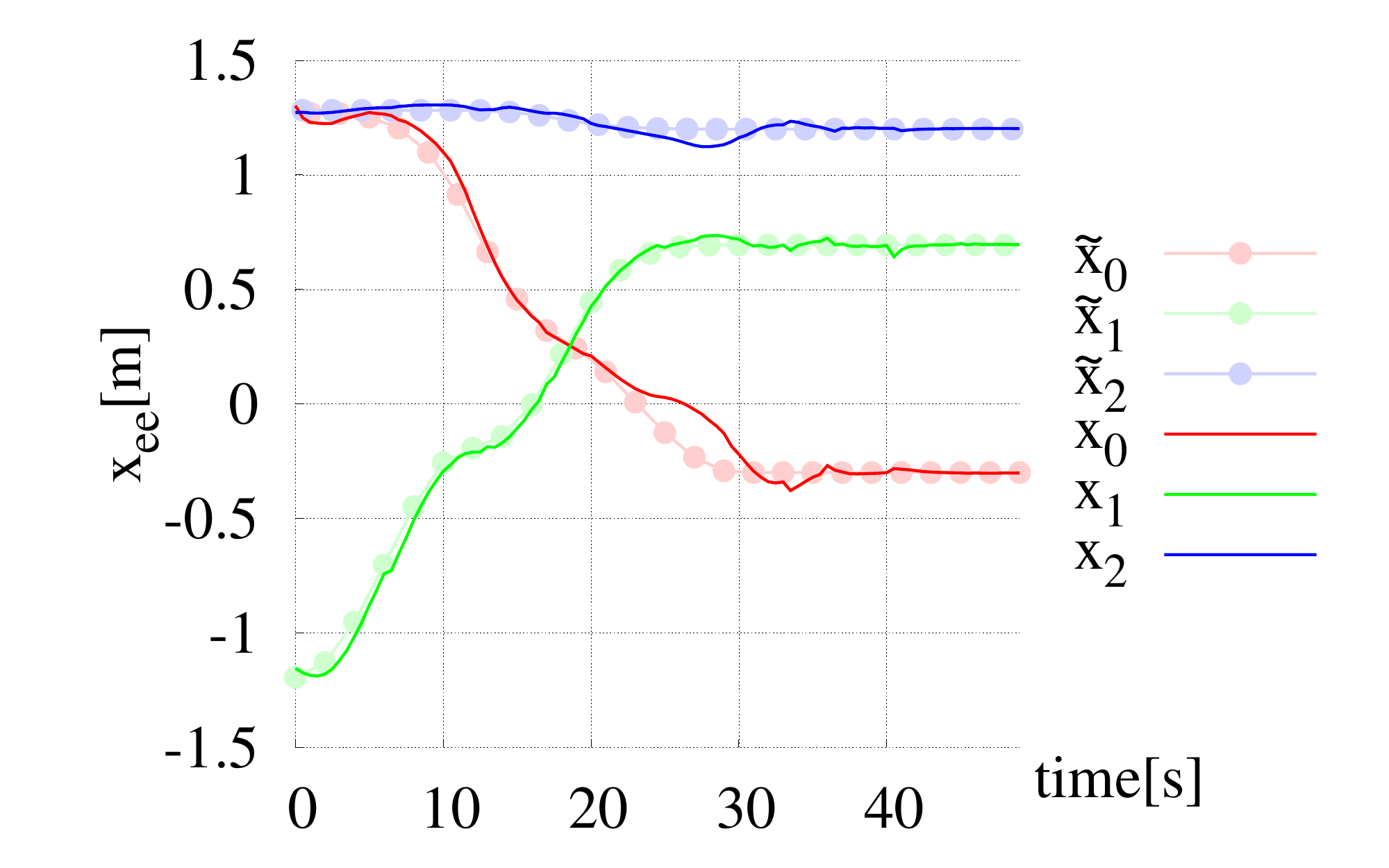}
    \caption{Real-world experiment for path following with a mobile manipulator. The global path can be tracked
        accurately by \ac{df} including the extension to non-holonomic robots. The scene 
        is visualized in \cref{fig:albert_spline_example}.}
    \label{fig:albert_spline_trajectory}
\end{figure}
%

\subsection{Experiment 6: Dynamic fabrics in supermarkets}%
\label{sub:experemint_6_dynamic_fabrics_in_supermarkets}
In this experiment, we show qualitatively how \ac{df} could be used in
collaborative environments where humans and robots coexist. For this
experiment, we give the robot a static goal pose similar to a pickup setup. The
same environment is shared with a co-worker who restocks a shelf. The right
hand of the human is tracked with a motion capture system.
The hand is then avoided by the robot using \ac{df}, see
Fig. 19. In this experiment, the minimum distance
between the robot and the hand was $0.062$m. This real-world experiment
showcases potential applications of the proposed method.

\section{Conclusion}%
\label{sec:conclusion}

In this paper, we have generalized optimization fabrics to dynamic
environments. We have proven that our proposed \acl{df} are convergent to
reference paths and can thus compute motion for path following tasks
(\cref{lem:dynamically_energized_fabrics}). Besides, we have proposed an
extension to optimization fabrics (and thus also \ac{df}) for
non-holonomic robots. This allows the application of this framework to a wider
range of robotic applications and ultimately allows the deployment to many
mobile manipulators in dynamic environments.

These theoretical findings were confirmed in various experiments. First, the
quantitative comparisons showed that \acl{sf} outperforms \ac{mpc} in terms of
solver time while maintaining similar performance in terms of goal-reaching and
success rate. 
The improved performance with optimization fabrics might be caused
by the different metric for goal
reaching compared to \acl{mpc}. An integration of
non-Riemannian metrics into an MPC formulation should be further
investigated in the future.

Verifying our theoretical derivations for \ac{df}, the experiments
showed that the deviation error for path following tasks is decreased compared
to \ac{sf}. Similarly, environments with moving obstacles and humans showed
increased clearance while maintaining low computational costs and execution
times. Thus, \ac{df} overcome an important drawback of
\ac{sf}~\cite{Ratliff2020,Wyk2022}, where collision avoidance with moving
obstacle is solved purely by the high frequency at which optimization fabrics
can be computed. Moreover, the generalization did not increase the solving time
compared to \ac{sf}. Unlike the original work on optimization fabrics, this
generalization allows the deployment to dynamic environments where velocity
estimates of moving obstacles are available.

Direct sensor integration in optimization fabrics might be feasible in future
works to overcome the shortcomings of perception pipelines for collision
avoidance. For the trajectory path tasks in this paper, we
used a simple global path generated in workspace. As \ac{df} integrate
global path in arbitrary manifolds, improving the global planning phase could
be further investigated. We expect this to be beneficial when robotics tasks
are constantly changing and task planning is required.

\bibliographystyle{IEEEtran}
\bibliography{lib/clean.bib,lib/external.bib}

\begin{thebibliography}{10}
\providecommand{\url}[1]{#1}
\csname url@samestyle\endcsname
\providecommand{\newblock}{\relax}
\providecommand{\bibinfo}[2]{#2}
\providecommand{\BIBentrySTDinterwordspacing}{\spaceskip=0pt\relax}
\providecommand{\BIBentryALTinterwordstretchfactor}{4}
\providecommand{\BIBentryALTinterwordspacing}{\spaceskip=\fontdimen2\font plus
\BIBentryALTinterwordstretchfactor\fontdimen3\font minus
  \fontdimen4\font\relax}
\providecommand{\BIBforeignlanguage}[2]{{%
\expandafter\ifx\csname l@#1\endcsname\relax
\typeout{** WARNING: IEEEtran.bst: No hyphenation pattern has been}%
\typeout{** loaded for the language `#1'. Using the pattern for}%
\typeout{** the default language instead.}%
\else
\language=\csname l@#1\endcsname
\fi
#2}}
\providecommand{\BIBdecl}{\relax}
\BIBdecl

\bibitem{Karaman2011}
\BIBentryALTinterwordspacing
S.~Karaman and E.~Frazzoli, ``{Sampling-based algorithms for optimal motion
  planning},'' in \emph{International Journal of Robotics Research}, vol.~30,
  no.~7, jun 2011, pp. 846--894. [Online]. Available:
  \url{http://journals.sagepub.com/doi/10.1177/0278364911406761}
\BIBentrySTDinterwordspacing

\bibitem{Hrovat2012}
D.~Hrovat, S.~Di~Cairano, H.~Tseng, and I.~Kolmanovsky, ``The development of
  model predictive control in automotive industry: A survey,'' in \emph{2012
  IEEE International Conference on Control Applications}, 2012, pp. 295--302.

\bibitem{Hewing2020}
\BIBentryALTinterwordspacing
M.~Bujarbaruah, X.~Zhang, F.~Borrelli, and H.~{Eric Tseng}, ``{Learning-Based
  Model Predictive Control: Toward Safe Learning in Control},'' pp. 269--296,
  2018. [Online]. Available:
  \url{https://doi.org/10.1146/annurev-control-090419-}
\BIBentrySTDinterwordspacing

\bibitem{Bednarczyk2020}
M.~Bednarczyk, H.~Omran, and B.~Bayle, ``{Model Predictive Impedance
  Control},'' in \emph{Proceedings - IEEE International Conference on Robotics
  and Automation}.\hskip 1em plus 0.5em minus 0.4em\relax Institute of
  Electrical and Electronics Engineers Inc., may 2020, pp. 4702--4708.

\bibitem{Richter2010}
S.~Richter, S.~Mari{\'{e}}thoz, and M.~Morari, ``{High-speed online MPC based
  on a fast gradient method applied to power converter control},'' in
  \emph{Proceedings of the 2010 American Control Conference, ACC 2010}.\hskip
  1em plus 0.5em minus 0.4em\relax IEEE Computer Society, 2010, pp. 4737--4743.

\bibitem{Xie2020}
\BIBentryALTinterwordspacing
M.~Xie, K.~{Van Wyk}, A.~Li, M.~A. Rana, Q.~Wan, D.~Fox, B.~Boots, and
  N.~Ratliff, ``{Geometric Fabrics for the Acceleration-based Design of Robotic
  Motion},'' oct 2020. [Online]. Available:
  \url{http://arxiv.org/abs/2010.14750}
\BIBentrySTDinterwordspacing

\bibitem{Edwards2021}
W.~Edwards, G.~Tang, G.~Mamakoukas, T.~Murphey, and K.~Hauser, ``Automatic
  tuning for data-driven model predictive control,'' in \emph{2021 IEEE
  International Conference on Robotics and Automation (ICRA)}, 2021, pp.
  7379--7385.

\bibitem{Cheng2018}
C.~A. Cheng, M.~Mukadam, J.~Issac, S.~Birchfield, D.~Fox, B.~Boots, and
  N.~Ratliff, ``{RMPflow: A computational graph for automatic motion policy
  generation},'' 2018.

\bibitem{Ratliff2020}
N.~D. Ratliff, K.~Van~Wyk, M.~Xie, A.~Li, and M.~A. Rana, ``Optimization
  fabrics,'' \emph{arXiv preprint arXiv:2008.02399}, 2020.

\bibitem{Wyk2022}
K.~Van~Wyk, M.~Xie, A.~Li, M.~A. Rana, B.~Babich, B.~Peele, Q.~Wan, I.~Akinola,
  B.~Sundaralingam, D.~Fox \emph{et~al.}, ``Geometric fabrics: Generalizing
  classical mechanics to capture the physics of behavior,'' \emph{IEEE Robotics
  and Automation Letters}, 2022.

\bibitem{Cheng2020a}
C.-A. Cheng, M.~Mukadam, J.~Issac, S.~Birchfield, D.~Fox, B.~Boots, and
  N.~Ratliff, ``{RMPflow: A Geometric Framework for Generation of Multi-Task
  Motion Policies},'' Tech. Rep., 2020.

\bibitem{Rickert2014}
M.~Rickert, A.~Sieverling, and O.~Brock, ``{Balancing exploration and
  exploitation in sampling-based motion planning},'' \emph{IEEE Transactions on
  Robotics}, vol.~30, no.~6, pp. 1305--1317, dec 2014.

\bibitem{Stilman2010}
M.~Stilman, ``{Global manipulation planning in robot joint space with task
  constraints},'' \emph{IEEE Transactions on Robotics}, vol.~26, no.~3, pp.
  576--584, jun 2010.

\bibitem{Berenson2011}
D.~Berenson, S.~Srinivasa, and J.~Kuffner, ``{Task Space Regions: A framework
  for pose-constrained manipulation planning},'' \emph{International Journal of
  Robotics Research}, vol.~30, no.~12, pp. 1435--1460, oct 2011.

\bibitem{Kingston2019}
Z.~Kingston, M.~Moll, and L.~E. Kavraki, ``{Exploring implicit spaces for
  constrained sampling-based planning},'' \emph{International Journal of
  Robotics Research}, vol.~38, no. 10-11, pp. 1151--1178, 2019.

\bibitem{Qureshi2020}
\BIBentryALTinterwordspacing
A.~H. Qureshi, J.~Dong, A.~Baig, and M.~C. Yip, ``{Constrained Motion Planning
  Networks X},'' 2020. [Online]. Available:
  \url{http://arxiv.org/abs/2010.08707}
\BIBentrySTDinterwordspacing

\bibitem{Brito2019}
B.~Brito, B.~Floor, L.~Ferranti, and J.~Alonso-Mora, ``{Model Predictive
  Contouring Control for Collision Avoidance in Unstructured Dynamic
  Environments},'' \emph{IEEE Robotics and Automation Letters}, vol.~4, no.~4,
  pp. 4459--4466, 2019.

\bibitem{l1}
\BIBentryALTinterwordspacing
D.~Mayne, J.~Rawlings, C.~Rao, and P.~Scokaert,
  ``\BIBforeignlanguage{en}{Constrained model predictive control: {Stability}
  and optimality},'' \emph{\BIBforeignlanguage{en}{Automatica}}, vol.~36,
  no.~6, pp. 789--814, Jun. 2000. [Online]. Available:
  \url{https://linkinghub.elsevier.com/retrieve/pii/S0005109899002149}
\BIBentrySTDinterwordspacing

\bibitem{l4}
\BIBentryALTinterwordspacing
A.~W. Winkler, C.~D. Bellicoso, M.~Hutter, and J.~Buchli,
  ``\BIBforeignlanguage{en}{Gait and {Trajectory} {Optimization} for {Legged}
  {Systems} {Through} {Phase}-{Based} {End}-{Effector} {Parameterization}},''
  \emph{\BIBforeignlanguage{en}{IEEE Robotics and Automation Letters}}, vol.~3,
  no.~3, pp. 1560--1567, Jul. 2018. [Online]. Available:
  \url{http://ieeexplore.ieee.org/document/8283570/}
\BIBentrySTDinterwordspacing

\bibitem{keerthi1988optimal}
S.~S. Keerthi and E.~G. Gilbert, ``Optimal infinite-horizon feedback laws for a
  general class of constrained discrete-time systems: Stability and
  moving-horizon approximations,'' \emph{Journal of optimization theory and
  applications}, vol.~57, no.~2, pp. 265--293, 1988.

\bibitem{Tordesillas2019}
\BIBentryALTinterwordspacing
J.~Tordesillas, B.~T. Lopez, and J.~P. How, ``{FASTER: Fast and Safe Trajectory
  Planner for Flights in Unknown Environments},'' in \emph{IEEE International
  Conference on Intelligent Robots and Systems}, 2019, pp. 1934--1940.
  [Online]. Available: \url{https://github.com/jtorde}
\BIBentrySTDinterwordspacing

\bibitem{Avanzini2015}
G.~B. Avanzini, A.~M. Zanchettin, and P.~Rocco, ``{Constraint-based Model
  Predictive Control for holonomic mobile manipulators},'' in \emph{IEEE
  International Conference on Intelligent Robots and Systems}, vol.
  2015-Decem.\hskip 1em plus 0.5em minus 0.4em\relax Institute of Electrical
  and Electronics Engineers Inc., dec 2015, pp. 1473--1479.

\bibitem{Avanzini2018}
\BIBentryALTinterwordspacing
G.~{Buizza Avanzini}, A.~M. Zanchettin, and P.~Rocco, ``{Constrained model
  predictive control for mobile robotic manipulators},'' pp. 19--38, 2018.
  [Online]. Available: \url{https://doi.org/10.1017/S0263574717000133}
\BIBentrySTDinterwordspacing

\bibitem{Spahn2021}
M.~Spahn, B.~Brito, and J.~Alonso-Mora, ``Coupled mobile manipulation via
  trajectory optimization with free space decomposition,'' in \emph{2021 IEEE
  International Conference on Robotics and Automation (ICRA)}.\hskip 1em plus
  0.5em minus 0.4em\relax IEEE, 2021, pp. 12\,759--12\,765.

\bibitem{l2}
\BIBentryALTinterwordspacing
I.~R. Manchester and J.-J.~E. Slotine, ``\BIBforeignlanguage{en}{Control
  {Contraction} {Metrics}: {Convex} and {Intrinsic} {Criteria} for {Nonlinear}
  {Feedback} {Design}},'' \emph{\BIBforeignlanguage{en}{IEEE Transactions on
  Automatic Control}}, vol.~62, no.~6, pp. 3046--3053, Jun. 2017. [Online].
  Available: \url{http://ieeexplore.ieee.org/document/7852456/}
\BIBentrySTDinterwordspacing

\bibitem{l3}
\BIBentryALTinterwordspacing
B.~Yi, R.~Wang, and I.~R. Manchester, ``\BIBforeignlanguage{en}{On necessary
  conditions of tracking control for nonlinear systems via contraction
  analysis},'' Nov. 2020, arXiv:2009.08662 [cs, eess]. [Online]. Available:
  \url{http://arxiv.org/abs/2009.08662}
\BIBentrySTDinterwordspacing

\bibitem{Khatib1987}
O.~Khatib, ``{A Unified Approach for Motion and Force Control of Robot
  Manipulators: The Operational Space Formulation},'' \emph{IEEE Journal on
  Robotics and Automation}, vol.~3, no.~1, pp. 43--53, 1987.

\bibitem{Ratliff2018}
\BIBentryALTinterwordspacing
N.~D. Ratliff, J.~Issac, D.~Kappler, S.~Birchfield, and D.~Fox, ``{Riemannian
  motion policies},'' jan 2018. [Online]. Available:
  \url{http://arxiv.org/abs/1801.02854}
\BIBentrySTDinterwordspacing

\bibitem{Meng2019}
X.~Meng, N.~Ratliff, Y.~Xiang, and D.~Fox, ``{Neural autonomous navigation with
  riemannian motion policy},'' in \emph{Proceedings - IEEE International
  Conference on Robotics and Automation}, vol. 2019-May.\hskip 1em plus 0.5em
  minus 0.4em\relax Institute of Electrical and Electronics Engineers Inc., may
  2019, pp. 8860--8866.

\bibitem{Ratliff2021}
N.~D. Ratliff, K.~Van~Wyk, M.~Xie, A.~Li, and M.~A. Rana, ``Generalized
  nonlinear and finsler geometry for robotics,'' in \emph{2021 IEEE
  International Conference on Robotics and Automation (ICRA)}.\hskip 1em plus
  0.5em minus 0.4em\relax IEEE, 2021, pp. 10\,206--10\,212.

\bibitem{Bhardwaj2021}
\BIBentryALTinterwordspacing
M.~Bhardwaj, B.~Sundaralingam, A.~Mousavian, N.~Ratliff, D.~Fox, F.~Ramos, and
  B.~Boots, ``{STORM: An Integrated Framework for Fast Joint-Space
  Model-Predictive Control for Reactive Manipulation},'' 2021. [Online].
  Available: \url{http://arxiv.org/abs/2104.13542}
\BIBentrySTDinterwordspacing

\bibitem{Cheng2020}
C.-A. Cheng, M.~Mukadam, J.~Issac, S.~Birchfield, D.~Fox, B.~Boots, and
  N.~Ratliff, ``{RMPflow: A Geometric Framework for Generation of Multi-Task
  Motion Policies},'' Tech. Rep., 2020.

\bibitem{spahn2022local}
M.~Spahn, C.~Salmi, and J.~Alonso-Mora, ``Local planner bench: Benchmarking for
  local motion planning,'' \emph{arXiv preprint arXiv:2210.06033}, 2022.

\bibitem{meo2021adaptation}
C.~Meo, G.~Franzese, C.~Pezzato, M.~Spahn, and P.~Lanillos, ``Adaptation
  through prediction: multisensory active inference torque control,''
  \emph{arXiv preprint arXiv:2112.06752}, 2021.

\bibitem{forcespro}
A.~Domahidi and J.~Jerez, ``Forces professional,'' Embotech AG,
  url=https://embotech.com/FORCES-Pro, 2014--2019.

\bibitem{forcesnlp}
A.~Zanelli, A.~Domahidi, J.~Jerez, and M.~Morari, ``Forces nlp: an efficient
  implementation of interior-point... methods for multistage nonlinear
  nonconvex programs,'' \emph{International Journal of Control}, pp. 1--17,
  2017.

\end{thebibliography}
\newpage
\changed{
\begin{IEEEbiography}[
  {\includegraphics[width=1in,height=1.25in,clip,keepaspectratio]{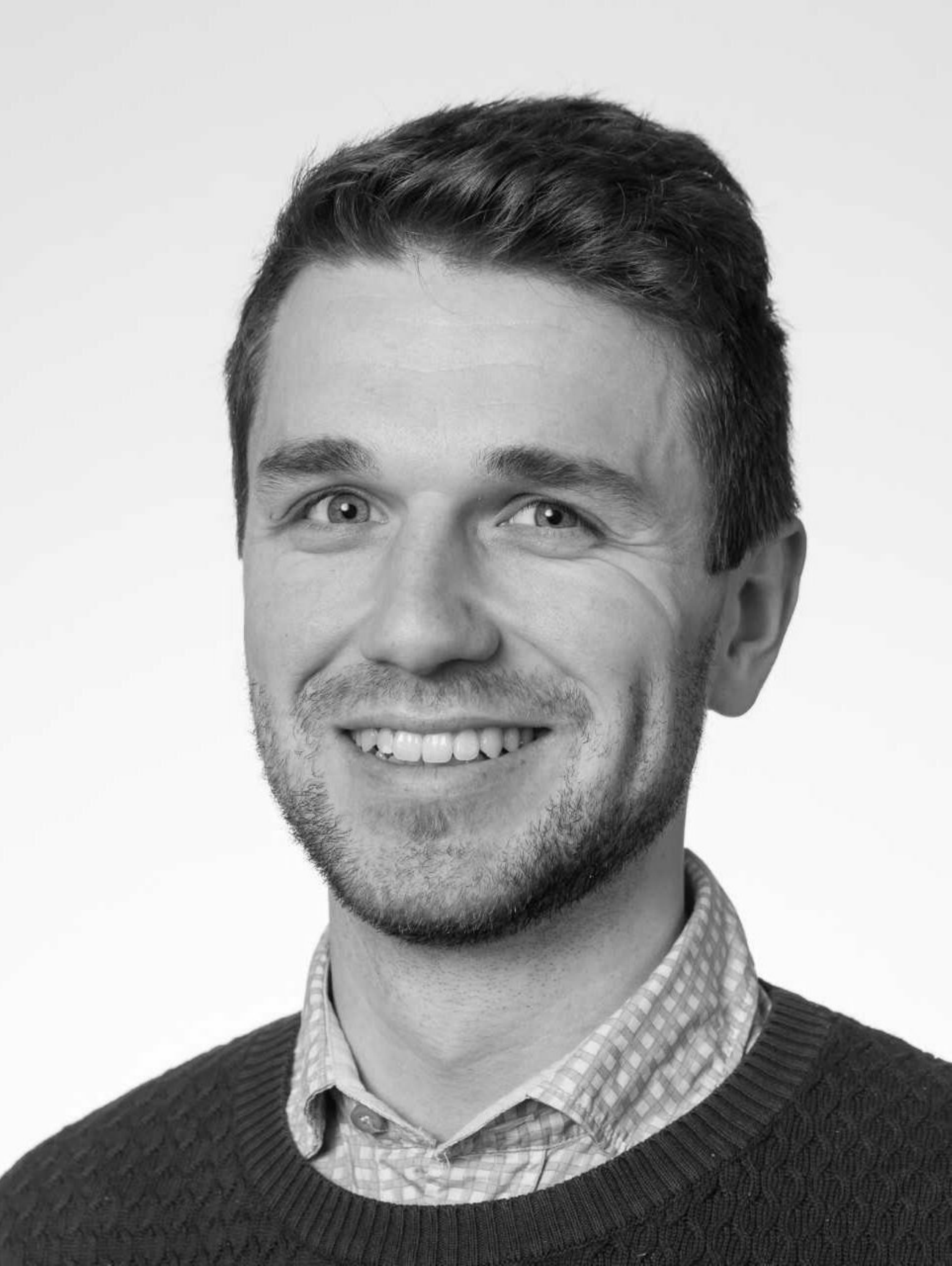}}]
  {Max Spahn} received the B.Sc. degree in Mechanical Engineering at the RWTH
  Aachen in 2018. He received his M.Sc. in General Mechanical Engineering at
  the RWTH in 2019. He also received the degree Diploma d`ingenieur de Centrale
  Supelec in 2020 as part of a double degree program.
 He is currently working towards his Ph.D. degree in robotics. Furthermore, he
  is part of AIRLab, the AI for Retail Lab in Delft. His research interests
  in geometric approaches to motion planning and trajectory generation. He is
  passionate about open-sourcing scientific code, as he believes it will improve
  scientific outcome in robotics.
\end{IEEEbiography}
\vspace{-15mm}
\begin{IEEEbiography}[{\includegraphics[width=1in,height=1.25in,clip,keepaspectratio]{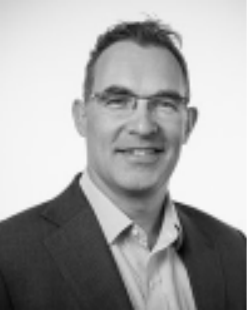}}]{Martijn Wisse}
  received the M.Sc. and Ph.D. degrees in mechanical engineering from the Delft
  University of Technology, Delft, The Netherlands, in 2000 and 2004,
  respectively. He is currently a Professor at the Delft University of Technology.
  His previous research focused on passive dynamic walking robots and passive
  stability in the field of robot manipulators. He worked on underactuated
  grasping, open-loop stable manipulator control, design of robotic systems, and
  the creation of startups in this field. His current research interests focus on
  the neuroscientific principle of active inference and its application and
  advancements in robotics. 
\end{IEEEbiography}
\vspace{-15mm}
\begin{IEEEbiography}[{\includegraphics[width=1in,height=1.25in,clip,keepaspectratio]{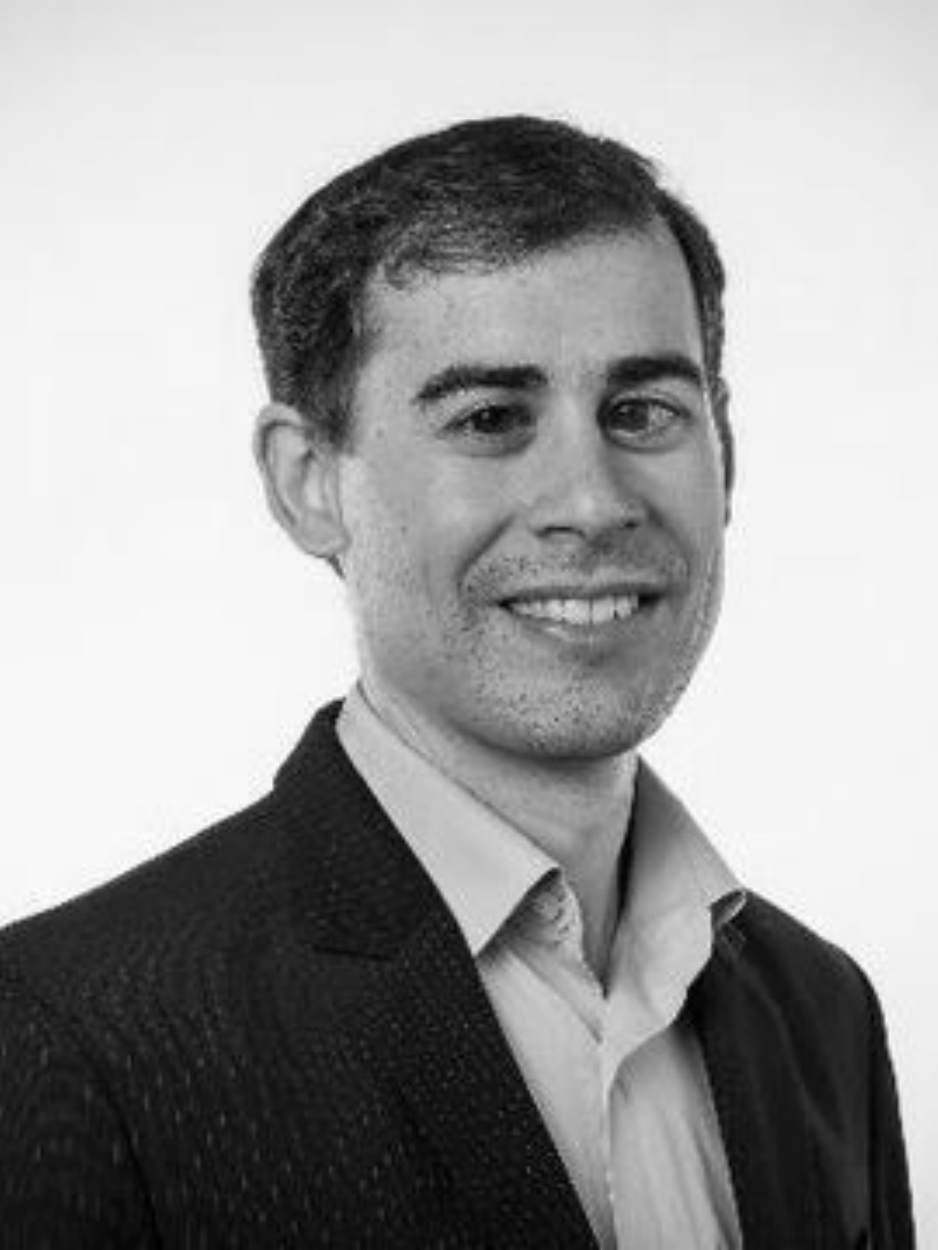}}]{Javier Alonso-Mora}
  is an Associate Professor at the Delft University of
  Technology, where he leads the Autonomous Multi-robots Lab. He received his
  Ph.D. degree in robotics from ETH Zurich, in partnership with Disney Research
  Zurich, and he was a Postdoctoral Associate at the Computer Science and
  Artificial Intelligence Lab (CSAIL) of the Massachusetts Institute of
  Technology. His research focuses on navigation, motion planning, and control of
  autonomous mobile robots, with a special emphasis on multi-robot systems, mobile
  manipulation, on-demand transportation, and robots that interact with other
  robots and humans in dynamic and uncertain environments. He currently serves as
  associate editor for IEEE Transactions on Robotics and for Springer Autonomous
  Robots. He is the recipient of a talent scheme VENI award from the Netherlands
  Organization for Scientific Research (2017), the ICRA Best Paper Award on
  Multi-robot Systems (2019) and an ERC Starting Grant (2021).
\end{IEEEbiography}
\vspace{27em}
}

\end{document}